\newtheorem{theorem}{Theorem}
\newtheorem{lemma}{Lemma}
\newtheorem{definition}{Definition}
\newtheorem{assumption}{Assumption}
\newtheorem{proposition}{Proposition}
\theoremstyle{remark}
\newtheorem*{remark}{Remark}
\def\E{\mathbb{E}}
\newcommand{\markovkernel}{Markov Kernel}
\newcommand{\tadv}{\mathsf {t_{adv}}}
\newcommand{\flalgo}{\mathcal{M}}
\newcommand{\clsfier}{h}
\newcommand{\smoothclsfier}{h_s}
\newcommand{\cerradius}{\mathsf{RAD}}
\newcommand{\ourframework}{CRFL}
\newcommand{\fdivergence}{\textit{f}-divergence}
\newcommand{\kldivergence}{KL divergence}
\newcommand{\locals}{\underline{s}}
\newcommand{\mnist}{MNIST}
\newcommand{\emnist}{EMNIST}
\newcommand{\loan}{LOAN}
\newcommand{\ceracc}{certified accuracy}
\newcommand{\cerrate}{certified rate}
\renewcommand{\maketag@@@}[1]{\hbox{\m@th\normalsize\normalfont#1}}%
\icmltitlerunning{CRFL: Certifiably Robust Federated Learning against Backdoor Attacks}
\begin{document}

\twocolumn[
\icmltitle{CRFL: Certifiably Robust Federated Learning against Backdoor Attacks}

% It is OKAY to include author information, even for blind
% submissions: the style file will automatically remove it for you
% unless you've provided the [accepted] option to the icml2021
% package.

% List of affiliations: The first argument should be a (short)
% identifier you will use later to specify author affiliations
% Academic affiliations should list Department, University, City, Region, Country
% Industry affiliations should list Company, City, Region, Country

% You can specify symbols, otherwise they are numbered in order.
% Ideally, you should not use this facility. Affiliations will be numbered
% in order of appearance and this is the preferred way.
% \icmlsetsymbol{equal}{*}

\begin{icmlauthorlist}
\icmlauthor{Chulin Xie}{uiuc}
\icmlauthor{Minghao Chen}{zju}
\icmlauthor{Pin-Yu Chen}{ibm}
\icmlauthor{Bo Li}{uiuc}
\end{icmlauthorlist}

\icmlaffiliation{uiuc}{University of Illinois at Urbana-Champaign}
\icmlaffiliation{zju}{Zhejiang University}
\icmlaffiliation{ibm}{IBM Research}

\icmlcorrespondingauthor{Chulin Xie}{chulinx2@illinois.edu}
\icmlcorrespondingauthor{Pin-Yu Chen}{pin-yu.chen@ibm.com}
\icmlcorrespondingauthor{Bo Li}{lbo@illinois.edu}

% You may provide any keywords that you
% find helpful for describing your paper; these are used to populate
% the "keywords" metadata in the PDF but will not be shown in the document
\icmlkeywords{Machine Learning, ICML}

\vskip 0.3in
]

% this must go after the closing bracket ] following \twocolumn[ ...

% This command actually creates the footnote in the first column
% listing the affiliations and the copyright notice.
% The command takes one argument, which is text to display at the start of the footnote.
% The \icmlEqualContribution command is standard text for equal contribution.
% Remove it (just {}) if you do not need this facility.

\printAffiliationsAndNotice{}  % leave blank if no need to mention equal contribution
% \printAffiliationsAndNotice{\icmlEqualContribution} % otherwise use the standard text.

\begin{abstract}
Federated Learning (FL) as a distributed learning paradigm that aggregates information from diverse clients to train a shared global model, has demonstrated great success. 
However, malicious clients can perform poisoning attacks and  model replacement to introduce backdoors into the trained global model.
Although there have been intensive studies designing robust aggregation methods and empirical robust federated training protocols against backdoors, existing approaches lack  \textit{robustness certification}. 
This paper provides the first general framework, Certifiably Robust Federated Learning (CRFL), to train certifiably robust FL models against backdoors. 
Our method exploits clipping and smoothing on model parameters to control the global model smoothness, which yields a sample-wise robustness certification on backdoors with limited magnitude. 
Our certification also specifies the relation to federated learning parameters, such as poisoning ratio on instance level, number of attackers, and training iterations. 
% With the certified bound on backdoor triggers, it is guaranteed that models trained on poisoned datasets will provide the same prediction as those trained on clean datasets.
Practically, we conduct comprehensive experiments across a range of federated datasets, and provide the ﬁrst benchmark for certiﬁed robustness against backdoor attacks in federated learning. Our code is publicaly available at \href{https://github.com/AI-secure/CRFL}{https://github.com/AI-secure/CRFL}.
\end{abstract}

\section{Introduction}
\label{submission}

Federated learning (FL) has been widely applied to different applications given its high efficiency and privacy-preserving properties~\cite{smith2017federated,mcmahan2016communication,zhao2018federated}.
However, recent studies show that it is easy for the local client to add adversarial perturbation such as ``backdoors" during training to compromise the final aggregated model~\cite{bhagoji2018analyzing,bagdasaryan2020backdoor,wang2020attackthetails,xie2019dba}. Such attacks raise great security concerns and have become the roadblocks towards the real-world deployment of federated learning.

Although there have been intensive studies on robust FL by designing robust aggregation methods~\cite{fung2020limitations,pillutla2019robustrfa,fu2019attack,blanchard2017machinekrum,el2018hidden,chen2017distributed,yin2018byzantine}, developing empirically robust federated training protocols (e.g., gradient clipping~\cite{sun2019can}, leveraging noisy perturbation~\cite{sun2019can} and additional evaluation during training~\cite{andreina2020baffle}), current defense approaches lack robustness guarantees against the backdoor attacks under certain conditions. To the best of our knowledge, certified robustness analysis and algorithms for FL against backdoor attacks remain elusive.

%In addition, directly appling the existing certified robustness~\cite{xxx} to FL is challenging given that these certified approaches are usually 1) focus on test time attacks, 2) computationaly expensive for each local agent to train. 

%To address these challenges and therefore provide certified robsutenss for FL against the training time \textit{backdoor attacks}, 

\begin{figure}[t]
\setlength{\belowcaptionskip}{-4mm}
\centering
%\subfigure
%{
	\includegraphics[width=1.05\linewidth]{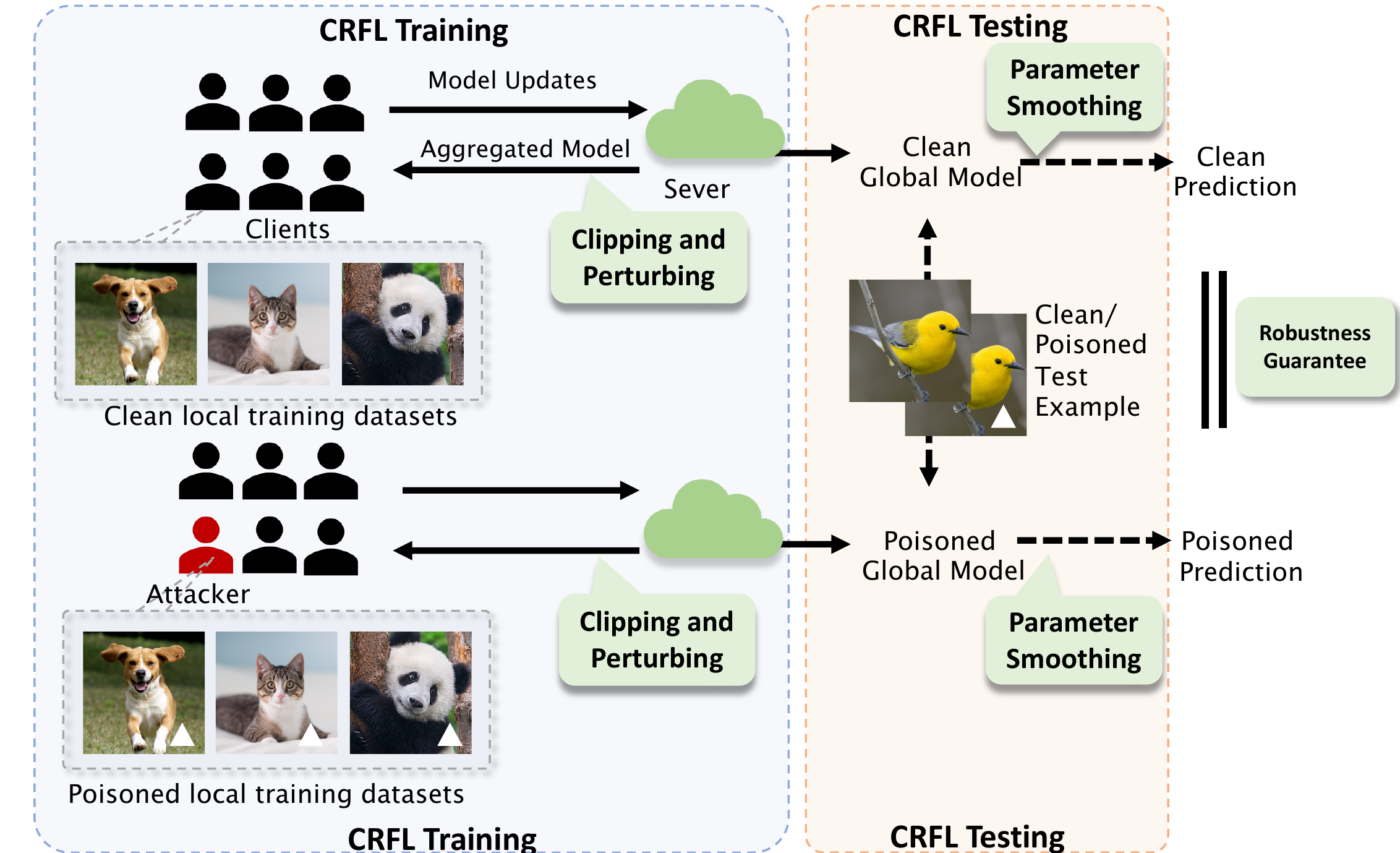}
%}
\vspace{-5mm}
\caption{Overview of certifiably robust federated learning (CRFL)}
\label{fig:teaser} 
\vspace{-6mm}
\end{figure}

To bridge this gap, in this work we propose a certifiably robust federated learning (CRFL) framework as illustrated in Figure~\ref{fig:teaser}.
In particular, during \textit{training}, we allow the local agent to update their model parameters to the center server, and the server will: 1) aggregate the collected model updates, 2) clip the norm of the aggregated model parameters, 3) add a random noise to the clipped model, and finally 4) send the new model parameters back to each agent. 
Note that all of the operations are conducted on the server side to reduce the load for local clients and to prevent malicious clients.
During \textit{testing}, the server will smooth the final global model with randomized parameter smoothing and make the final prediction based on the parameter-smoothed model.
% \PY{motivate the importance of certified robustness by mentioning empirical defenses and protocol design without proofs may give a false sense of security/robustness }

% \PY{also need to mention the challenges in certified robustness in FL setting when compared to standard ML setting - then we can connect to the fact that local data and training are  agnostic to server, making certification challenging and so on.}

% All of our operations are conducted on the server side. At every round of training, the server clips the norm of aggregated global model parameters and then adds Guassian noise. During testing, the server smooths the final global model parameters into multiple models so that for each test sample, it can take the majority vote of predictions from all the models to output the final prediction. 

Using CRFL, we theoretically prove that the trained global model would be certifiably robust against backdoors as long as the backdoor is within our certified bound. To obtain such robustness certification, we first quantify the closeness of models aggregated in each step by viewing this process as a Markov Kernel~\cite{asoodeh2020differentially,makur2019informationphdmit,polyanskiy2015dissipation,polyanskiy2017strong}. We then leverage the model closeness together with the parameter smoothing procedure to certify the final prediction. 
Empirically, we conduct extensive evaluations on MNIST, EMNIST, and financial datasets to evaluate the certified robustness of CRFL and study how FL parameters affect certified robustness. %Our results show that xxxxxx
% There are two procedures. We first quantity the closeness between the FL models trained with clean local datasets and poisoned local datasets by conducting stochastic gradient descent (SGD) analysis and interpreting each round of FL training as a \markovkernel{}, and then connect the model closeness to the prediction consistency through parameter smoothing in the testing phase. 

% Specifically, we analysis the poisoning effect of running multiple local iterations of SGD on the local model with the local backdoored datasets, and its influence on the aggregated global model under model replacement attack. For the later rounds, we interpret each round as a \markovkernel{} and quantify the remaining poisoning impact on the global model via contraction coefficient of \fdivergence{}. Finally, during testing, we extend the randomized smoothing procedure to model parameters, and prove the robustness of the global model in the final round by using \fdivergence{}.

\underline{\bf Technical Contributions.} In this paper, we take the \textit{first} step
towards providing certified robustness for FL against backdoor attacks.
We make contributions on both theoretical
and empirical fronts.   
\begin{itemize}[leftmargin=*,itemsep=-0.5mm]
    \vspace{-1em}
    \item We propose the first certifiably robust federated learning (CRFL) framework against backdoor attacks.
    \item Theoretically, we analyze the training dynamics of the aggregated model via Markov Kernel, and propose parameter smoothing for model inference. Altogether, we prove the certified robustness of CRFL.
    \item We conduct extensive experiments on MNIST, EMNIST, and financial datasets to show the effect of different FL parameters (e.g. poisoning ratio, number of attackers, and training iterations) on certified robustness.
\end{itemize}

\section{Related work}
\paragraph{Backdoor Attacks on Federated Learning}
% Training-time attack is a particularly importance problem in federated learning community since the data reside on the clients' sides, who have the entire control of their local training data and training process. Therefore, training-time attack such as backdoor, label-flipping and model poisoning attacks have raised great concern. We focus on the backdoor attacks in this paper.

% From the perspective of attackers, the 
The goal of backdoor attacks against federated learning is to train strong poisoned local models and submit malicious model updates to the central server, so as to mislead the global model~\cite{bhagoji2018analyzing}.
% \cite{bhagoji2018analyzing} considered the case of one malicious attacker aiming to achieve both global model convergence and targeted poisoning attack, by boosting the malicious updates. 
\cite{bagdasaryan2020backdoor} studies the model replacement approach, where the attacker scales malicious model updates to replace the global model with local backdoored one.
\cite{xie2019dba} exploit the decentralized nature of federated learning and propose a distributed backdoor attack.
% \cite{wang2020attackthetails} introduce edge-case adversarial training samples that enforce the model to misclassify on inputs that live on the tail of the input distribution. 
% \cite{zawad2021curse} conduct extensive empirical analysis and conclude that data heterogeneity plays a key role in federated learning against backdoor attacks.
% \cite{fung2020limitations} the label flipping attack .

\paragraph{Robust Federated Learning}
In order to nullify the effects of attacks while aggregating client updates, a number of robust aggregation algorithms have been proposed for distributed learning~\cite{fung2020limitations,pillutla2019robustrfa,fu2019attack,blanchard2017machinekrum,el2018hidden,chen2017distributed,yin2018byzantine}. These methods either identify and down-weight the malicious updates through certain distance or similarity metrics, or estimate a true ``center’’ of the received model updates rather than taking a weighted average. However, many of those methods assume that the data distribution is i.i.d cross distributed clients, which is not the case in FL setting.
Other defenses are several robust federated protocols that mitigate poisoning attacks during training. 
\cite{andreina2020baffle} incorporates an additional validation phase to each round of FL to detect backdoor.
\cite{sun2019can} show that clipping the norm of model updates and adding Gaussian noise can mitigate backdoor attacks that are based on the model replacement paradigm. None of these provides certified robustness guarantees.

A concurrent work~\cite{cao2021provably} proposes Ensemble FL for provable secure FL against malicious clients, which requires training \emph{hundreds of} FL models and  focuses on client-level certification. Our work allows standard FL protocol, and our certification is applicable to feature, sample, and client levels.

\section{Preliminaries}
\label{sec_prelim}

\subsection{Federated Averaging}
\paragraph{Learning Objective}
Suppose the model parameters are denoted by $w\in \mathbb{R}^d$, we consider the following distributed optimization problem:
$
    \min_{  w\in \mathbb{R}^d} \{F(  w) \triangleq \sum_{i=1}^N p_i F_i( w) \ \},
$
where $N$ is the number of clients, and $p_i$ is the aggregation weight of the $i$-th client such that $p_i\ge 0$ and $\sum_{i=1}^N p_i=1$.
Suppose the $i$-th client holds $n_i$ training data in its local dataset $S_i = \{z_{1}^i, z_{2}^i,\ldots, z_{n_i}^i\}$. The local objective $F_i(\cdot)$ is defined by
$
    F_i( w) \triangleq \frac{1}{n_i} \sum_{j=1}^{n_i} \ell( w; z_j^i),
$
where $\ell(\cdot;\cdot)$ is a defined learning loss function.

\paragraph{One Round of Federated Learning (Periodic Averaging SGD)}
In federated learning, the clients are able to perform multiple local iterations to update the local models~\cite{mcmahan2017communication}. So we formulate the SGD problem in FL as Periodic Averaging SGD~\cite{wang2019cooperative,li2019convergence}. 
Specifically, at round $t$, first, the central sever sends current global model $w_{t-1}$ to all clients. 
Second, every client $i$ initializes its local model $  w_{(t-1)\tau_i}^i =   w_{t-1}$ and then performs $\tau_i~( \tau_i \ge 1)$ local updates, such that
%{\small
%\begin{equation}\label{eq:local_iter_updates}
%\begin{aligned}
    $w_{s}^i \gets   w_{s-1}^i - \eta_i g_i(  w_{s-1}^i; \xi_{s-1}^i),~s= (t-1)\tau_i+1, (t-1)\tau_i+2, \ldots, t\tau_i,$
%\end{aligned}
%\end{equation}}%
where $\eta_i$ is the learning rate, $\xi_{s}^i \subset S^i$ are randomly sampled mini-batches with batch size $n_{B_i}$, and $g_i(  w; \xi^i) = \frac{1}{n_{B_i}} \sum_{z_j^i \in \xi^i } \nabla \ell (  w; z_j^i) $ denotes the stochastic gradient. 
% We denote $\nabla F_i(  w)= \frac{1}{n_i} \sum_{j=1}^{n_i} \ell(  w; z_j^i) $ as the full-batch gradient.
The local clients send the local model updates $w_{t\tau_i}^i -  w_{t-1}$ to the server. 
Finally, the server aggregates over the local model updates into the new global model $ w_{t}$ such that
%{\small
%\begin{equation}\label{eq:fl_agg}
 $     w_{t} \gets  w_{t-1} + \sum_{i=1}^N p_i   (w_{t\tau_i}^i -  w_{t-1}) .$
%\end{equation}}

\subsection{Threat Model}\label{subsection:threat_model}
The goal of backdoor is to inject a backdoor pattern during training such that any test input with such pattern will be mis-classified as the target label~\cite{gu2019badnets}. The purpose of backdoor attacks in FL is to manipulate local models and simultaneously fit the main task and backdoor task, so that the global model would behave normally on untampered data samples while achieving high attack success rate on backdoored data samples. 
We consider the backdoor attack via model replacement approach~\cite{bagdasaryan2020backdoor} where the attackers train the local models using the poisoned datasets, and scale the malicious updates before sending it to the sever.
Suppose there are $R$ adversarial clients out of $N$ clients, we assume that each of them only attack once and they perform model replacement attack together at the same round $\tadv$. Such distributed yet coordinated backdoor attack is shown to be effective in~\cite{xie2019dba}.

Let $D:=\{S_1,S_2,\ldots,S_N \}$ be the union of original benign local datasets in all clients.
For a data sample $z_j^i:=\{x_j^i,y_j^i\}$ in $S_i$, 
% which is the concatenation of feature $x_j^i$ and label $y_j^i$ for model smoothness analysis purpose in \bo{Sec xxxxx}, 
we denote its backdoored version as 
${z'}_j^i:=\{x_j^i+{{\delta_i}_x}, y_j^i+{{\delta_i}_y}\}$,
and the backdoor as $\delta_i :=\{{\delta_i}_x,{\delta_i}_y \}$. We assume an adversarial client $i$ has $q_i$ backdoored samples in its local dataset $S'^i$ with size $n_i$.
Let $D':=\{{S'}_1,\ldots, {S'}_{R-1}, {S'}_{R}, {S}_{R+1},\ldots,S_N \}$ be the union of local datasets in the adversarial round $\tadv$.
% The difference between $D'$ and $D$ lies in the poisoned local datasets of $R$ adversarial clients, where each client $i$ has $q_i$ backdoored samples in the local dataset $S'^i$ with a dataset size $n_i$, and each training sample $j$ contains the backdoor perturbation $\delta_j^i$.
Then we have
%\begin{equation} \label{eq:defineD'}
$D'= D+ \{\{\delta_i\}_{j=1}^{q_i}\}_{i=1}^R. $
%\end{equation}

Before $\tadv$, the adversarial clients train the local model using original benign datasets. 
When $t=\tadv$, for adversarial client $i$, each local iteration is trained on the backdoored local dataset $S'_i$ such that
%\begin{equation}\label{eq:backdoor_local_iter_updates}
%\begin{aligned}
  ${w'}_{s}^i \gets   {w'}_{s-1}^i - \eta_i g_i( {w'}_{s-1}^i; 
  {\xi'}_{s-1}^i), \\
  s=(t-1)\tau_i+1,(t-1)\tau_i+2,\ldots,t\tau_i, $
%\end{aligned}
%\end{equation}
where $w'$ is the malicious model parameters, ${\xi'}^i$ is the mini-batch sampled from $S'_i$ and the local model is initialized as $ {w'}_{(t-1)\tau_i}^i = w_{t-1}$.
Following~\cite{bagdasaryan2020backdoor}, we assume the attacker add a fixed number of backdoored samples $q_{B_i}$ in each training batch, then the mini-batch gradient is  ${g}_i(w'; {\xi'}^i) = \frac{1}{n_{B_i}} (\sum_{j=1}^{q_{B_i}} \nabla \ell (  w'; {z'}_j^i) + \sum_{j=q_{B_i}+1}^{n_{B_i}} \nabla \ell (  w'; {z}_j^i) ) $. The poison ratio of dataset $S'_i$ is $q_{B_i}/n_{B_i} = q_{i}/n_{i}$.
Since for each local iteration, the local model is updated with backdoored mini-batch samples, more local iterations will drive the local model ${w'}_{s}^i$ farther from the corresponding one ${w}_{s}^i$ in benign training process. 
Then the adversarial clients scale their malicious local updates before submitting to the server. Let the scale factor be $\gamma_i$ for $i$-th adversarial client, then the scaled update is
$
 {\gamma_i}( {w'}_{\tadv\tau_i}^i -  {w}_{\tadv-1}).
$
The server aggregates over the malicious and benign updates into an infected global model ${w'}_t$ such that
%{ \small
%\begin{equation}\label{eq:fl_agg}
 $     {w'}_{t} \gets  w_{t-1} + \sum_{i=1}^R p_i  \gamma_i ({w'}_{t\tau_i}^i -  w_{t-1}) + \sum_{i=R+1}^N p_i  (w_{t\tau_i}^i -  w_{t-1}). $%  %\nonumber
%\end{equation}}%
 In fact, even though the adversarial clients only attack at round $\tadv$ and in the later rounds ($t \textgreater \tadv$) they use the original benign datasets, the global model is already infected starting from $\tadv$, so we still denote the global model parameters as ${w'}_t$ in later rounds.

\section{Methodology}
In this Section, we introduce our proposed framework \ourframework{}, which is composed of a training-time subroutine (Algorithm~\ref{algo:parameters_perturbation}) and a test-time subroutine (Algorithm~\ref{alg:certify_parameters_perturbation}) for achieving certified robustness.

\subsection{CRFL Training: Clipping and Perturbing}
During training, at round $t=1,\ldots, T-1$, local clients update their models, 
%as Eq.~\ref{eq:local_iter_updates}, 
and the server performs aggregation.
%as Eq.~\ref{eq:fl_agg}. 
Then, in our training protocol, the server clip the model parameters 
$\mathrm{Clip}_{\rho_t}(w_{t}) \gets w_{t}/ \max(1, \frac{\|w_{t}\|}{\rho_t})$ so that its norm is bounded by $\rho_t$,
and then add isotropic Gaussian noise $\epsilon_t \sim \mathcal N(0,\sigma_t^2\bf I) $ directly on the aggregated global model parameters (coordinate-wise noise):
%\begin{equation}\label{eq:clip_train_noise}
 $ \widetilde w_{t} \gets \mathrm{Clip}_{\rho_t}({w_{t}}) + \epsilon_t.$
%\end{equation}
Throughout this paper, $\|\cdot\| $ denotes the $\ell_2$ norm $\|\cdot\|_2$.
In the next round $t+1$, client $i$ initializes its local model with noisy new global model $  w_{t\tau_i}^i \gets  \widetilde w_{t}$.
In the final round $T$, we only clip the global model parameters. 

The procedure is summarized in Algorithm \ref{algo:parameters_perturbation} and denoted by
 $\mathcal{M}$, which outputs the global model parameters $\mathrm{Clip}_{\rho_T}(w_{T})$. Then we define
$
\mathcal{M}(D) := \mathrm{Clip}_{\rho_T}(w_{T}).
$

\begin{algorithm}[ht]
    \caption{\small Federated averaging with parameters clipping and perturbing}
    \label{algo:parameters_perturbation}
     \scalebox{0.85}{
    \begin{minipage}{1.0\linewidth}
    \begin{algorithmic}
      \STATE {\bfseries Server's input:} initial model parameters $w_0$, $\widetilde w_0  \gets w_0$
      \STATE {\bfseries Client $i$'s input:} local dataset $S_i$ and learning rate $\eta_i$
      \FOR{each round $t=1, \ldots, T$}
      \STATE The server sends $\widetilde w_{t-1}$ to the $i$-th client
      \FOR{client $i=1, 2,  \ldots, N$ in parallel}
      \STATE initialize local model $w_{(t-1)\tau_i} ^i \gets \widetilde w_{t-1} $
      \FOR{local iteration $ s =(t-1)\tau_i+1,  \ldots, t\tau_i  $}
      \STATE compute mini-batch gradient ${g_i}(\cdot;\cdot)$
    %   ${g_i}(w_{(t-1)\tau_i+s}^i; \xi_{(t-1)\tau_i+s}^i)$
        \STATE $w_{s}^i \gets w_{s-1}^i - \eta_i g_i(w_{s-1}^i; \xi_{s-1}^i)  $
      \ENDFOR
      \STATE The $i$-th client sends $w_{t\tau_i}^i - \widetilde w_{t-1}$ to the server
      \ENDFOR
      \STATE The server updates the model parameters
      \STATE  $w_t  \gets \widetilde w_{t-1}+ \sum \limits_{i = 1}^N p_i (w_{t\tau_i}^i-\widetilde w_{t-1}) $
      \STATE The server clips the model parameters
      \STATE $\mathrm{Clip}_{\rho_t}(w_{t}) \gets w_{t}/ \max(1, \frac{\|w_{t}\|_2}{\rho_t})$
      \STATE The server adds noise
      \IF{$t \le T-1$}
      \STATE $\epsilon_t \gets$ \text{a sample drawn from } $\mathcal N(0,\sigma_t^2\bf I)$
      \STATE $\widetilde w_{t} \gets \mathrm{Clip}_{\rho_t}(w_{t}) + \epsilon_t$ 
      \ENDIF
      \ENDFOR
      \STATE {\bfseries Output:}  Clipped global model parameters $\mathrm{Clip}_{\rho_T}(w_{T})$
    \end{algorithmic}
    \end{minipage}%
    }
\end{algorithm}

\subsection{CRFL Testing: Parameter Smoothing}
% \paragraph{Classification models}
% In this paper, we focus on multi-class classification models.
% We define the classifier $\clsfier:\mathcal {(W,X)} \rightarrow \mathcal{Y} $ with finite set of label $ \mathcal{Y}=\{1,\ldots, C\}$ where C denotes the number of classes. Given the model parameters $w$ for the classifier, and a test sample $x_{test}$, the model  $\clsfier$ first predict a vector of class probabilities $y(w;x_{test}) = (y^1(w;x_{test}), \ldots, y^C(w;x_{test}))$, and then returns the most likely class $\clsfier(w;x_{test} ) = \arg \max_c y^c(w; x_{test})$.

\paragraph{Smoothed Classifiers} 
We study multi-class classification models and define a classifier $\clsfier:\mathcal {(W,X)} \rightarrow \mathcal{Y} $ with finite set of label $ \mathcal{Y}=\{1,\ldots, C\}$, where $C$ denotes the number of classes.
We extend the randomized smoothing method~\cite{cohen2019certifiedrandsmoothing} to \emph{parameter smoothing} for constructing a new, ``smoothed'' classifier $\smoothclsfier$ from an arbitrary base classifier $\clsfier$. The robustness properties can be verified using the smoothed classifier $\smoothclsfier$.
% Given the model parameter $w$ of $\clsfier$, when queried at a test sample $x_{test}$, the smoothed classifier $\smoothclsfier$ returns whichever class the base classifier $\clsfier$ is mostly likely to return when $w$ is perturbed by a smoothing measure $\mu: \mathcal{W} \mapsto \mathcal{P(W)}$:  
Given the model parameter $w$ of $\clsfier$, when queried at a test sample $x_{test}$, we first take a majority vote over the predictions of the base classifier $\clsfier$ on random model parameters drawn from a probability distribution $\mu$, i.e., the smoothing measure, to obtain the  ``votes'' ${H_s^{c}}(w;x_{test})$ for each class $c \in  \mathcal{Y}$. Then the label returned by the smoothed classifier $\smoothclsfier$ is the mostly probable label among all classes (the majority vote winner). Formally,
%\begin{small}
\begin{equation} \label{eq:define_smoothed_cls}
\begin{aligned}
  &{\smoothclsfier}(w;x_{test})= \arg \max_{c\in \mathcal{Y}  } {H_s^{c}}(w;x_{test}), \\
   & \text{where } {H_s^{c}}(w;x_{test})= \mathbb{P}_{W\sim \mu(w)}[\clsfier(W;x_{test})=c].  
\end{aligned}
\end{equation}
%\end{small}
To be aligned with the training time Gaussian noise (perturbing), we also adopt Gaussian smoothing measures $\mu(w) = \mathcal N(w,{\sigma_T}^2\bf I)$ during testing time. In practice, the exact value of the probability $p_c=\mathbb{P}_{W\sim \mu(w)}[\clsfier(W;x_{test})=c]$ for label $c$ is difficult to obtain for neural networks, and hence we resort to Monte Carlo estimation~\cite{cohen2019certifiedrandsmoothing,lecuyer2019certifieddp} to get its approximation $\hat p_c$. 
At round $t=T$, given the clipped aggregated global model $\mathrm{Clip}_{\rho_T}({w_T})$, we add Gaussian noise $\epsilon_T^k \sim \mathcal N(0,\sigma_T^2\bf I)$ for $M$ times to get $M$ sets of noisy model parameters ($M$ Monte Carlo samples for estimation), such that
%\begin{small}
%\begin{equation}\label{eq:clip_test_noise}
  $\widetilde w_T^k \gets \mathrm{Clip}_{\rho_T}({w_T}) + \epsilon_T^k,~k=1, 2, \ldots, M.$
%\end{equation}
%\end{small}%

In Algorithm~\ref{alg:certify_parameters_perturbation},
The function $\mathrm{ GetCounts }$ runs the classifier with each set of noisy model parameters $w_T^k$ for one test sample $x_{test}$, and returns a vector of class counts. 
Then we take the most probable class $\hat c_A$ and the runner-up class $\hat c_B$ to calculate the corresponding $\hat p_A$ and $\hat p_B$.
The function $\mathrm{CalculateBound}$ calibrates the empirical estimation to bound the probability $\alpha$ of $\smoothclsfier$ returning an incorrect label. Given the error tolerance $\alpha$, we use Hoeffding’s inequality~\cite{hoeffding1994probability} to compute a lower bound $\underline{p_A}$ on the probability $H_s^{c_A}(w;x_{test})$ and a upper bound $\overline{p_B}$ on the probability $H_s^{c_B}(w;x_{test})$ according to
$\underline{p_A}= \hat p_A - \sqrt{\frac{\log(1/\alpha)}{2N}}$, $\overline{p_B}= \hat p_B + \sqrt{\frac{\log(1/\alpha)}{2N}}$.
We leave the function $\mathrm{CalculateRadius}$ to be defined with our main results in later sections and we will analyze the robustness properties of the model trained and tested under our framework \ourframework{}.

\begin{algorithm}[ht]
  \caption{\small Certification of parameters smoothing}
  \label{alg:certify_parameters_perturbation}
   \scalebox{0.85}{
    \begin{minipage}{1.0\linewidth}
        \begin{algorithmic}
          \STATE {\bfseries Input:} a test sample $x_{test}$ with true label $y_{test}$, the global model parameters $\mathrm{Clip}_{\rho_T}(w_{T})$, the classifier $h(\cdot, \cdot)$
          \FOR{$k=0,1, \ldots M$}
          \STATE $\epsilon_T^k \gets $ \text{a sample drawn from } $\mathcal N(0,\sigma_T^2\bf I)$ 
          \STATE $\widetilde w_{T}^{k} = \mathrm{Clip}_{\rho_T}(w_{T}) + \epsilon_T^k$ 
          \ENDFOR
          \STATE Calculate empirical estimation of $p_A,p_B$ for $x_{test}$
        %   \STATE $\hat o \gets \frac{1}{M}\sum \limits_{k=1}^M h( \widetilde w_{T}^{k}, x_{test})$  
          \STATE $ \texttt{counts} \gets \mathrm{ GetCounts }(x_{test}, \{ \widetilde w_{T}^{1}, \ldots, \widetilde w_{T}^{M}\} )$
          \STATE $\hat c_A , \hat c_B  \gets $ top two indices in $\texttt{counts}$     
          \STATE $\hat p_A,  \hat p_B  \gets \texttt{counts}[\hat c_A]/M,  \texttt{counts}[\hat c_B]/M $
        %   \STATE $ \texttt{isAcc} \gets  \mathbb{1}  (y_{test} = \hat c_A)$
          \STATE Calculate lower and upper bounds of $p_A,p_B$
          \STATE $ \underline{p_A} , \overline{p_B}  \gets \mathrm{ CalculateBound }$($\hat p_A$, $\hat p_B$, $N, \alpha$) 
          \IF{$ \underline{p_A} > \overline{p_B} $}
          \STATE $\cerradius =\mathrm{CalculateRadius}(\underline{p_A} , \overline{p_B})$
          \STATE {\bfseries Output:} Prediction $\hat c_A$ and certified radius $\cerradius$
          \ELSE
          \STATE {\bfseries Output:} $\texttt{ABSTAIN}$  and 0
          \ENDIF
        \end{algorithmic}
    \end{minipage}%
}
\end{algorithm}
\paragraph{Comparison with Certifiably Robust Models in Centralized Setting} 
Our method is different from previous certifiably robust models in centralized learning against evasion attacks~\cite{cohen2019certifiedrandsmoothing} and backdoors~\cite{weber2020rab}.
Once the $M$ noisy models (at round $T$, with $\sigma_T$) are generated, they are fixed and used for every test sample during test time, just like RAB \cite{weber2020rab} in the centralized setting. However, RAB actually trains $M$ models using $M$ noise-corrupted datasets, while we just train one model through FL and finally generated $M$ noise-corrupted copies of it.
For every test sample, randomized smoothing \cite{cohen2019certifiedrandsmoothing} generates $M$ noisy samples. Suppose the test set size is $m$. Then during testing, there are $m \cdot M$ times noise addition on test samples for randomized smoothing, and $M$ times noises addition on trained model for CRFL. To our best knowledge, this is the first work to study \textit{parameter} smoothing rather than input smoothing, which is an open problem motivated by the FL scenario, since the sever directly aggregates over the model parameters.

\section{Certified Robustness of CRFL}

\subsection{Pointwise Certified Robustness}
\paragraph{Goal of Certification}
In the context of data poisoning in federated learning, the goal is to protect the global model against adversarial data modification made to the local training sets of distributed clients. Thus, the goal of certifiable robustness in federated learning is for each test point, to return a prediction as well as a certificate that the prediction would not change had some features in (part of) local training data of certain clients been modified.

Following our threat model in Section~\ref{subsection:threat_model} and our training protocol in Algorithm~\ref{algo:parameters_perturbation}, we define the trained global model
$
\mathcal{M}(D') := \mathrm{Clip}_{\rho_T}({w'}_{T}).
$
For the FL training process that is exposed to model replacement attack, when the distance between $D'$ (backdoored dataset) and $D$ (clean dataset) is under certain threshold (i.e., the magnitude of $\{\{\delta_i\}_{j=1}^{q_i}\}_{i=1}^R$ is bounded), we can certify that $\mathcal{M}(D')$ is ``close'' to $\mathcal{M}(D)$  and thus is robust to backdoors.
The rationale lies in the fact that we perform clipping and noise perturbation on the model parameters to control the global model deviation during training.
%so $\mathcal{M}(D')$  from the attacked training process  should be close to $\mathcal{M}(D)$ from the benign training process.
During testing, intuitively, under the Gaussian smoothing measures $\mu$ as described in Algorithm~\ref{alg:certify_parameters_perturbation}, for two close distribution $\mu(\mathcal{M}(D'))$ and $\mu(\mathcal{M}(D))$, we would expect that even though the probabilities for each class $c$, i.e., $ {H_s^{c}}(\mathcal{M}(D');x_{test})$ and $ {H_s^{c}}(\mathcal{M}(D);x_{test})$, may not be equal, the returned most likely label ${\smoothclsfier}(\mathcal{M}(D');x_{test})$ and ${\smoothclsfier}(\mathcal{M}(D);x_{test})$ should be consistent.

In summary, we aim to develop a robustness certificate by studying under what condition for $\{\{\delta_i\}_{j=1}^{q_i}\}_{i=1}^R$ that the prediction for a test sample is consistent between the smoothed FL models trained from $D$ and $D'$ separately, i.e., ${\smoothclsfier}(\mathcal{M}(D');x_{test})= {\smoothclsfier}(\mathcal{M}(D);x_{test})$. 
To put forth our certified robustness analysis, we make the following assumptions on the loss function of all clients.
Then we present our main theorem and explain its derivation through \textit{model closeness} and \textit{parameter smoothing}. 
Throughout this paper, we denote $ \nabla_w \ell(w;z)$ as $ \nabla \ell(w;z)$ for simplicity.

% \paragraph{Assumptions}
% The assumptions on convexity and smoothness are common in the analysis of distributed SGD \cite{li2019convergence,wang2019cooperative}. We also make assumption on the Lipschitz gradient w.r.t. data, which is used in~\cite{fallah2020personalized,reisizadeh2020robustdistributionshifts}  for analyzing the heterogeneous data distribution across clients.
% Throughout this paper, we denote $ \nabla_w \ell(w;z)$ as $ \nabla \ell(w;z)$ for simplicity.
\begin{assumption}[Convexity and Smoothness] \label{assumption:Smoothness}
The loss function $\ell(w;z)$ is $\beta$-smoothness,
i.e, $\forall w_1,w_2$,
$$
 \| \nabla \ell(w_1;z) -  \nabla \ell(w_2;z)\| \leq \beta \| w_1 - w_2\|.
$$
In addition, the loss function $\ell(w;z)$ is convex. Then co-coercivity of the gradient states:
%\begin{small}
\begin{align*}
& \| \nabla \ell(w_1;z) -\nabla \ell(w_2;z)\|^2  \\ &\leq \beta \langle w_{1} - w_{2},\nabla \ell(w_1;z)-  \nabla \ell(w_2;z) \rangle.
\end{align*}
%\end{small}
\end{assumption}
\begin{assumption}[Lipschitz Gradient w.r.t. Data] \label{assumption:data_lipschitz}
The gradient $\nabla_w l(z;w)$ is $L_{\mathcal Z}$ Lipschitz with respect to the argument $z$ and norm distance $\|\cdot\|$, i.e, $\forall z_1,z_2$,
\begin{equation}
    \| \nabla \ell(w;z_1) -  \nabla \ell(w;z_2)\| \leq L_{\mathcal Z} \| z_1 - z_2\|. \nonumber
\end{equation}
\end{assumption}

\begin{assumption}\label{assumption:fl_system_train_test}
The whole FL system follows Algorithm~\ref{algo:parameters_perturbation} to train and Algorithm~\ref{alg:certify_parameters_perturbation} to test.
\end{assumption}

The assumptions on convexity and smoothness are common in the analysis of distributed SGD \cite{li2019convergence,wang2019cooperative}. We also make assumption on the Lipschitz gradient w.r.t. data, which is used in~\cite{fallah2020personalized,reisizadeh2020robustdistributionshifts}  for analyzing the heterogeneous data distribution across clients.

\vspace{-2mm}
\paragraph{Main Results}\label{sec:main_results}

\begin{restatable}[General Robustness Condition]{theorem}{thmrobustnessthreelevel}
\label{theorem_robustness_3_level}

Let $\smoothclsfier$ be defined as in Eq.~\ref{eq:define_smoothed_cls}. When $\eta_i \le \frac{1}{\beta}$ and Assumptions~\ref{assumption:Smoothness},~\ref{assumption:data_lipschitz}, and~\ref{assumption:fl_system_train_test} hold, suppose $c_A \in \mathcal{Y} $ and $\underline{p_A}, \overline{p_B} \in [0,1]$ satisfy
%\begin{small}
\begin{equation} 
 {H_s^{c_A}}(\flalgo(D');x_{test}) \ge \underline{p_A} \ge \overline{p_B} \ge \max_{c\ne c_A} {H_s^{c}}(\flalgo(D');x_{test}), \nonumber
\end{equation}
%\end{small}
then if 
% \PY{why $t_{adv}$ in sigma?}
\begin{small}
\begin{equation}
\begin{aligned}
&R\sum_{i=1}^R  (p_i \gamma_i \tau_i \eta_i \frac{{{q_B}_i}}{{{n_B}_i}}  \|\delta_i\|)^2 
&\le \frac{-\log \left(  1- (\sqrt{\underline{p_A}} - \sqrt{\overline{p_B} })^2 \right) \sigma_{\tadv}^2 }{2    L_{\mathcal Z}^2 \prod\limits_{t=\tadv+1}^{T}  \left(2\Phi \left (\frac{\rho_t }{\sigma_{t}}\right)-1 \right)}, \nonumber \\
\end{aligned}
\end{equation}
\end{small}
it is guaranteed that 
%\begin{small}
\begin{equation} 
     {\smoothclsfier}(\flalgo(D');x_{test}) = {\smoothclsfier}(\flalgo(D);x_{test}) = c_A, \nonumber
\end{equation}
%\end{small}%
where $\Phi$ is standard Gaussian's cumulative density function (CDF) and the other parameters are defined in Section \ref{sec_prelim}.
\end{restatable}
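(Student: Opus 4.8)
The plan is to decompose the proof into two conceptually separate pieces, mirroring the paper's own framing: \emph{model closeness} (bounding how far the trained parameter vectors $\flalgo(D')$ and $\flalgo(D)$ can drift apart in $\ell_2$ norm, as a function of the backdoor magnitudes $\|\delta_i\|$ and the FL hyperparameters) and \emph{parameter smoothing} (converting an $\ell_2$ bound on parameter distance into a guarantee that the Gaussian-smoothed classifier returns the same label). The target inequality in the theorem is exactly the composition of these two bounds, so I will prove each separately and then chain them.

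\textbf{Step 1: Model closeness at the adversarial round.} First I would control $\|w_{\tadv} - w'_{\tadv}\|$, i.e., the gap between the clean and backdoored \emph{aggregated} model right after the attack round, before the clip-and-noise step. Using the aggregation rule and the fact that benign clients' updates are identical in both runs, the gap reduces to $\sum_{i=1}^R p_i\gamma_i\bigl\|(w'^i_{\tadv\tau_i} - w_{\tadv-1}) - (w^i_{\tadv\tau_i} - w_{\tadv-1})\bigr\| = \sum_{i=1}^R p_i\gamma_i\|w'^i_{\tadv\tau_i} - w^i_{\tadv\tau_i}\|$. To bound each $\|w'^i_{\tadv\tau_i} - w^i_{\tadv\tau_i}\|$ I would unroll the $\tau_i$ local SGD steps: at each local step the only discrepancy between the two trajectories is that a $q_{B_i}/n_{B_i}$ fraction of the batch uses backdoored samples $z'^i_j$ instead of $z^i_j$. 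Using $\eta_i \le 1/\beta$ together with co-coercivity (Assumption~\ref{assumption:Smoothness}) for the non-expansiveness of the gradient step on the shared part, and Assumption~\ref{assumption:data_lipschitz} to bound the contribution of the perturbed samples by $\eta_i \frac{q_{B_i}}{n_{B_i}} L_{\mathcal Z}\|\delta_i\|$ per step, a telescoping/induction argument over the $\tau_i$ local iterations should give $\|w'^i_{\tadv\tau_i} - w^i_{\tadv\tau_i}\| \le \tau_i \eta_i \frac{q_{B_i}}{n_{B_i}} L_{\mathcal Z}\|\delta_i\|$. Combining gives $\|w_{\tadv} - w'_{\tadv}\| \le L_{\mathcal Z}\sum_{i=1}^R p_i\gamma_i\tau_i\eta_i\frac{q_{B_i}}{n_{B_i}}\|\delta_i\| =: \Delta$; a Cauchy--Schwarz step will convert this into the quadratic form $R\sum_i(p_i\gamma_i\tau_i\eta_i\frac{q_{B_i}}{n_{B_i}}\|\delta_i\|)^2$ appearing in the theorem (bounding $\Delta^2 \le R L_{\mathcal Z}^2 \sum_i(\cdots)^2$).

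\textbf{Step 2: Contraction through subsequent rounds via the Markov-kernel/SDPI view.} Next I would track how this parameter gap evolves through rounds $\tadv+1,\dots,T$. The key object is the distribution of the (noisy) global model. At each round the map is: benign local training (same Markov kernel applied to both), then clipping into the ball of radius $\rho_t$, then adding $\mathcal N(0,\sigma_t^2 I)$. The clip-then-Gaussian step is a Markov kernel that \emph{contracts} a suitable divergence by a strong data-processing-type factor. I expect the relevant quantity to be an $f$-divergence (Hellinger-squared, given the $(\sqrt{p_A}-\sqrt{p_B})^2$ form) or a squared-Hellinger/KL surrogate, with contraction coefficient $2\Phi(\rho_t/\sigma_t)-1$ per round — this is the overlap mass of two Gaussians whose means are both confined to a ball of radius $\rho_t$. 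Multiplying the per-round coefficients yields the product $\prod_{t=\tadv+1}^T(2\Phi(\rho_t/\sigma_t)-1)$ in the denominator, and the $\sigma_{\tadv}^2$ in the numerator comes from converting the initial parameter gap $\Delta$ into a divergence between $\mathcal N(\mathrm{Clip}(w_{\tadv}),\sigma_{\tadv}^2 I)$ and $\mathcal N(\mathrm{Clip}(w'_{\tadv}),\sigma_{\tadv}^2 I)$ (noting clipping is non-expansive, so the mean gap is still at most $\Delta$), which for Gaussians scales like $\Delta^2/\sigma_{\tadv}^2$.

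\textbf{Step 3: From divergence bound to label consistency (parameter smoothing).} Finally, given that the smoothing distributions $\mu(\flalgo(D))$ and $\mu(\flalgo(D'))$ are close in the chosen $f$-divergence, I would invoke the standard randomized-smoothing argument (in the spirit of Cohen et al., but phrased in parameter space): if $H_s^{c_A}(\flalgo(D');x_{test}) \ge \underline{p_A} \ge \overline{p_B} \ge \max_{c\ne c_A} H_s^{c}(\flalgo(D');x_{test})$ and the two smoothing measures have Hellinger affinity at least the bound derived above, then $H_s^{c_A}(\flalgo(D);x_{test}) > \max_{c\ne c_A} H_s^{c}(\flalgo(D);x_{test})$, so the argmax is preserved; symmetry handles the reverse direction, yielding $\smoothclsfier(\flalgo(D');x_{test}) = \smoothclsfier(\flalgo(D);x_{test}) = c_A$. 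The precise threshold $1 - (\sqrt{\underline{p_A}} - \sqrt{\overline{p_B}})^2$ is exactly the Hellinger-affinity level at which the worst-case class probabilities can still cross, so the theorem's condition is tight for this divergence.

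\textbf{Main obstacle.} The delicate part is Step 2: pinning down \emph{which} divergence is simultaneously (a) contracted by the clip-plus-Gaussian Markov kernel with coefficient exactly $2\Phi(\rho_t/\sigma_t)-1$, (b) well-behaved under the benign-training kernel (ideally a genuine data-processing inequality so benign rounds do not inflate it, using that $\eta_i\le 1/\beta$ makes SGD steps non-expansive), and (c) strong enough to imply the Hellinger-type label-consistency bound in Step 3. Getting the constant in (a) right — showing the strong data processing coefficient of $\mathcal N(\cdot,\sigma_t^2 I)$ restricted to mean-inputs in a $\rho_t$-ball is precisely $2\Phi(\rho_t/\sigma_t)-1$ — and ensuring the clipping operation (which is non-expansive but not linear) does not break the contraction estimate, is where the real work lies; the SGD-unrolling in Step 1 and the smoothing argument in Step 3 are comparatively routine given the stated assumptions.
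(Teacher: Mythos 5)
Your overall architecture matches the paper's: bound the parameter drift at round $\tadv$, push it through rounds $\tadv+1,\dots,T$ by viewing each clip-plus-noise round as a Markov kernel with contraction coefficient $2\Phi(\rho_t/\sigma_t)-1$, and convert the resulting divergence bound between $\mu(\flalgo(D))$ and $\mu(\flalgo(D'))$ into label consistency of $\smoothclsfier$. Your Step 1 is essentially the paper's Lemmas on local-iteration accumulation and aggregation (your non-expansiveness telescoping is in fact slightly cleaner and avoids the paper's factor of $2$, which is harmless since it only strengthens the conclusion), and your worry in Step 2 resolves more simply than you fear: the paper takes the \emph{entire} round (local SGD, aggregation, clipping, Gaussian noise) as one kernel $K_t$, applies Dobrushin's two-point characterization of the TV contraction coefficient, and uses that clipping confines both means to the $\rho_t$-ball, so $\eta_{TV}(K_t)\le 2\Phi(\rho_t/\sigma_t)-1$ (note this is the TV distance between two Gaussians with means $2\rho_t$ apart, not an ``overlap mass''); then $\eta_{KL}\le\eta_{TV}$ handles the benign rounds with no non-expansiveness of SGD needed after $\tadv$. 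The divergence tracked is KL throughout, with $D_{KL}$ at round $\tadv$ equal to $\|w_{\tadv}-w'_{\tadv}\|^2/(2\sigma_{\tadv}^2)$ after clipping (which is non-expansive), exactly as you anticipate.

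The genuine gap is in Step 3. Your claim that $1-(\sqrt{\underline{p_A}}-\sqrt{\overline{p_B}})^2$ ``is exactly the Hellinger-affinity level at which the worst-case class probabilities can still cross'' is false: the worst-case Bhattacharyya affinity at which crossing is still possible is $\sqrt{1-(\sqrt{\underline{p_A}}-\sqrt{\overline{p_B}})^2/2}$ (e.g.\ with $\underline{p_A}=1,\overline{p_B}=0$ crossing is possible at affinity $1/\sqrt{2}$, whereas your claimed level would be $0$), and chaining KL $\to$ affinity $\to$ crossing gives a strictly more conservative threshold than the theorem's $\epsilon=-\log\bigl(1-(\sqrt{\underline{p_A}}-\sqrt{\overline{p_B}})^2\bigr)$; for large margins this conservative certificate is not implied by the theorem's hypothesis, so the chain does not close. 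The paper does not argue via Hellinger at all: its Theorem~\ref{theorem:param_smoothing} is proved by the $f$-divergence certification framework (Dvijotham et al.) adapted to parameter smoothing --- formulate the worst-case specification value as a convex program over likelihood ratios constrained by $D_{KL}(\nu\|\rho)\le\epsilon$, dualize, compute the conjugate $f_\lambda^*(u)=\lambda\exp(u/\lambda-1)$ for $f(u)=u\log u$, and optimize the dual in closed form to obtain exactly the KL threshold above (note also the reference measure is $\mu(\flalgo(D'))$, i.e.\ the certificate is anchored at the possibly-poisoned model, matching the hypothesis on $H_s^{c}(\flalgo(D');x_{test})$). To complete your proof you would need this KL-specific duality computation (or an equivalent exact calculation) in place of the generic affinity argument; with it, combining your Steps 1--2 bound $D_{KL}\le\frac{2R\sum_i(p_i\gamma_i\tau_i\eta_i\frac{q_{B_i}}{n_{B_i}}L_{\mathcal Z}\|\delta_i\|)^2}{\sigma_{\tadv}^2}\prod_{t=\tadv+1}^{T}\bigl(2\Phi(\rho_t/\sigma_t)-1\bigr)$ with the theorem's hypothesis on $R\sum_i(\cdot)^2$ yields the stated conclusion exactly as in the paper.
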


In practice, since the server does not know the global model in the current FL system is poisoned or not, we assume the model is already backdoored and  derive the condition when its prediction will be certifiably consistent with the prediction of the clean model. 
Our certification is on three levels: \textit{feature}, \textit{sample}, and \textit{client}. 
If the magnitude of the backdoor is upper bounded for every attackers, then we can re-write the Theorem~\ref{theorem_robustness_3_level} as the following corollary.
\begin{restatable}[Robustness Condition in Feature Level]{corollary}{corrobustpixellevel}
\label{corollary_robustness_pixel_level}
% \begin{corollary}(Robustness condition in feature level)\label{corollary_robustness_pixel_level}
Using the same setting as in Theorem \ref{theorem_robustness_3_level} but further assume identical backdoor magnitude $\|\delta\|= \|\delta_i\| $ for $i=1,\ldots,R$.
Suppose $c_A \in \mathcal{Y} $ and $\underline{p_A}, \overline{p_B} \in [0,1]$ satisfy
$$
 {H_s^{c_A}}(\flalgo(D');x_{test}) \ge \underline{p_A} \ge \overline{p_B} \ge \max_{c\ne c_A} {H_s^{c}}(\flalgo(D');x_{test}),
$$
then $ {\smoothclsfier}(\flalgo(D');x_{test})= {\smoothclsfier}(\flalgo(D);x_{test}) = c_A$ for all $\|\delta\| < \cerradius$, where
\begin{small}
\begin{equation}\label{eq:certified_radius_R}
\begin{aligned}
\cerradius = \sqrt{\frac{-\log \left(  1- (\sqrt{\underline{p_A}} - \sqrt{\overline{p_B} })^2 \right) \sigma_{\tadv}^2 }{2 R  L_{\mathcal Z}^2 \sum\limits_{i=1}^R ( p_i \gamma_i \tau_i  \eta_i \frac{{{q_B}_i}}{{{n_B}_i}})^2 \prod\limits_{t=\tadv+1}^{T}  \left(2\Phi \left (\frac{\rho_t }{\sigma_{t}}\right)-1 \right) }} \\  
\end{aligned}
\end{equation}
\end{small}
\vspace{-4mm}
% \end{corollary}
\end{restatable}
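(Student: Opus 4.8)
The plan is to obtain Corollary~\ref{corollary_robustness_pixel_level} as a direct specialization of Theorem~\ref{theorem_robustness_3_level}: all of the analytic machinery (model closeness of $\flalgo(D)$ and $\flalgo(D')$ via the Markov-kernel contraction, and the parameter-smoothing argument certifying consistency of the majority-vote label) is already packaged inside that theorem, so here only an algebraic rearrangement of its sufficient condition is needed. First I would invoke Theorem~\ref{theorem_robustness_3_level} verbatim: under the stated hypotheses (Assumptions~\ref{assumption:Smoothness},~\ref{assumption:data_lipschitz},~\ref{assumption:fl_system_train_test} and $\eta_i \le 1/\beta$), and given $c_A$ together with $\underline{p_A} \ge \overline{p_B}$ sandwiching the smoothed vote probabilities $H_s^{c}(\flalgo(D');x_{test})$, the conclusion ${\smoothclsfier}(\flalgo(D');x_{test}) = {\smoothclsfier}(\flalgo(D);x_{test}) = c_A$ holds whenever $R\sum_{i=1}^R (p_i \gamma_i \tau_i \eta_i \frac{q_{B_i}}{n_{B_i}} \|\delta_i\|)^2$ does not exceed the right-hand bound $B := \frac{-\log(1-(\sqrt{\underline{p_A}}-\sqrt{\overline{p_B}})^2)\,\sigma_{\tadv}^2}{2 L_{\mathcal Z}^2 \prod_{t=\tadv+1}^{T} (2\Phi(\rho_t/\sigma_t)-1)}$.

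Next I would substitute the common magnitude $\|\delta_i\| = \|\delta\|$ for every $i=1,\dots,R$ and pull $\|\delta\|^2$ out of the sum, so the left-hand side of the condition becomes $\|\delta\|^2 \cdot R \sum_{i=1}^R (p_i \gamma_i \tau_i \eta_i \frac{q_{B_i}}{n_{B_i}})^2$. Since $(\sqrt{\underline{p_A}}-\sqrt{\overline{p_B}})^2 \in [0,1)$ whenever $\underline{p_A} \ge \overline{p_B}$ in $[0,1]$, the quantity $-\log(1-(\sqrt{\underline{p_A}}-\sqrt{\overline{p_B}})^2)$ is nonnegative, and each factor $2\Phi(\rho_t/\sigma_t)-1$ is strictly positive, so $B \ge 0$; dividing through by the positive coefficient $R \sum_{i=1}^R (p_i \gamma_i \tau_i \eta_i \frac{q_{B_i}}{n_{B_i}})^2$ preserves the inequality and turns Theorem~\ref{theorem_robustness_3_level}'s condition into $\|\delta\|^2 \le B \big/ \big( R \sum_{i=1}^R (p_i \gamma_i \tau_i \eta_i \frac{q_{B_i}}{n_{B_i}})^2 \big)$. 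Taking the nonnegative square root of both sides yields exactly the threshold $\cerradius$ in Eq.~\eqref{eq:certified_radius_R}; replacing ``$\le$'' by the strict ``$<$'' only strengthens the hypothesis, so the consistency conclusion still holds for all $\|\delta\| < \cerradius$, which is precisely the claim.

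Because the nontrivial content is entirely inherited from Theorem~\ref{theorem_robustness_3_level}, the only points that require care here are bookkeeping ones: confirming that the denominator $R \sum_{i=1}^R (p_i \gamma_i \tau_i \eta_i q_{B_i}/n_{B_i})^2$ is nonzero, which holds as soon as at least one adversarial client genuinely injects backdoored samples ($q_{B_i} > 0$ with $p_i,\gamma_i,\tau_i,\eta_i > 0$), and checking that the logarithm/product expression is well-defined and nonnegative so that the square root makes sense. There is no real ``hard part'' at the corollary level; the only mild obstacle is presentational — transcribing the constants consistently between the condition of Theorem~\ref{theorem_robustness_3_level} and the closed-form radius in Eq.~\eqref{eq:certified_radius_R}.
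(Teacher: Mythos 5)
Your proposal is correct and follows exactly the paper's route: the corollary is obtained by substituting the identical magnitude $\|\delta\|=\|\delta_i\|$ into the sufficient condition of Theorem~\ref{theorem_robustness_3_level}, factoring $\|\delta\|^2$ out of the sum, and solving for $\|\delta\|$ to get the closed-form radius in Eq.~\eqref{eq:certified_radius_R}. Your added bookkeeping remarks (nonnegativity of the log term, positivity of the denominator, strict versus non-strict inequality) are consistent with the paper, which treats the corollary as a direct rewriting of the theorem without a separate proof.
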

The function $\mathrm{CalculateRadius}$ in our Algorithm~\ref{alg:certify_parameters_perturbation} can calculate the certified radius $\cerradius$ according to Corollary~\ref{corollary_robustness_pixel_level}.

We now make several remarks about Corollary~\ref{corollary_robustness_pixel_level} and {will verify them in our experiments}: 1)  The noise level $\sigma_t$ and the parameter norm clipping threshold $\rho_t$  are hyper-parameters that can be adjusted to control the robustness-accuracy trade-off. For instance, the certified radius $\cerradius$ would be large when: $\sigma_t$ is high; $\rho_t$ is small; the margin between $\underline{p_A}$ and $\overline{p_B}$ is large; the number of attackers $R$ is small; the poison ratio $\frac{{{q_B}_i}}{{{n_B}_i}}$ is small; the scale factor $\gamma_i$ is small; the aggregation weights for attackers $p_i$ is small; the local iteration $\tau_i$ is small; and the local learning rate $\eta_i$ small. 2) Since $0 \le 2\Phi(\cdot) -1 \le 1$, the certified radius $\cerradius$ goes to $\infty$ as $ T \rightarrow \infty$ when $\Phi(\cdot)<1$. Ituitively,  the benign fine-tuning after backdoor injection round $\tadv$ would mitegate the poisoning effect. Thus, with infinite rounds of such fine-tuning, the model is able to tolerate backdoors with arbitrarily large magnitude.  In practice, we note that the continued multiplication in the denominator may not approach 0 due to numerical issues, which we will verify in the experiments section.  4) Large number of clients $N$ will decrease the aggregation weights $p_i$ of attackers, thus it can tolerate backdoors with large magnitude, resulting in higher $\cerradius$. 5) For general neural networks, efficient computation of Lipschitz gradient constant (w.r.t. data input) is an open question, especially when the data dimension is high. We will provide a closed-form expression for $ L_{\mathcal Z}$ under some constraints next. 

As mentioned in Section~\ref{subsection:threat_model}, the backdoor for data sample $z_j^i$ includes both the backdoor pattern $\delta_{i_x}$ and adversarial target label flipping $\delta_{i_y}$. In Assumption \ref{assumption:data_lipschitz} we define $ L_{\mathcal Z}$ with  $z=\{x,y\}$ (concatenation of $x$ and $y$) to certify against both backdoor patterns and label-flipping effects. Without loss of generality, here we focus on backdoor patterns considering bounded model parameters in Lemma~\ref{lm:L_z_for_multiclass_logistic_regression}, which provides a closed-form expression for $ L_{\mathcal Z}$ in the case of multi-class logistic regression. By applying $ L_{\mathcal Z}$ from Lemma~\ref{lm:L_z_for_multiclass_logistic_regression} to Theorem~\ref{theorem_robustness_3_level}, it indicates that the prediction for a test sample is independent with the backdoor pattern so the backdoor pattern is disentangled from the adversarial target label. 
% the prediction of a  instance during test is independent with the fact that whether the instance contains a backdoor , thus certifying that the prediction of a backdoored instance would stay the same with that of a clean instance. 

\begin{restatable}[]{lemma}{lemmalzforlogreg}
% \begin{lemma}
\label{lm:L_z_for_multiclass_logistic_regression}
Given the upper bound on model parameters norm, i.e., $\|w\| \leq \rho$, and two data samples $z_1$ and $z_2$  with  $x_1\neq x_2$ ($y_1=y_2$), for multi-class logistic regression (i.e., one linear layer followed by a softmax function and trained by cross-entropy loss), its Lipschitz gradient constant  w.r.t data is $ L_{\mathcal Z}  = \sqrt{2+2\rho + \rho^2}  $. That is,
\begin{equation}
 \| \nabla \ell(w;z_1) -  \nabla \ell(w;z_2)\| \leq \sqrt{2+2\rho + \rho^2}   \| z_1 - z_2\|. \nonumber
\end{equation}
% \end{lemma}
\end{restatable}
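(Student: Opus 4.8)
The plan is to read off $L_{\mathcal Z}$ as a uniform bound on the operator norm of the mixed Jacobian $\partial_x[\nabla\ell(w;(x,y))]$ (where $\nabla$ is as usual $\nabla_w$). Since $y_1=y_2=:y$ we have $\|z_1-z_2\|=\|x_1-x_2\|$, and the fundamental theorem of calculus applied to $x\mapsto\nabla\ell(w;(x,y))$ along the segment $x_t:=x_2+t(x_1-x_2)$ gives $\nabla\ell(w;z_1)-\nabla\ell(w;z_2)=\left(\int_0^1\partial_x[\nabla\ell(w;(x_t,y))]\,dt\right)(x_1-x_2)$, hence $\|\nabla\ell(w;z_1)-\nabla\ell(w;z_2)\|\le\sup_x\|\partial_x[\nabla\ell(w;(x,y))]\|_{\mathrm{op}}\cdot\|x_1-x_2\|$. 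So it suffices to bound that supremum by $\sqrt{2+2\rho+\rho^2}$, which I will do under the mild normalization $\|x\|\le 1$ (which I take to be among the ``constraints'' referenced just before the lemma).

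Concretely, write $W\in\mathbb{R}^{C\times d}$ with rows $w_1,\dots,w_C$, so the parameter vector $w$ has $\|w\|=\|W\|_F\le\rho$; let $p_c(x)=e^{w_c^\top x}/\sum_{c'}e^{w_{c'}^\top x}$ be the softmax outputs and $\ell(w;(x,y))=-\log p_y(x)$. The first step is the standard identities $\nabla_{w_c}\ell=(p_c(x)-\mathbf{1}[c=y])\,x$ and, for the softmax Jacobian, $\nabla_x p_c=p_c\,(w_c-\bar w)$ with $\bar w:=W^\top p(x)=\sum_{c'}p_{c'}(x)\,w_{c'}$. Differentiating $\nabla_{w_c}\ell$ in $x$ by the product rule then yields the block form $\partial_x[\nabla_{w_c}\ell]=p_c\,x\,(w_c-\bar w)^\top+(p_c-\mathbf{1}[c=y])\,I_d$.

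The core estimate is to bound, uniformly over $x$ with $\|x\|\le 1$ and unit $v\in\mathbb{R}^d$, the quantity $\|\partial_x[\nabla\ell]\,v\|^2=\sum_c\|p_c\alpha_c\,x+\gamma_c\,v\|^2$, where $\alpha_c:=(w_c-\bar w)^\top v$ and $\gamma_c:=p_c-\mathbf{1}[c=y]$; expanding the square this equals $\|x\|^2\sum_c p_c^2\alpha_c^2+2\langle x,v\rangle\sum_c p_c\alpha_c\gamma_c+\sum_c\gamma_c^2$. I will control the three pieces with elementary facts about the probability vector $p$: (i) $\sum_c\gamma_c^2=\|p-e_y\|^2=(1-p_y)^2+\sum_{c\ne y}p_c^2\le 2(1-p_y)^2\le 2$; (ii) since $\sum_c p_c\alpha_c=0$, $\sum_c p_c\alpha_c^2=\mathrm{Var}_{c\sim p}(w_c^\top v)\le\sum_c p_c(w_c^\top v)^2\le\sum_c p_c\|w_c\|^2\le\|W\|_F^2\le\rho^2$, hence $\sum_c p_c^2\alpha_c^2\le(\max_c p_c)\rho^2\le\rho^2$; and (iii) $\sum_c p_c\gamma_c^2=\sum_{c\ne y}p_c^3+p_y(1-p_y)^2\le 1$, so by Cauchy--Schwarz $|\sum_c p_c\alpha_c\gamma_c|\le\sqrt{\rho^2}\cdot\sqrt{1}=\rho$. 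Using $\|x\|\le 1$ and $|\langle x,v\rangle|\le 1$ gives $\|\partial_x[\nabla\ell]\,v\|^2\le\rho^2+2\rho+2$, i.e. $\|\partial_x[\nabla\ell]\|_{\mathrm{op}}\le\sqrt{2+2\rho+\rho^2}$, and combining with the first paragraph closes the lemma.

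The main obstacle I anticipate is the bookkeeping of the softmax constants: estimates (i)--(iii) are each essentially tight at the worst-case probability vector, and replacing any of them by a cruder bound (for instance $\|p-e_y\|\le 2$, or a generic softmax-Jacobian spectral bound in place of the exact $\nabla_x p_c=p_c(w_c-\bar w)$) would inflate the constant away from the advertised $2+2\rho+\rho^2$. A secondary point to state explicitly is the structural assumptions the closed form rests on: a homogeneous linear layer (no bias, which would add an extra $\|\partial_x[\nabla_b\ell]\|^2=\|p_1-p_2\|^2$ term to the squared norm and change the constant) and the input normalization $\|x\|\le 1$.
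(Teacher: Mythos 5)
Your proposal is correct and yields exactly the paper's constant, and its skeleton is the same as the paper's: both reduce the Lipschitz claim (via the mean-value inequality in the paper, via the fundamental theorem of calculus in your write-up) to a uniform bound on the mixed derivative $\partial_x[\nabla_w\ell]$, compute the same block formula $p_c\,x\,(w_c-\bar w)^\top+(p_c-\mathbf{1}[c=y])I$, and invoke the same normalizations $\|x\|\le 1$ and $\|W\|_F\le\rho$. Where you differ is the bookkeeping of that Jacobian bound: the paper estimates each class-column separately by the triangle inequality and then sums squares, using $|P_i-Y_i|\le 1$, $\sum_i P_i=1$ and $\sum_i P_i^2\le 1$ to reach $\sqrt{2+2\rho+\rho^2}$, whereas you bound the operator norm directly against a unit direction $v$, exploiting the zero-mean identity $\sum_c p_c\alpha_c=0$ and a $p$-weighted Cauchy--Schwarz for the cross term. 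The two routes buy slightly different things: the paper's column-wise argument is more mechanical, but your quadratic-form estimate is tighter in its justification of the intermediate steps (in particular, the paper's passage from $\|(\mathbf{1}_i-P)W^\top\|$ to $\|W\|$ is stated rather loosely, while every inequality in your chain is explicitly verified), and you correctly flag the structural hypotheses (no bias term, $\|x\|\le 1$) on which the closed form depends. No gap.
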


Proof for Lemma~\ref{lm:L_z_for_multiclass_logistic_regression} is provided in the Appendix \ref{sec:proof l_z}. 

In order to formally derive the main theorem, there are two key results. We first quantify the closeness between the FL trained models $\mathcal{M}(D')$ and $\mathcal{M}(D))$ using \markovkernel{}, and then connect the model closeness to the prediction consistency through parameter smoothing.

\subsection{Model Closeness}
% \subsubsection{Problem Definition}
%The $L_2$ distance is a common metric when it comes to measuring the closeness of two models. However, the $L_2$ distance between $\mathcal{M}(D')$ and $\mathcal{M}(D))$ is hard to calculate because once the global model parameter is effected by the model replacement attack, the deviation will continue for later rounds, which is hard to quantify through distributed SGD analysis.

As described in Algorithm \ref{algo:parameters_perturbation}, owing to the Gaussian noise perturbation mechanism, in each iteration the global model can be viewed as a random vector with the Gaussian smoothing measure $\mu$.
%Fortunately, under the Gaussian smoothing measures $\mu$, 
We use
the \fdivergence{}  between $\mu(\mathcal{M}(D'))$ and $\mu(\mathcal{M}(D))$ as a statistical distance for measuring model closeness of the final FL model. 
Based on the data post-processing inequality, when we interpret each round of \ourframework{} as a probability transition kernel, i.e., a \markovkernel{}, the contraction coefficient of \markovkernel{} can help bound the divergence over multiple training rounds of FL.

Let $f:(0,\infty) \rightarrow \mathbb{R}$ be a convex function with $f(1)=0$, $\mu$ and $\nu$ be two probability distributions. Then the \fdivergence{} is defined as $D_f(\mu || \nu) = E_{W \sim \nu }[f(\frac{\mu(W)}{\nu(W)})].$ 
Common choices of \fdivergence{} include total variation ($f(x)=\frac{1}{2}\|x-1\|$) and Kullback-Leibler (KL) divergence ($f(x)=x \log x$).
The data processing inequality \cite{raginsky2016strong, polyanskiy2015dissipation,polyanskiy2017strong} for the relative entropy states that, for any convex function $f$ and any probability transition kernel (\markovkernel{}), $D_f(\mu K||\nu K) \le D_f(\mu ||\nu)$, where $\mu K$ denotes the push-forward of $\mu$ by $K$, i.e.,  $\mu K = \int \mu(dW) K(W)$.
In other words, $D_f(\mu || \nu)$ decreases by post-processing via $K$.
\cite{asoodeh2020differentially} extend it to analyze SGD.
%into machine learning and the operations in a \markovkernel{} contain one step of SGD.

In our setting,
all the operations in one round of our \ourframework{}, including SGD, clipping and noise perturbations, are incorporated as a \markovkernel{}. We note that in the single-round attack setting, the adversarial clients use clean datasets to train the local models after $\tadv$, so the Markov operator is the same as the one in the benign training process. Therefore the \fdivergence{} of the two global models (backdoored and benign) of interest decreases over rounds, which is characterized by a contraction coefficient defined in Appendix \ref{sec_app_modelclossness}.
% To capture the quantity of the noisiness of a Markov operator for \fdivergence{}, 
%To capture this effect, the quantity is known as the contraction coefficient
%\cite{makur2015linear, asoodeh2020differentially, raginsky2016strong, dobrushin1956central,makur2019informationphdmit}, which is defined as
%$
%\eta_{f}(K) := \sup \limits_{\mu,\nu; {D_f(\mu||\nu)\ne 0}}\frac{D_f(\mu K||\nu K)}{D_f(\mu ||\nu)}
%$. 
We quantify such contraction property of \markovkernel{} for each round with the help of two hyperparameters in the server side: model parameter norm clipping threshold $\rho_t$ and the noise level $\sigma_t$, and finally bound \fdivergence{} of global models in round $T$.
Although our analysis can be adopted to general \fdivergence{}, we here use \kldivergence{} as an instantiation to measure the model closeness. 

% \PY{briefly introduce FL as Markov kernel, then we use its contraction propergy to bound the model closeness

\begin{restatable}[]{theorem}{thmdivergenceroundtmain}
%\begin{theorem} 
\label{therom:divergence_round_T_main}
When $\eta_i \le \frac{1}{\beta}$ and Assumptions~\ref{assumption:Smoothness},~\ref{assumption:data_lipschitz}, and~\ref{assumption:fl_system_train_test} hold, the \kldivergence{} between  $\mu(\mathcal{M}(D))$ and  $\mu(\mathcal{M}(D'))$ with $\mu(w) = \mathcal N(w,{\sigma_T}^2\bf I)$ is bounded as: 
\begin{small}
\begin{equation}
\begin{split}
    &D_{KL}( \mu(\mathcal{M}(D)) || \mu(\mathcal{M}(D')) ) \\
    &\le \frac{ 2R\sum_{i=1}^R\left
    (p_i \gamma_i \tau_i \eta_i \frac{{{q_B}_i}}{{{n_B}_i}}  L_{\mathcal Z} \|\delta_i\|  \right)^2 }{\sigma_{\tadv}^2} \prod_{t=\tadv+1}^{T}  \left(2\Phi \left (\frac{\rho_t }{\sigma_{t}}\right)-1 \right) \nonumber
\end{split}
\end{equation}
\end{small}
%\end{theorem}
\end{restatable}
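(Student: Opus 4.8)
The plan is to follow the two FL runs --- the clean one producing $\mathcal{M}(D)$ and the backdoored one producing $\mathcal{M}(D')$ --- round by round, under a coupling that shares all mini-batch index choices and all injected Gaussian noise. In the single-round attack model, before round $\tadv$ every client (adversaries included) trains on clean data, so under this coupling $\widetilde w_{\tadv-1}$ coincides in the two runs and the smoothing measures are identical there. The backdoor is injected only at round $\tadv$, and for every $t>\tadv$ both runs apply exactly the same stochastic map: local SGD, aggregation, clipping to $\rho_t$, then adding $\mathcal{N}(0,\sigma_t^2\mathbf{I})$ (at $t=T$ this final noise is the test-time smoothing measure). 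So I would (i) bound $D_{KL}$ of the two smoothing measures right after round $\tadv$, and (ii) show that every later round, viewed as a \markovkernel{}, contracts $D_{KL}$ by a factor $2\Phi(\rho_t/\sigma_t)-1$; multiplying these contractions against the round-$\tadv$ bound gives the theorem.

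For step (i): fix an adversary $i\le R$. Its backdoored and clean local iterates both start at $\widetilde w_{\tadv-1}$ and differ, at each local step, only because $g_i$ sees a poisoned rather than a clean mini-batch; by Assumption~\ref{assumption:data_lipschitz} those two mini-batch gradients differ in norm by at most $\frac{{{q_B}_i}}{{{n_B}_i}}L_{\mathcal Z}\|\delta_i\|$ (each of the $q_{B_i}$ poisoned samples contributes $L_{\mathcal Z}\|\delta_i\|$, divided by $n_{B_i}$). Since $\eta_i\le 1/\beta$ and $\ell$ is convex and $\beta$-smooth, co-coercivity (Assumption~\ref{assumption:Smoothness}) makes the clean gradient-descent step non-expansive, so these per-step perturbations accumulate additively and after $\tau_i$ local iterations $\|{w'}^i_{\tadv\tau_i}-w^i_{\tadv\tau_i}\|\le\tau_i\eta_i\frac{{{q_B}_i}}{{{n_B}_i}}L_{\mathcal Z}\|\delta_i\|$. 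Aggregating the scaled updates (triangle inequality) and then applying Cauchy--Schwarz over the $R$ adversaries yields $\|{w'}_{\tadv}-w_{\tadv}\|^2\le R\sum_{i=1}^R\bigl(p_i\gamma_i\tau_i\eta_i\frac{{{q_B}_i}}{{{n_B}_i}}L_{\mathcal Z}\|\delta_i\|\bigr)^2$; clipping is a projection onto a ball, hence non-expansive, so the same bound holds after $\mathrm{Clip}_{\rho_{\tadv}}$. After round $\tadv$ the two measures are isotropic Gaussians with common covariance $\sigma_{\tadv}^2\mathbf{I}$, so $D_{KL}=\|\mathrm{Clip}_{\rho_{\tadv}}({w'}_{\tadv})-\mathrm{Clip}_{\rho_{\tadv}}(w_{\tadv})\|^2/(2\sigma_{\tadv}^2)$, which is at most the stated right-hand side (up to the leading constant).

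For step (ii): each round $t\in\{\tadv+1,\dots,T\}$ factors as a \markovkernel{} --- local training plus aggregation, which never increases any $f$-divergence --- followed by $\mathrm{Clip}_{\rho_t}$, which forces the output into the ball of radius $\rho_t$, followed by additive $\mathcal{N}(0,\sigma_t^2\mathbf{I})$. The clip-then-noise piece is a Gaussian location channel whose two inputs are at distance at most $2\rho_t$; its Dobrushin / total-variation contraction coefficient is therefore $\sup_{\|u-v\|\le 2\rho_t}\|\mathcal{N}(u,\sigma_t^2\mathbf{I})-\mathcal{N}(v,\sigma_t^2\mathbf{I})\|_{\mathrm{TV}}=2\Phi(\rho_t/\sigma_t)-1$. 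Invoking the standard fact that the contraction coefficient of any $f$-divergence is upper bounded by the total-variation contraction coefficient, one round of \ourframework{} contracts $D_{KL}$ by at most $2\Phi(\rho_t/\sigma_t)-1$; since these coefficients are submultiplicative along the time-inhomogeneous chain, composing rounds $\tadv+1,\dots,T$ multiplies the round-$\tadv$ bound by $\prod_{t=\tadv+1}^{T}(2\Phi(\rho_t/\sigma_t)-1)$, giving exactly the claimed inequality. (The general contraction-coefficient argument, specialized to SGD, is what I expect Appendix~\ref{sec_app_modelclossness} to develop.)

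The hardest part will be step (ii): I need to pin down that clipping caps the \emph{effective} input diameter of the Gaussian channel at $2\rho_t$, so that its TV contraction coefficient is exactly $2\Phi(\rho_t/\sigma_t)-1$ --- independent of the ambient dimension $d$ and of the unclipped iterates --- and then that the KL contraction coefficient is indeed dominated by this TV (Dobrushin) coefficient, applied correctly to a time-inhomogeneous chain of kernels on $\mathbb{R}^d$. A secondary subtlety is making the coupling in step (i) rigorous: one must ensure the clean and backdoored runs share mini-batch indices and all injected noise so that $\widetilde w_{\tadv-1}$ coincides and the divergence at round $\tadv$ is driven purely by $\{\delta_i\}_{i=1}^R$, which is legitimate under Assumption~\ref{assumption:fl_system_train_test} because only the data, not the algorithm, changes at the adversarial round.
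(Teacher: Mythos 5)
Your proposal takes essentially the same route as the paper: bound the round-$\tadv$ KL divergence via the Gaussian KL formula applied to the (clipped, hence non-expansively mapped) aggregated global models, then contract through rounds $t>\tadv$ by viewing each round as a \markovkernel{} whose TV (Dobrushin) contraction coefficient is $2\Phi(\rho_t/\sigma_t)-1$ and invoking $\eta_{KL}\le\eta_{TV}$ together with the telescoping product. The only deviation is your local-iteration step: using non-expansiveness of the clean gradient map (valid since $\eta_i\le 1/\beta\le 2/\beta$ under Assumption~\ref{assumption:Smoothness}) you get $\Delta^i_{\tau_i}\le \tau_i\eta_i\frac{{q_B}_i}{{n_B}_i}L_{\mathcal Z}\|\delta_i\|$, whereas the paper's squared-recursion induction (Lemmas~\ref{lm:delta s and delta s-1}--\ref{lm:delta s}) gives the extra factor $2$; your intermediate bound is therefore tighter by a factor of $4$ after squaring and still implies the stated inequality.
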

The proof is provided in the Appendix \ref{sec_app_modelclossness}.

\subsection{Parameter Smoothing}
We connect the model closeness to the prediction consistency by the following theorem. 
The smoothed classifier $h_s$ is robustly certified at $\mu(w')$ with respect to the bounded \kldivergence{}, $D_{KL}(\mu(w),\mu(w'))\le \epsilon$.
\begin{theorem}\label{theorem:param_smoothing}
Let $\smoothclsfier$ be defined as in Eq.~\ref{eq:define_smoothed_cls}. Suppose $c_A \in \mathcal{Y} $ and $\underline{p_A}, \overline{p_B} \in [0,1]$ satisfy
%\begin{small}
\begin{equation} 
 {H_s^{c_A}}(w';x_{test}) \ge \underline{p_A} \ge \overline{p_B} \ge \max_{c\ne c_A} {H_s^{c}}(w';x_{test}), \nonumber
\end{equation}
%\end{small}
then ${\smoothclsfier}(w';x_{test}) = {\smoothclsfier}(w;x_{test}) = c_A$ for all ${w}$  such that $D_{KL}(\mu(w),\mu(w'))\le \epsilon$, where 
%\begin{small}
\begin{equation} 
\begin{aligned}
   \epsilon = - \log \Big(1-(\sqrt{\underline{p_A}} - \sqrt{\overline{p_B}})^2\Big) \nonumber
\end{aligned}
\end{equation}
%\end{small}
\end{theorem}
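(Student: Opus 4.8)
The plan is to derive this from the data-processing inequality (DPI) for the \kldivergence{} together with a short optimization over three-atom distributions, in the style of $f$-divergence certification for smoothed classifiers. Write $P:=\mu(w)$ and $Q:=\mu(w')$, so the hypothesis is $D_{KL}(P\|Q)\le\epsilon$. The assertion $\smoothclsfier(w';x_{test})=c_A$ is immediate from the assumed ordering $H_s^{c_A}(w';x_{test})\ge\underline{p_A}\ge\overline{p_B}\ge\max_{c\ne c_A}H_s^{c}(w';x_{test})$, so the real work is to show $\smoothclsfier(w;x_{test})=c_A$, i.e. $H_s^{c_A}(w;x_{test})>H_s^{c}(w;x_{test})$ for every $c\ne c_A$.

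Fix such a $c$. The key reduction is to collapse parameter space onto three symbols: let $T$ be the deterministic map sending $W$ to $A$ if $\clsfier(W;x_{test})=c_A$, to $B$ if $\clsfier(W;x_{test})=c$, and to $O$ otherwise; since $c\ne c_A$ the sets underlying $A$ and $B$ are disjoint, so $T$ is a genuine partition. Pushing $P,Q$ through $T$ gives distributions $(a,b,1-a-b)$ and $(a',b',1-a'-b')$ on $\{A,B,O\}$, where $a=H_s^{c_A}(w;x_{test})$, $b=H_s^{c}(w;x_{test})$, $a'=H_s^{c_A}(w';x_{test})\ge\underline{p_A}$, and $b'=H_s^{c}(w';x_{test})\le\overline{p_B}$. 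By DPI for the \kldivergence{}, $D_{KL}\big((a,b,1-a-b)\,\|\,(a',b',1-a'-b')\big)\le D_{KL}(P\|Q)\le\epsilon$. It remains to show that $a\le b$ is incompatible with this bound.

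To do that I would minimize $g(a,b):=D_{KL}\big((a,b,1-a-b)\,\|\,(a',b',1-a'-b')\big)$ over the convex region $\{a\ge0,\,b\ge0,\,a+b\le1,\,a\le b\}$ with $a',b'$ fixed (using $a'\ge\underline{p_A}\ge\overline{p_B}\ge b'$, so $a'>b'$ in the nondegenerate case; $a'=b'$ forces $\epsilon=0$ and is trivial). Since $g$ is convex and its unconstrained minimizer $(a',b')$ lies outside the region, any segment from an interior candidate minimizer to $(a',b')$ crosses the face $\{a=b\}$ at a point of no larger $g$-value, so the minimum is attained on $\{a=b\}$. There $g$ restricts to a one-variable convex function of $a$, and the first-order condition gives $a^\star=\sqrt{a'b'}/s$ with $s:=1-(\sqrt{a'}-\sqrt{b'})^2\in(0,1]$; plugging back in (using $1-2a^\star=(1-a'-b')/s$ and $2a^\star+(1-a'-b')/s=1$) collapses to the clean value $\min g=-\log s=-\log\!\big(1-(\sqrt{a'}-\sqrt{b'})^2\big)$. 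Finally, $t\mapsto-\log(1-(\sqrt{t}-\sqrt{u})^2)$ is nondecreasing in $t$ and nonincreasing in $u$ on the relevant range, so $\min g\ge-\log\!\big(1-(\sqrt{\underline{p_A}}-\sqrt{\overline{p_B}})^2\big)=\epsilon$. Hence $a\le b$ implies $D_{KL}(P\|Q)\ge\epsilon$; under the hypothesis we therefore get $a>b$ (the borderline $D_{KL}(P\|Q)=\epsilon$ pins $a=b=a^\star$, and interpreting the $\arg\max$ tie in favour of $c_A$—or strengthening to strict inequality in the hypothesis—still yields the conclusion), and ranging over all $c\ne c_A$ gives $\smoothclsfier(w;x_{test})=c_A$.

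The main obstacle is the optimization step: one must argue carefully that the constrained minimum of the ternary \kldivergence{} lies on the diagonal $\{a=b\}$ rather than on a simplex face, and then run the calculus that makes it collapse to the surprisingly simple $-\log\!\big(1-(\sqrt{a'}-\sqrt{b'})^2\big)$; stating the monotonicity in $(\underline{p_A},\overline{p_B})$ and the knife-edge/tie case cleanly is the remaining bookkeeping. The rest—the three-atom reduction and the invocation of data processing—is routine.
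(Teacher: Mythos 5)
Your proposal is correct, and it reaches the closed-form threshold $-\log\bigl(1-(\sqrt{\underline{p_A}}-\sqrt{\overline{p_B}})^2\bigr)$ by a genuinely different route from the paper. The paper follows the $f$-divergence certification framework of \cite{dvijotham2020framework}: it encodes each pairwise comparison as a specification $\phi_{c_A,c}\in\{-1,0,+1\}$, poses the certificate as $\min_{\nu:D_{KL}(\nu\|\mu(w'))\le\epsilon}\E_\nu[\phi]\ge 0$, dualizes with multipliers $(\lambda,\kappa)$ using the convex conjugate $f_\lambda^*(u)=\lambda e^{u/\lambda-1}$ of $u\log u$, and solves the resulting two-variable maximization in closed form (Theorem~\ref{theorem:nonnegative_opt} and Lemma~\ref{lemma:KL closed-form certificate}). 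You instead work on the primal side: collapse $\mu(w)$ and $\mu(w')$ through the deterministic partition $\{h=c_A\},\{h=c\},\{\text{other}\}$, invoke the data-processing inequality to pass to a ternary KL, and directly minimize that ternary divergence over the ``bad'' region $\{a\le b\}$, showing the minimum sits on the diagonal and collapses to $-\log\bigl(1-(\sqrt{a'}-\sqrt{b'})^2\bigr)$; monotonicity in $(a',b')$ then transfers the bound to $(\underline{p_A},\overline{p_B})$. I checked your stationary point $a^\star=\sqrt{a'b'}/s$ with $s=1-(\sqrt{a'}-\sqrt{b'})^2$, the identity $1-2a^\star=(1-a'-b')/s$, and the resulting value $-\log s$; they are right, and the segment argument for locating the constrained minimum on $\{a=b\}$ is sound since the unconstrained minimizer $(a',b')$ lies outside the region and KL is jointly convex. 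The knife-edge/tie case you flag is present in the paper as well (its certificate is the non-strict condition ``optimal value $\ge 0$''), so you are not weaker there. What each approach buys: yours is more elementary and self-contained (no duality, no conjugates, only DPI and one-variable calculus), while the paper's dual formulation applies verbatim to any $f$-divergence and yields the ``if and only if'' characterization that anchors its general certification framework; your primal computation, as written, establishes the sufficiency direction, which is all Theorem~\ref{theorem:param_smoothing} needs.
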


The proof is provided in the Appendix \ref{sec_app_param_smothing}.

Finally, combining Theorem~\ref{therom:divergence_round_T_main} and ~\ref{theorem:param_smoothing} leads to our main Theorem \ref{theorem_robustness_3_level}. In detail,
Theorem~\ref{therom:divergence_round_T_main} states that $D_{KL}( \mu(\mathcal{M}(D)) || \mu(\mathcal{M}(D')))$ under our \ourframework{} framework is bounded by certain value that depends on the difference between $D$ and $D'$. Theorem~\ref{theorem:param_smoothing} states that for a test sample $x_{test}$, as long as the \kldivergence{} is smaller than $- \log (1-(\sqrt{\underline{p_A}} - \sqrt{\overline{p_B}})^2) $, the prediction from the poisoned smoothed classifier $\smoothclsfier$ that is built upon the base classifier with model parameter $\mathcal{M}(D')$ will be consistent with the prediction from $\smoothclsfier$ that is built upon $\mathcal{M}(D)$. Therefore, we derive the condition for $D$ and $D'$  in Theorem~\ref{theorem_robustness_3_level}, under which  $D_{KL}( \mu(\mathcal{M}(D)) || \mu(\mathcal{M}(D'))) \leq - \log (1-(\sqrt{\underline{p_A}} - \sqrt{\overline{p_B}})^2) $. This condition also indicates that $\smoothclsfier$ built upon the model parameter $\mathcal{M}(D')$ is certifiably robust.

% \PY{Need a paragraph to show how Thm 2 and 3 leard to Thm 1}
\paragraph{Defend against Other Potential Attack} 
Here we discuss the potentials to generalize our method against other training-time attacks.
1) Our method can naturally extend to \emph{fixed-frequency} attack by applying our analysis for each attack period. In particular, we can repeatedly apply our Theorem~\ref{therom:divergence_round_T_main} to analyze model closeness for each attack period, and the different initializations of each period can be bounded based on its last period. Then Theorem~\ref{theorem:param_smoothing} can be applied to connect model closeness to certify the prediction consistency.
2)
\cite{wang2020attackthetails} introduce edge-case adversarial training samples to enforce the model to misclassify inputs on the tail of input distribution. The edge-case attack essentially conducts a special semantic attack~\cite{bagdasaryan2020backdoor} by selecting rare images instead of directly adding backdoor patterns. It is possible to apply our framework against such attack by viewing it as the whole \emph{sample} manipulation.

\paragraph{Comparison with Differentially Private Federated Learning}
In order to protect the privacy of each client, differentially private federated learning (DPFL) mechanisms are proposed~\cite{geyer2017differentially, mcmahan2018learning, agarwal2018cpsgd} to ensure that the learned FL model is essentially unchanged when one individual client is modified.
Compared with DPFL, our method has several fundamental differences and addresses additional challenges: 1) Mechanisms: DPFL approaches add training-time noise to provide privacy guarantee, while ours add smoothing noise during training and testing to provide certified robustness against data poisoning. In general, the added noise in CRFL does not need to be as large as that in DPFL to provide \textit{strong} privacy guarantee, and therefore preserve higher model utility.
2) Certification goals: DPFL approaches provide client-level privacy guarantee for the learned model parameters, while in CRFL the robustness guarantee is derived for certified pointwise prediction which could be on the feature, samples and clients levels. 3) Technical contributions: DPFL approaches derive DP guarantee via DP composition theorems~\cite{dwork2014algorithmic, abadi2016deep}, while we quantify the global model deviation via Markov Kernel and verify the robustness properties of the smoothed model via parameter smoothing.

\section{Experiments}

In our experiments, the attackers perform the model replacement attack at round $\tadv$ during our \ourframework{} training, and the server performs parameter smoothing on a possibly backdoored FL model at round $T$ to calculate the certified radius $\cerradius$ for each test sample based on Corollary~\ref{corollary_robustness_pixel_level}.
%We study the properties of $\cerradius$ obtained from our \ourframework{} framework in practice. 
Specifically, we evaluate the effect of the training time noise $\sigma_t$, the attacker's ability which includes the number of attackers $R$, the poison ratio $\frac{q_{B_i}}{n_{B_i}}$ and the scale factor $\gamma_i$, robust aggregation protocol, the number of total clients $N$ and the number of training rounds $T$. Moreover, we evaluate the model closeness empirically to justify Theorem \ref{therom:divergence_round_T_main}.

\subsection{Experiment Setup}
%Although our approach can be applied more broadly, 
We focus on multi-class logistic regression (one linear layer with softmax function and cross-entropy loss), which is a convex classification problem.
We train the FL system following our \ourframework{} framework with three datasets: Lending Club Loan Data (\loan{})~\citep{loandataset}, \mnist{}~\cite{lecun-mnisthandwrittendigit-2010}, and \emnist{}~\cite{cohen2017emnist}.
 We refer the readers to Appendix \ref{sec:app_exp_details} for more details about the datasets, parameter setups and attack setting.
We train the FL global model until convergence and then use our certification in Algorithm~\ref{alg:certify_parameters_perturbation} for robustness evaluation. 

The metrics of interest are \emph{\cerrate{}} and \emph{\ceracc{}}. Given a test set of size $m$, for $i$-th test sample, the ground truth label is $y_i$, and the output prediction is either $c_i$ with the certified radius $\cerradius_i$ or $c_i=\texttt{ABSTAIN}$ with $\cerradius_i=0$. 
Then we calculate 
\textbf{\cerrate{}} at $r$ as $\frac{1}{m}\sum_{i=1}^m \mathbb{1}\{\cerradius_i \ge r\}$ ,
and \textbf{\ceracc{}} at $r$ as $\frac{1}{m} \sum_{i=1}^m\mathbb{1}\{c_i=y_i$ and $\cerradius_i  \ge r\}$.
The \cerrate{} is the fraction of the test set that can be certified at radius $\cerradius \ge r $, which reveals how consistent the possibly backdoored classifier's prediction with the clean classifier's prediction. 
The \ceracc{} is the fraction of the test set for which the possibly backdoored classifier makes correct and consistent predictions with the clean model.
% \chulin{this paragraph might be too long}
In the displayed figures, there is a critical radius beyond which the \ceracc{} and \cerrate{} are dropped to zero. 
Since each test sample has its own calculated certified radius $\cerradius_i$, this critical value is a threshold that none of them have a larger radius than it, similar to the findings in \cite{cohen2019certifiedrandsmoothing}.
We certified 10000/5000/10000 samples from the \loan{}/\mnist{}/\emnist{} test sets. In all experiments, unless otherwise stated, we use $\sigma_T=0.01$ to generate $M=1000$ noisy models in parameter smoothing procedure, and use the error tolerance $\alpha=0.001$. 
In our experiments, we adopt the expression of $L_{\mathcal Z}$ in Lemma~\ref{lm:L_z_for_multiclass_logistic_regression}. $L_{\mathcal Z}$ can be generalized to other poisoning settings by specifying $z_1, z_2$ in Assumption~\ref{assumption:data_lipschitz} under the case of ``$x_1\neq x_2$ and $y_1 \neq y_2$'' or ``$x_1 = x_2$ and $y_1 \neq y_2$''.

% to be the concatenation of feature $x$ and label $y$ for each instance.
% so we assume $D$ already contains flipped labels, i.e., $D \gets \{\{\ \{x_j^i , y_j^i + {\delta_i}_y\} \}_{j=1}^{q_i}\}_{i=1}^R$. $D'$ is obtained by adding backdoor patterns on the input of $D$, i.e., $D' \gets D + \{\{\ \{{\delta_i}_x,0\} \}_{j=1}^{q_i}\}_{i=1}^R$. For the case where $D$ is the clean dataset $D \gets \{\{\ \{x_j^i , y_j^i \} \}_{j=1}^{q_i}\}_{i=1}^R$ and $D'$ is the backdoored dataset $D' \gets D+ \{\{\ \{{\delta_i}_x,{\delta_i}_y \} \}_{j=1}^{q_i}\}_{i=1}^R$, our Theorem~\ref{theorem_robustness_3_level} can be applied in pracise by estimating  $ L_{\mathcal Z}$ with respective to both arguments $x$ and $y$.

\subsection{Experiment Results}
We only change one factor in each experiment and keep others the same as the experiment setup. We plot the \ceracc{} and \cerrate{} on the clean test set, and report the results on the  backdoored test set in Appendix~\ref{sec:app_exp_details}.
% unless specified otherwise.

\begin{figure}[ht]
\setlength{\belowcaptionskip}{-4mm}
\subfigure[Certified rate on \mnist{}] 
{
	\includegraphics[scale=0.24]{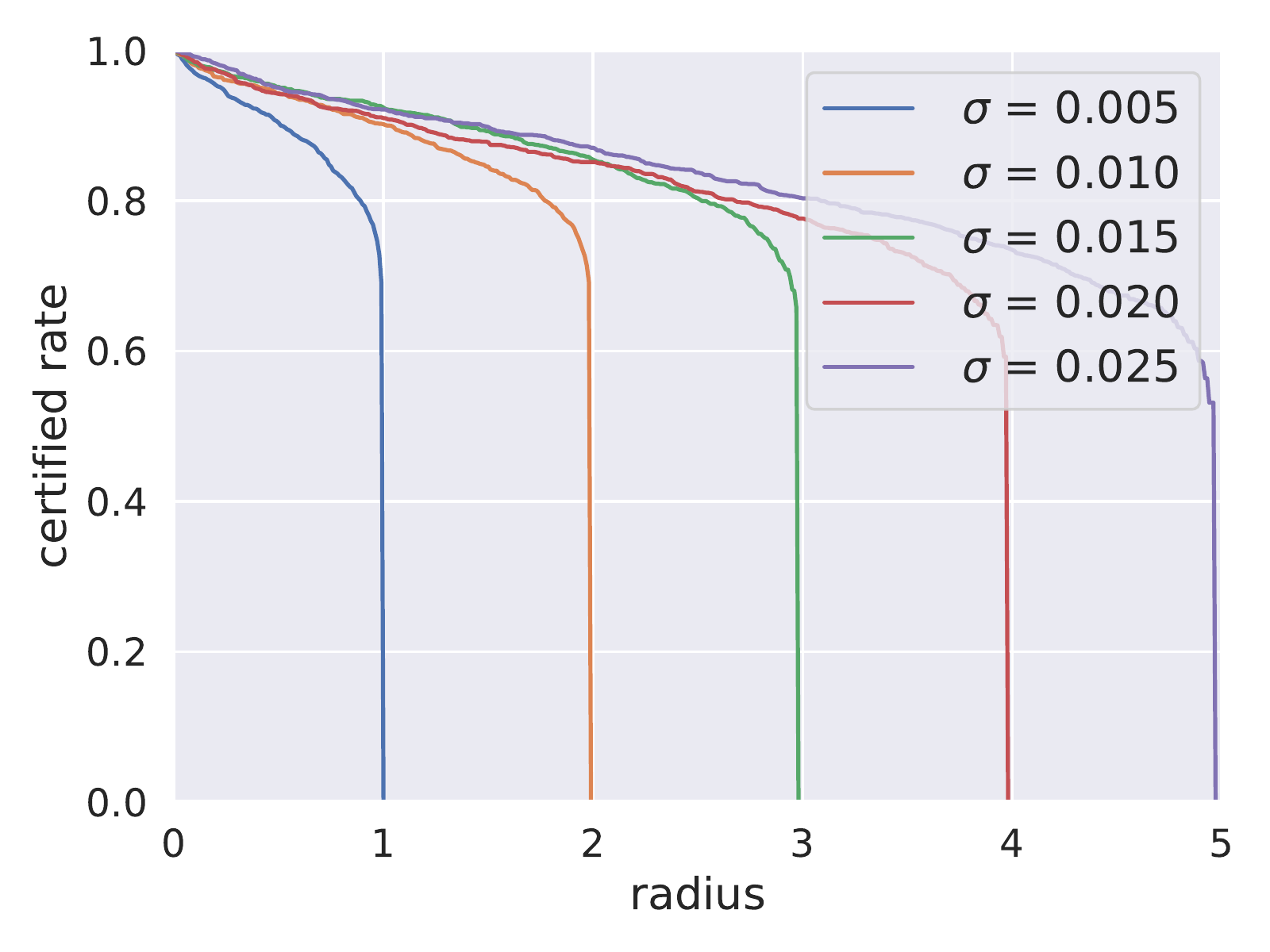}
}
\subfigure[Certified acc. on \mnist{}] 
{
	\includegraphics[scale=0.24]{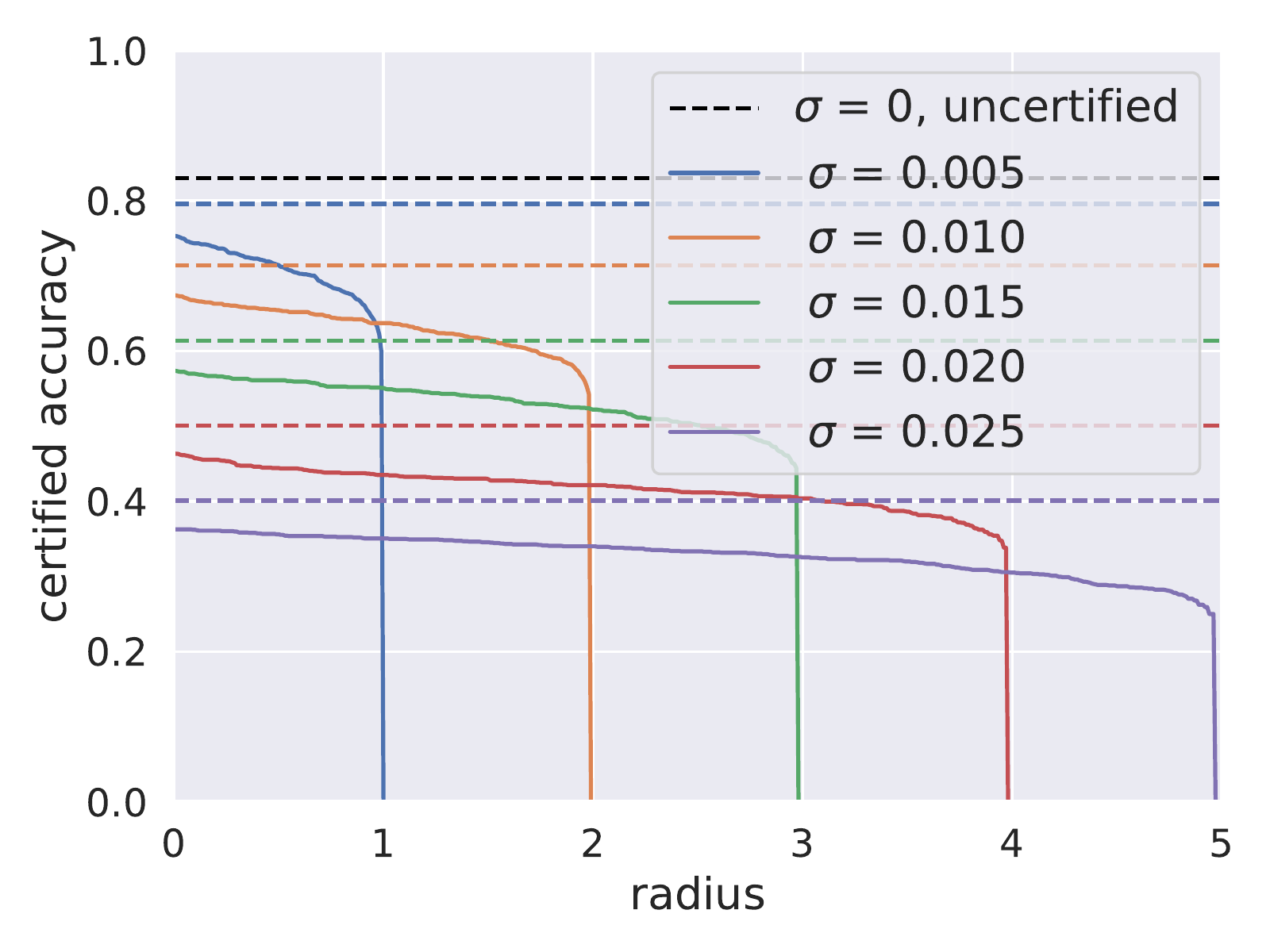}
}
\subfigure[Certified acc. on \loan{}] 
{
	\includegraphics[scale=0.24]{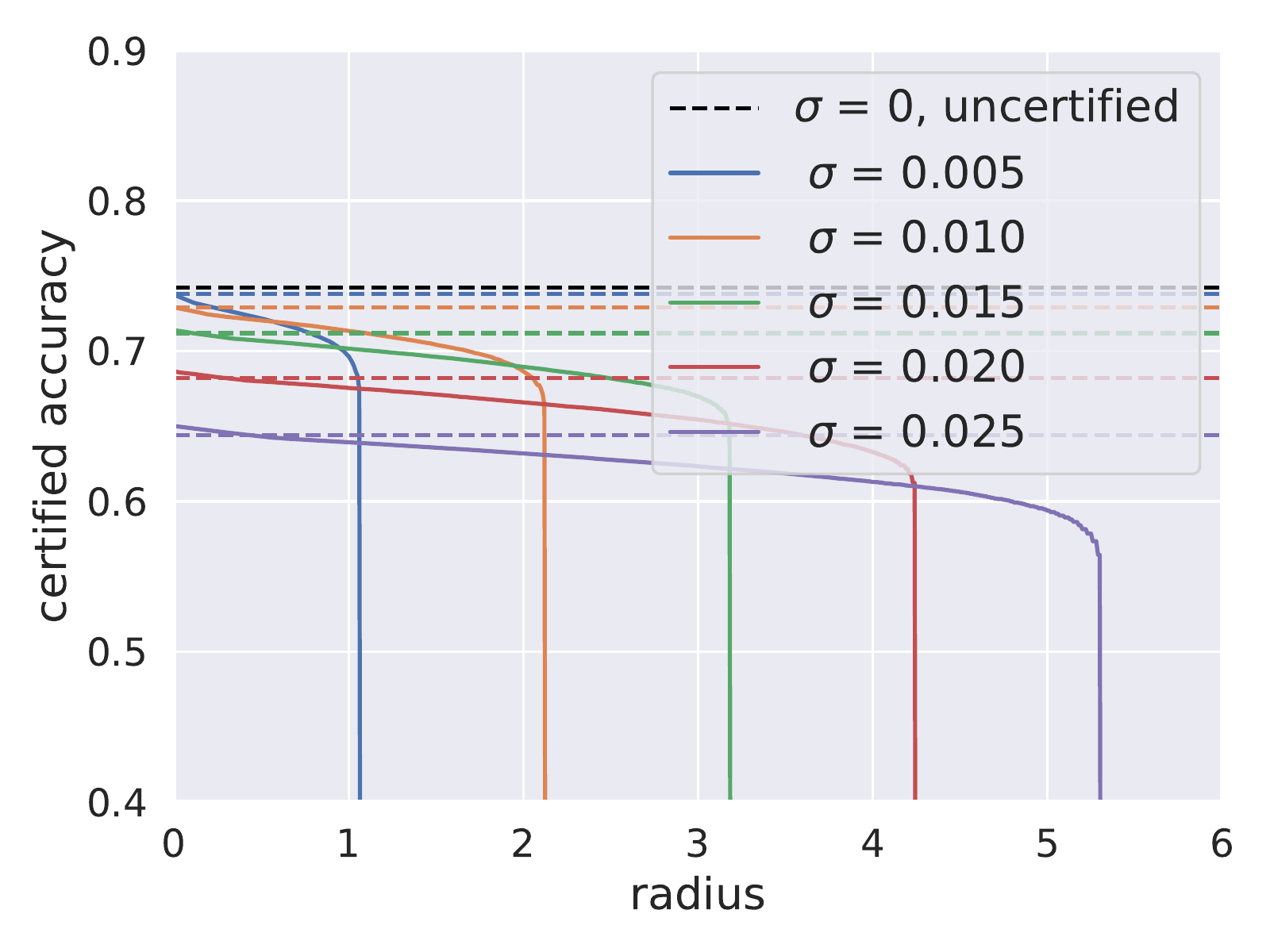}
}
\subfigure[Certified acc. on \emnist{}] 
{
	\includegraphics[scale=0.24]{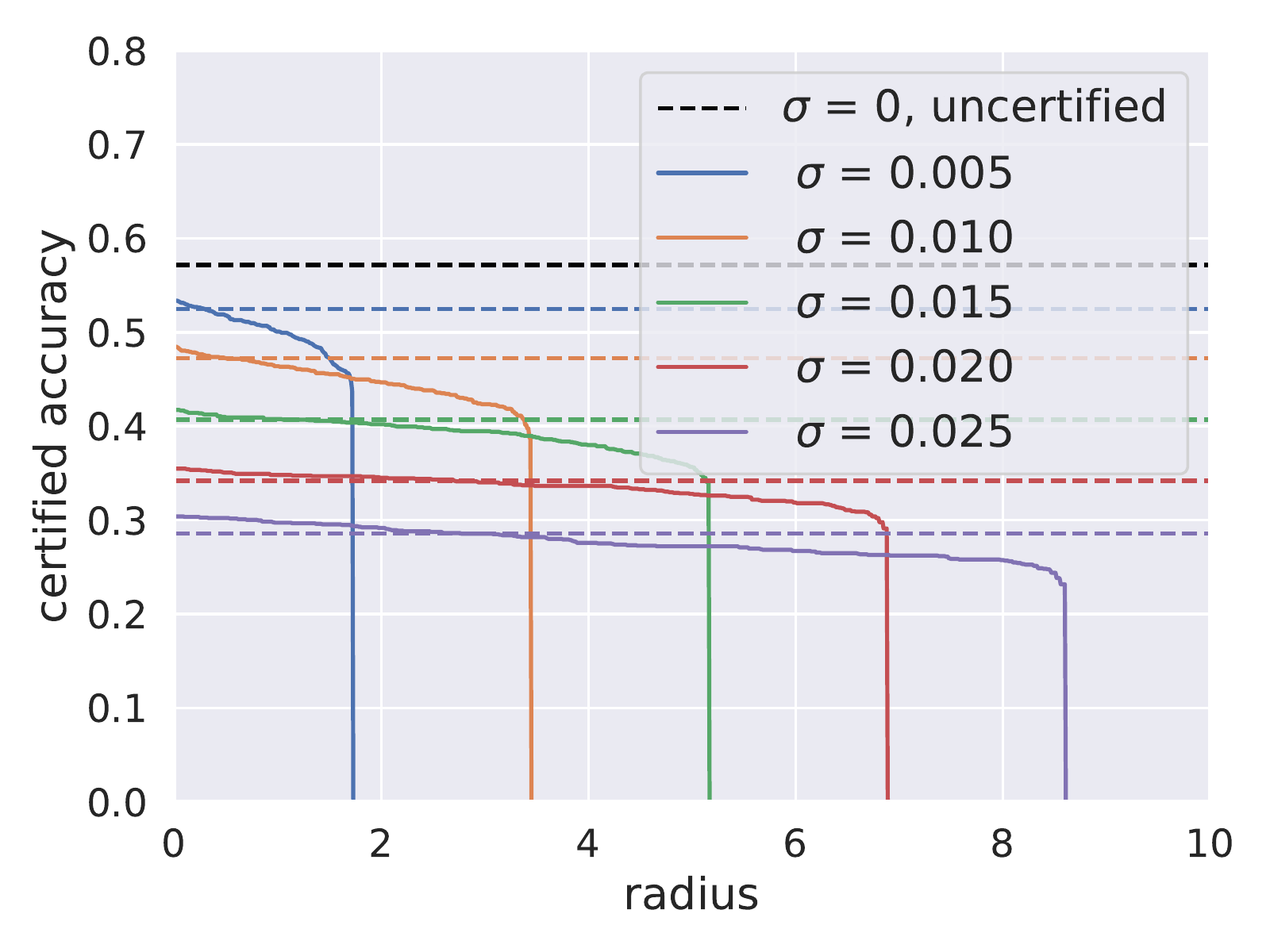}
}
%\vspace{-3mm}
\caption{Certified accuracy and certified rate on \mnist{}, \loan{}, and \emnist{} with different training-time noise $\sigma$. Solid lines represent certified accuracy; dashed lines of the same color show the accuracy of base classifier trained with $\sigma$; black dashed line presents the accuracy of the classifier trained without noise.} 
\label{fig:training_sigma_noise} 
%\vspace{-4mm}
\end{figure}

\paragraph{Effect of Training Time Noise}
Since we aim to defend against backdoor attack, the training time noise $\sigma$ ($\sigma=\sigma_t, t<T$) in our Algorithm~\ref{algo:parameters_perturbation} is more essential than $\sigma_T$ in parameter smoothing (Algorithm~\ref{alg:certify_parameters_perturbation}). The reason is that $\sigma$ can nullify the malicious model updates at early stage.
% change the training-time noise level $\sigma_t=\sigma$($t<T$). 
Figure~\ref{fig:training_sigma_noise} plots the \ceracc{} and \cerrate{} attained by training FL system with different $\sigma$.
In Figure~\ref{fig:training_sigma_noise}(a), when $\sigma$ is high, \cerrate{} is high at every $r$ and large radius can be certified.  Figure~\ref{fig:training_sigma_noise}(b)(c)(d) show that large radius is certified but at a low accuracy, so the parameter noise $\sigma$ controls the trade-off between certifiability and accuracy, which echoes the property of evasion-attack certification~\cite{cohen2019certifiedrandsmoothing}. 
% The colorful dashed lines are the accuracy of the base classifier that is trained under $\sigma$.
%  but is not performed parameter smoothing in the testing phrase.
% The black dashed line is the accuracy of the classifier that is trained without noise.
Comparing the solid line with the dashed line for each color, we can see that the parameter smoothing with $\sigma_T$ does not hurt the accuracy much. 
% Comparing the colorful dashed line with the black dashed line, we can see that the the large training time noises hurt the model accuracy.

\paragraph{Effect of Attacker Ability}
From the perspective of attackers, the larger number of attackers $R$, the larger poison ratio $\frac{q_{B_i}}{n_{B_i}}$ and the larger scale factor $\gamma_i$ result in the stronger attack. Figure~\ref{fig:poison_ratio}, Figure~\ref{fig:gamma}, and Figure~\ref{fig:number_of_attackers} show that in the three datasets, the stronger the attack, the smaller radius can be certified. After training sufficient number of rounds with clean datasets after $\tadv$, we show that the certified radius is not sensitive to the attack timing $\tadv$ in Appendix~\ref{ap:more_results_clean_testset}.

\begin{figure}[ht]
\setlength{\belowcaptionskip}{-4mm}
\subfigure
{
	\includegraphics[scale=0.24]{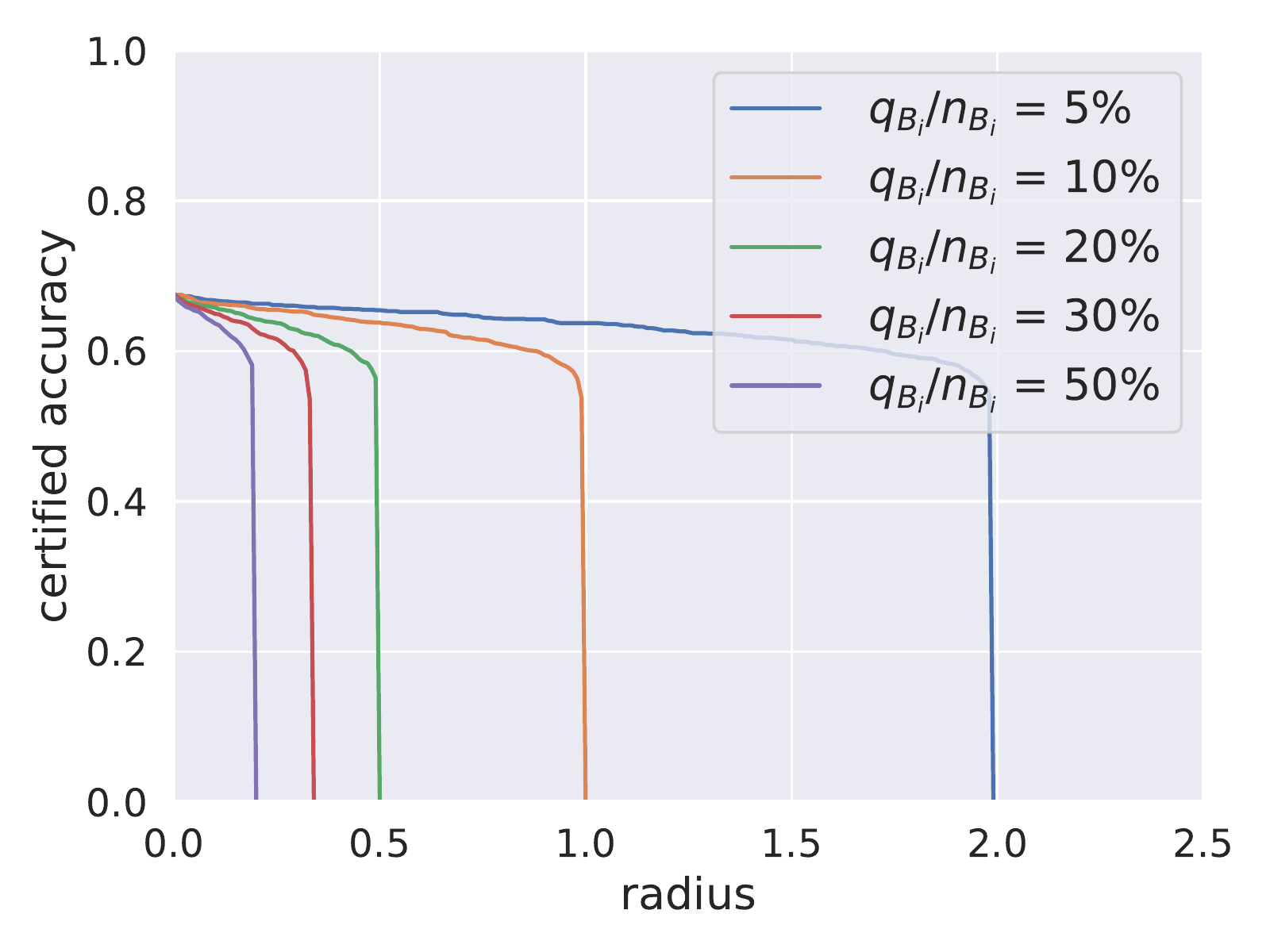}
}
\subfigure
{
	\includegraphics[scale=0.24]{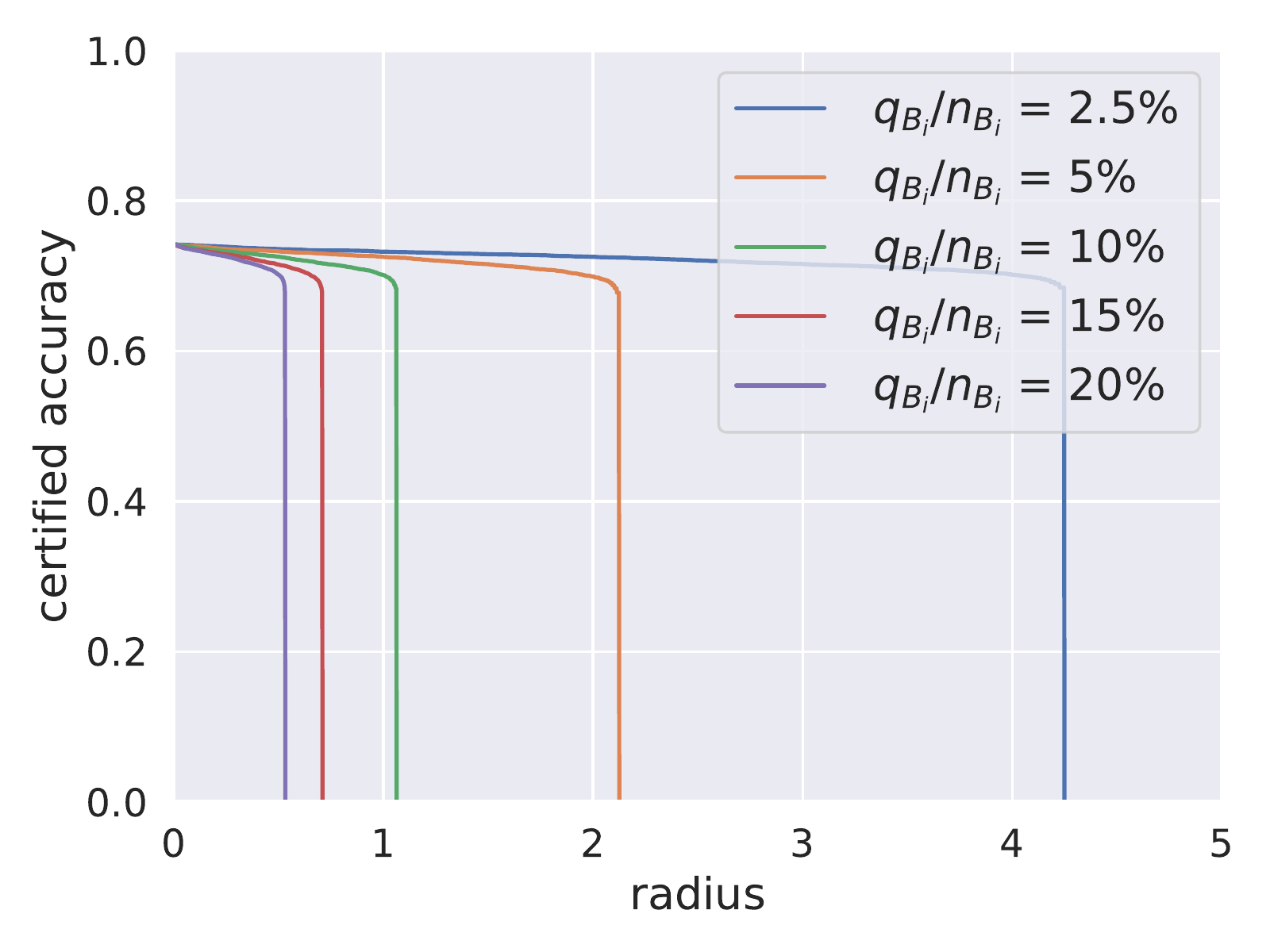}
}
\vspace{-5mm}
\caption{\mnist{}~(left) and \loan{}~(right) test set certified accuracy as the poison ratio  ${q_{B_i}}/{n_{B_i}}$ is varied.} 
\label{fig:poison_ratio} 
%\vspace{-3mm}
\end{figure}

\begin{figure}[ht]
\setlength{\belowcaptionskip}{-4mm}
\subfigure
{
	\includegraphics[scale=0.24]{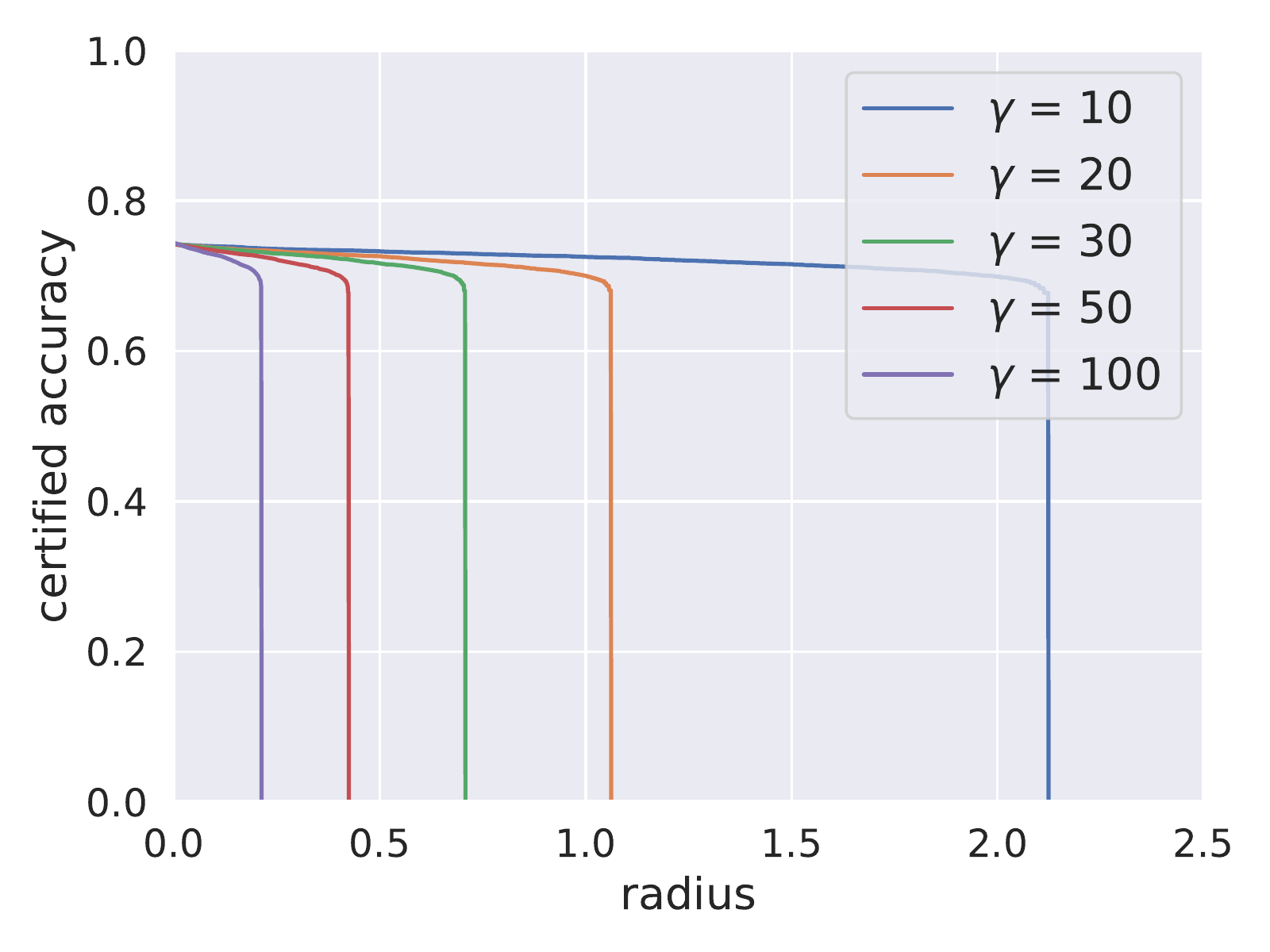}
}
\subfigure
{
	\includegraphics[scale=0.24]{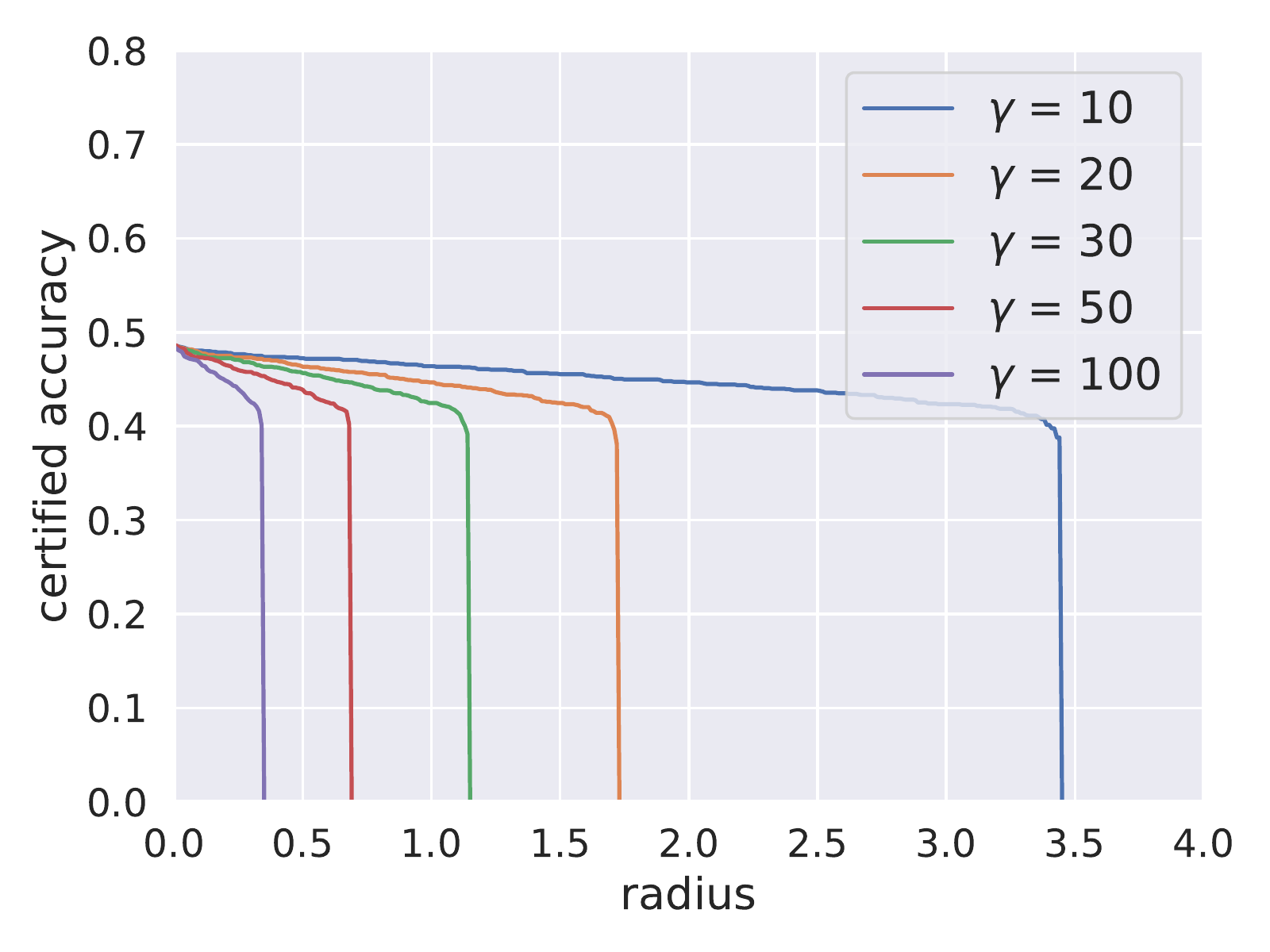}
}
\vspace{-5mm}
\caption{Certified accuracy with different scaling factor $\gamma$ on \loan{}~(left) and \emnist{}~(right).} 
\label{fig:gamma} 
%\vspace{-4mm}
\end{figure}

\begin{figure}[ht]
\setlength{\belowcaptionskip}{-4mm}
\subfigure
{
	\includegraphics[scale=0.24]{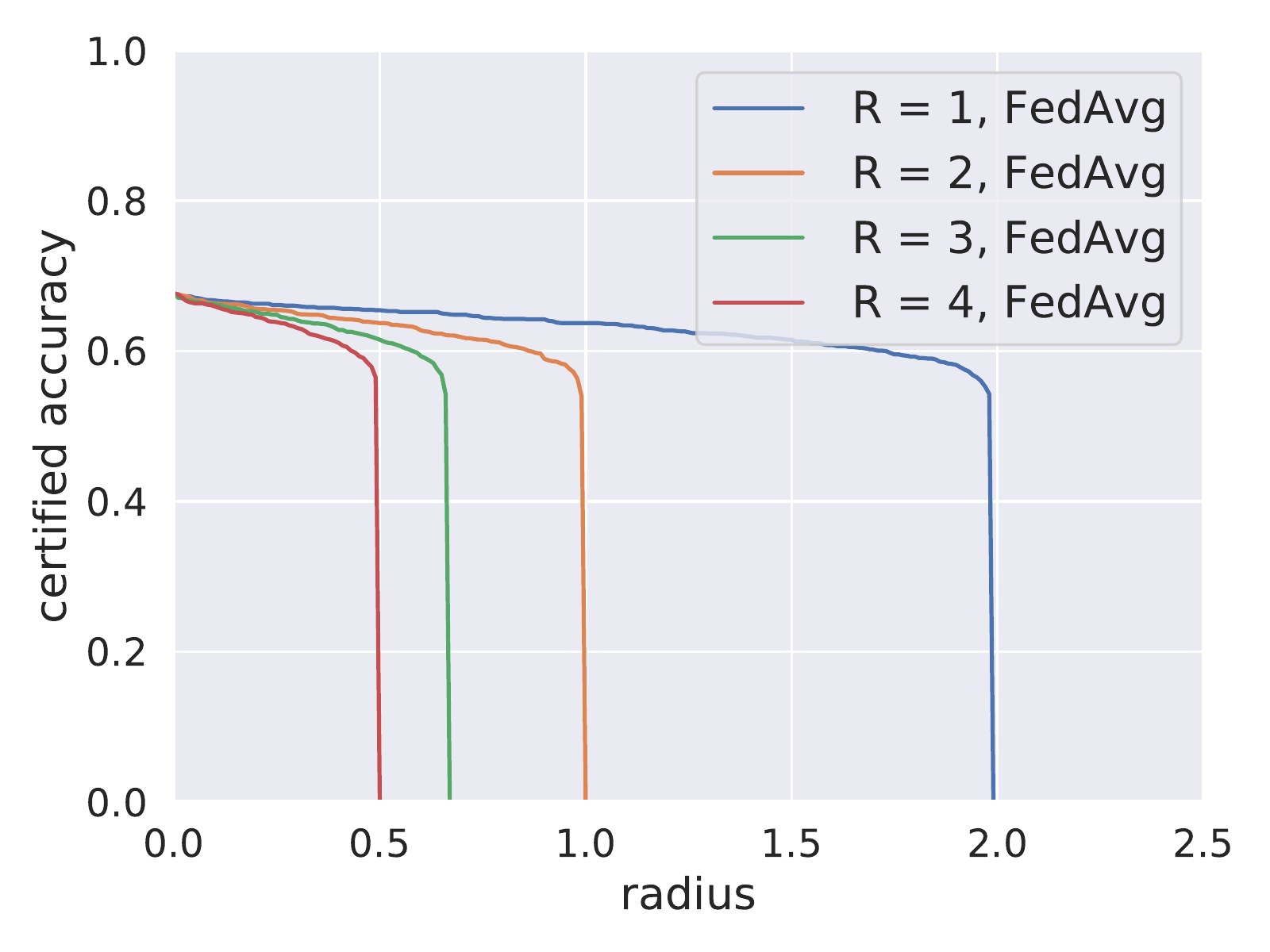}
}
\subfigure
{
	\includegraphics[scale=0.24]{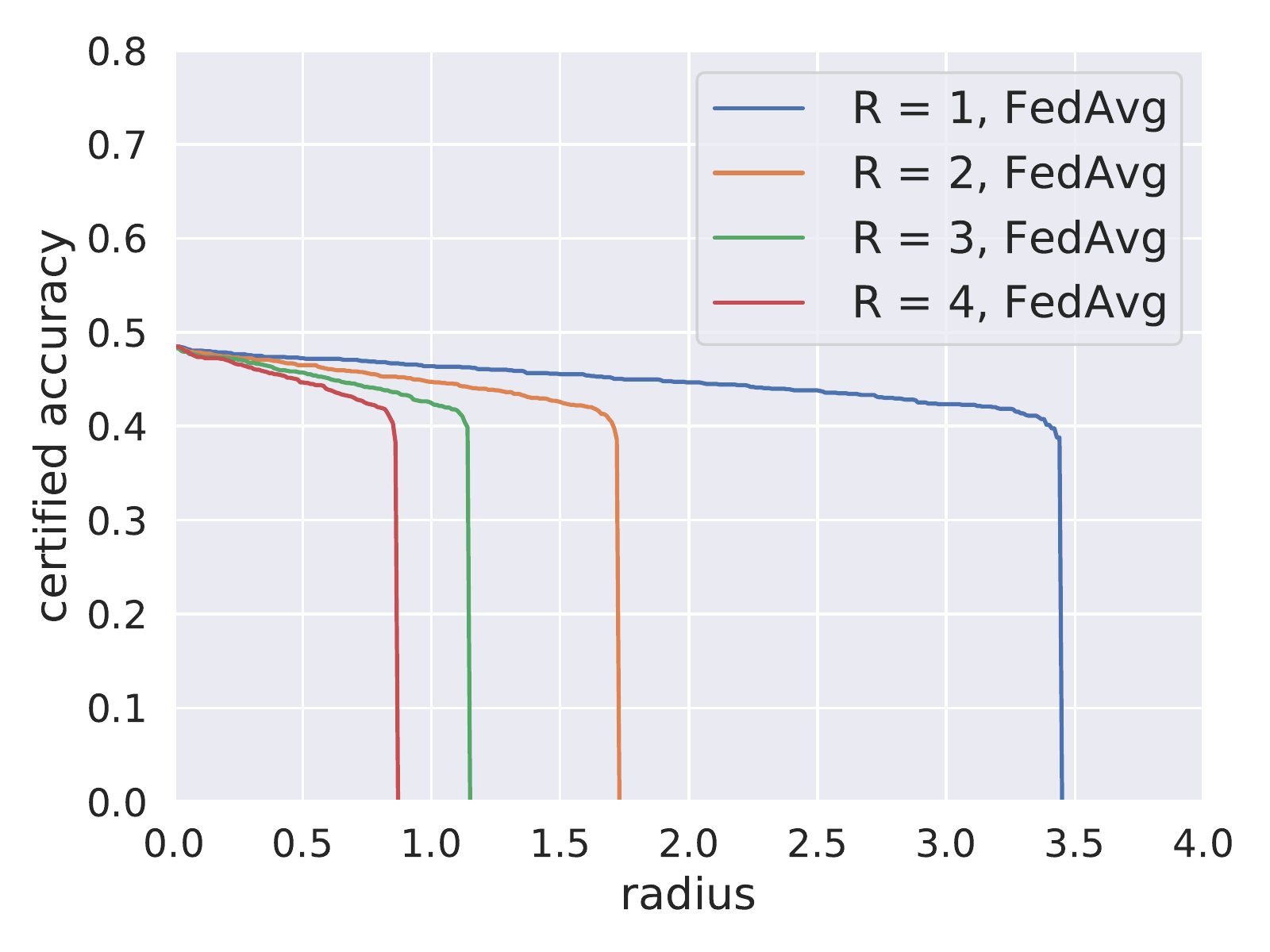}
}
\vspace{-5mm}
\caption{\mnist{}~(left) and \emnist{}~(right) test set certified accuracy as the number of adversarial clients $R$ is varied.} 
\label{fig:number_of_attackers} 
%\vspace{-4mm}
\end{figure}

\begin{figure}[ht]
\setlength{\belowcaptionskip}{-5mm}
\subfigure
{
	\includegraphics[scale=0.24]{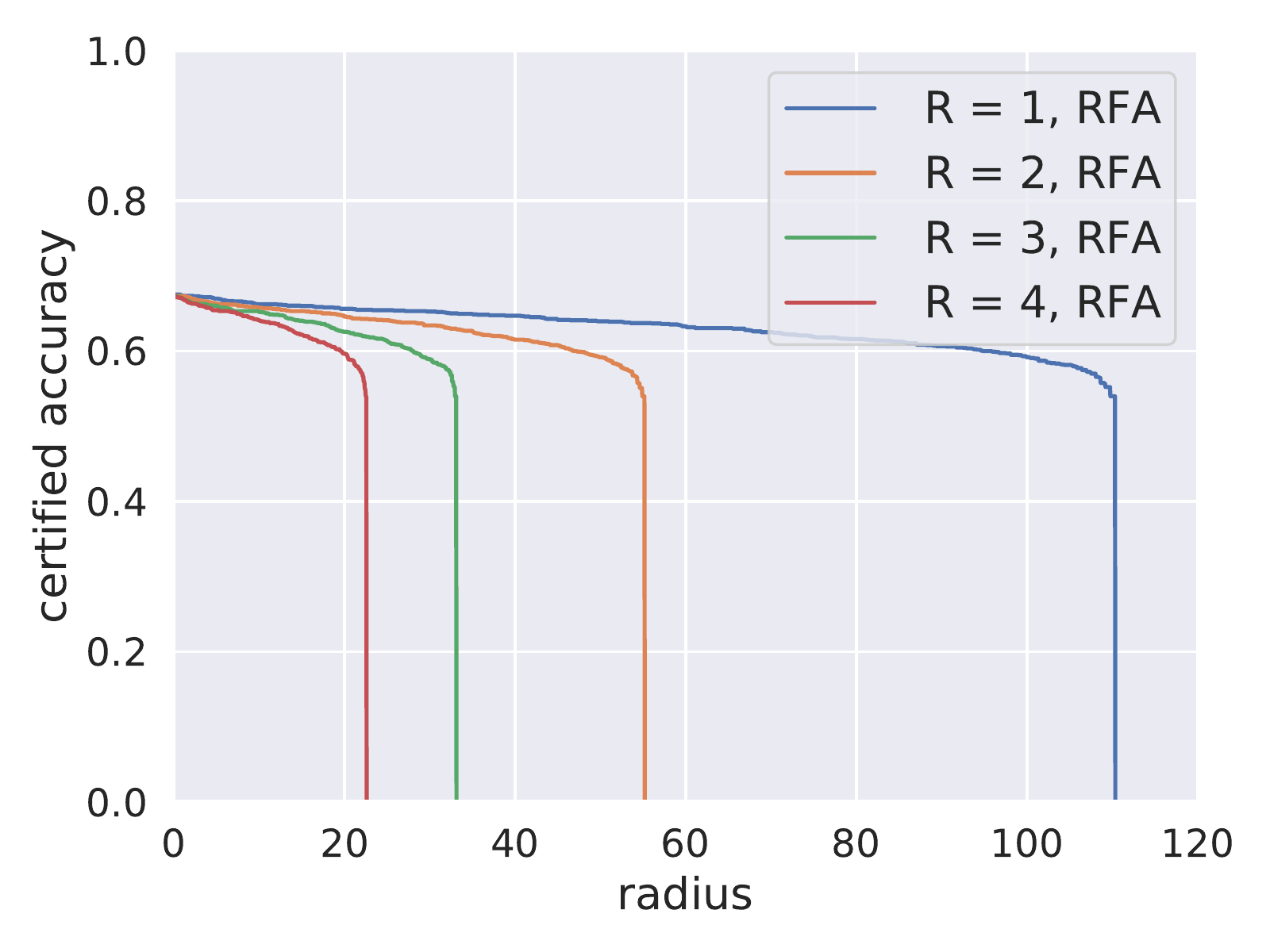}
}
\subfigure
{
	\includegraphics[scale=0.24]{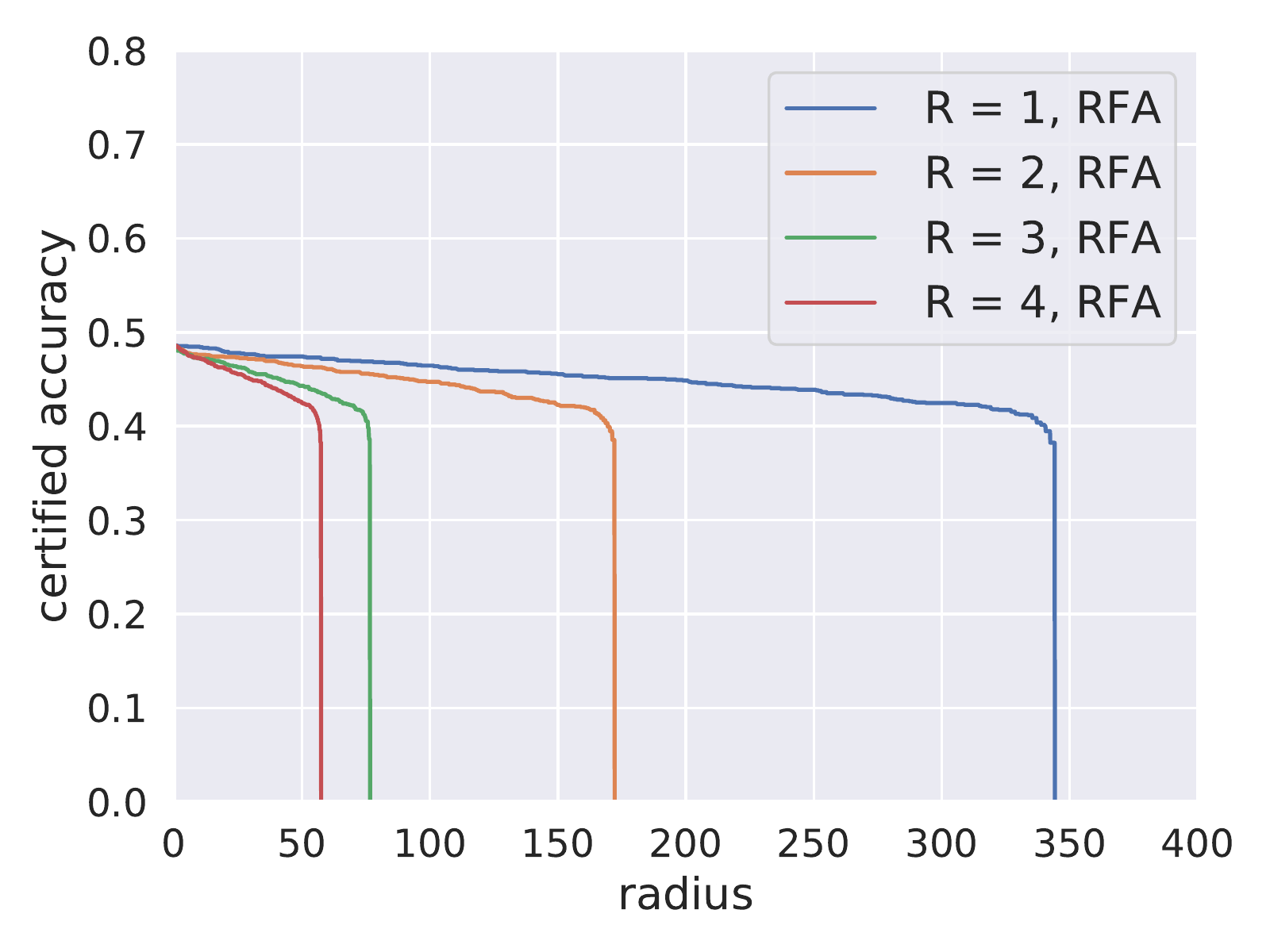}
}
\vspace{-5mm}
\caption{Certified accuracy on \mnist{}~(left) and \emnist{}~(right) with different $R$ when FL is trained under the robust aggregation RFA \cite{pillutla2019robustrfa}. }
\label{fig:robustagg} 
%\vspace{-4mm}
\end{figure}

\paragraph{Effect of Robust Aggregation}
Our \ourframework{} can be used to assess different robust aggregation rules. Figure~\ref{fig:robustagg} presents the \ceracc{} on \mnist{} and \emnist{} as $R$ is varied, when our \ourframework{} adopts the robust aggregation algorithm RFA~\cite{pillutla2019robustrfa}, which detects outliers and down-weights the malicious updates during aggregation.
% , so as to make the global model robust to backdoor attack. 
% RFA assigns different aggregation weights to each client in different round, and we collect the aggregation weights $p_i$ in $\tadv$ for attackers to calculate the $\cerradius$.
Comparing FedAvg in Figure~\ref{fig:number_of_attackers} with RFA in Figure~\ref{fig:robustagg} (the magnitude of x-axis is different), we observe that very large radius can be certified under RFA. This is because that the attacker is assigned with very low aggregation weights $p_i$, which is part of our bound in Eq.~\ref{eq:certified_radius_R}. Our certified radius reveals that RFA is much robust than FedAvg, which shows the potential usage of our certified radius as an evaluation metric for the robustness of other robust aggregation rules.

\begin{figure}[ht]
\setlength{\belowcaptionskip}{-4mm}
\subfigure
{
	\includegraphics[scale=0.24]{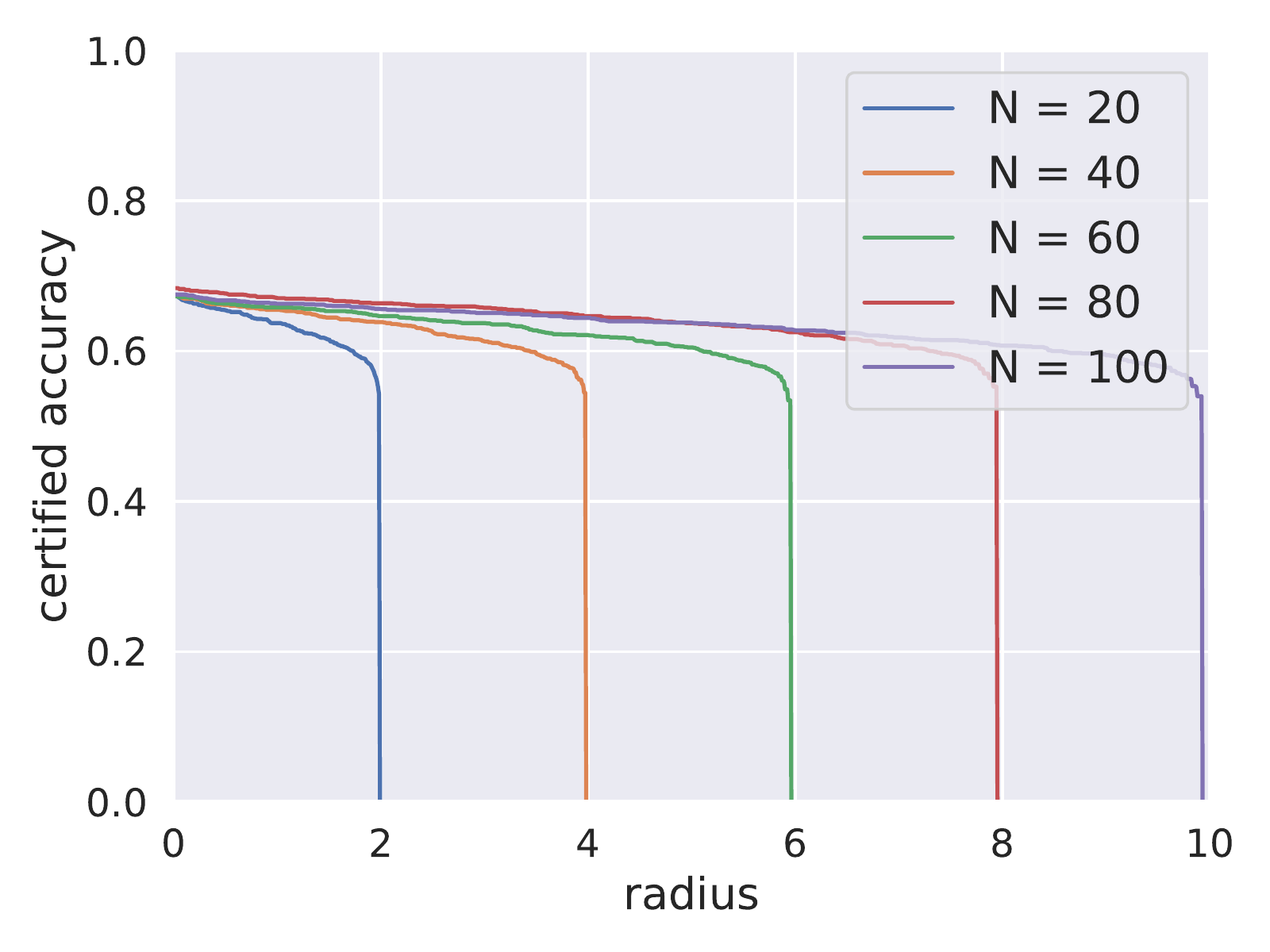}
}
\subfigure
{
	\includegraphics[scale=0.24]{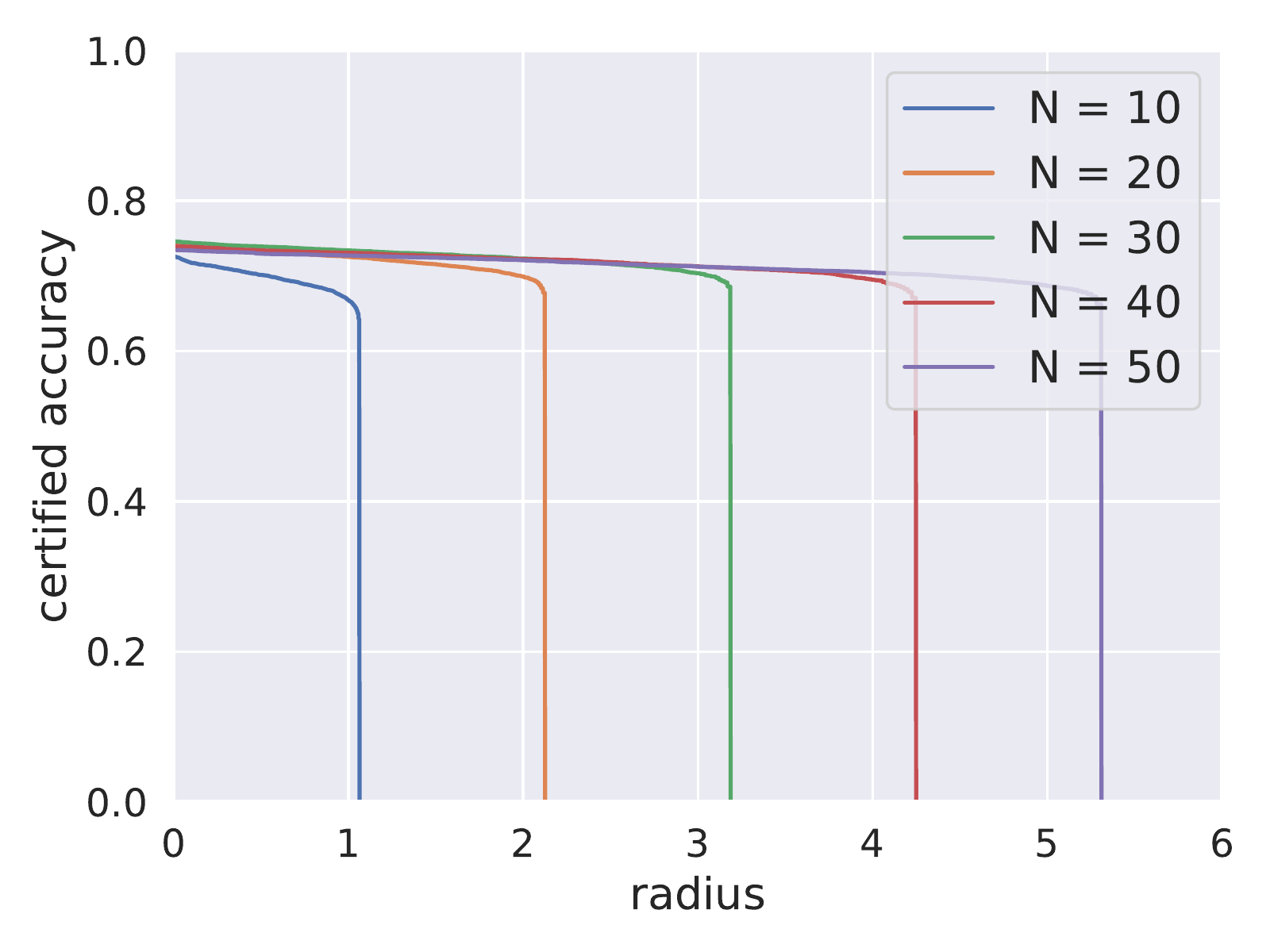}
}
\vspace{-5mm}
\caption{Certified accuracy on \mnist{}~(left) and \loan{}~(right) as the number of total clients $N$ is varied.} 
\label{fig:total_clients} 
%\vspace{-4mm}
\end{figure}

\paragraph{Effect of Client Number}
Distributed learning across a large number of clients is an important property of FL. Figure~\ref{fig:total_clients} shows that large radius can be certified when $N$ is large (i.e., more clients can tolerant larger backdoor magnitude), because it decreases the aggregation weights $p_i$ of attackers. Moreover, the backdoor effect could be mitigated by more benign model updates during training.
%thus influencing the $\underline{p_A}$ and  $\overline{p_B}$.

\begin{figure}[ht]
\setlength{\belowcaptionskip}{-4mm}
\subfigure
{
	\includegraphics[scale=0.24]{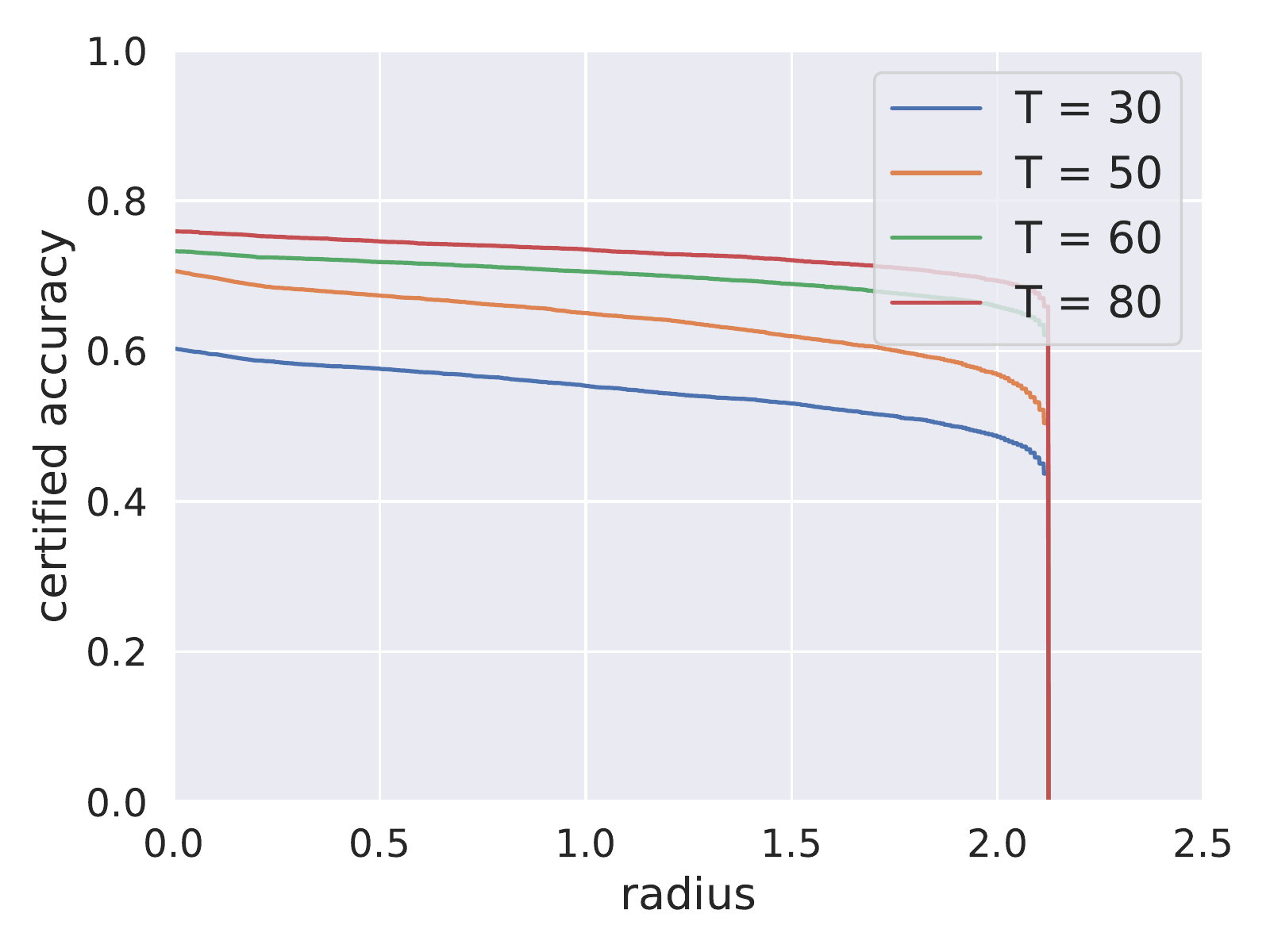}
	
}
\subfigure
{
	\includegraphics[scale=0.24]{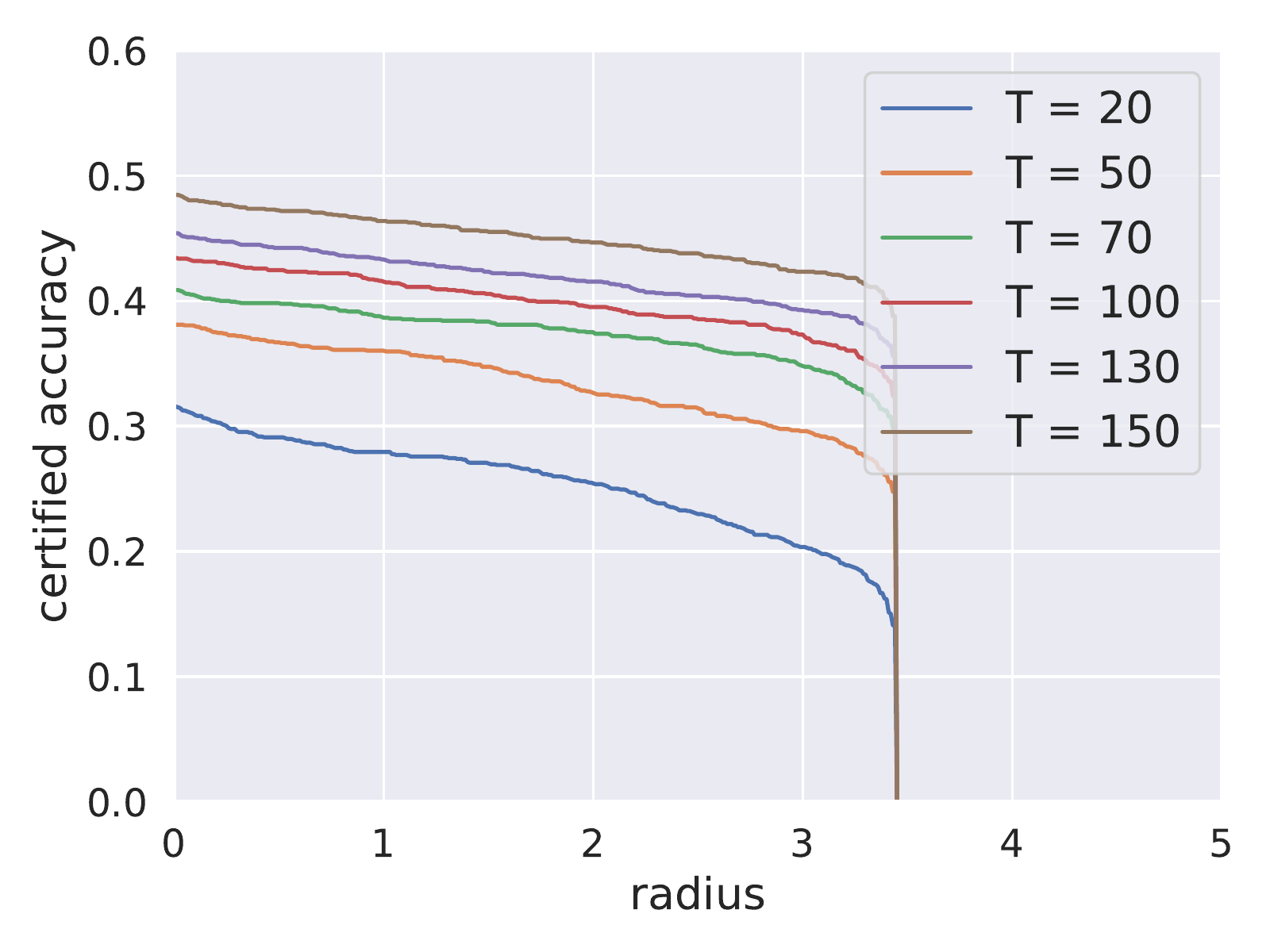}
	
}
\vspace{-5mm}
\caption{\loan{}~(left) and \emnist{}~(right) test set certified accuracy as $T$ is varied.} 
\label{fig:cer_acc_T} 
%\vspace{-4mm}
\end{figure}

\begin{figure}[t!]
\setlength{\belowcaptionskip}{-4mm}
\subfigure
{
	\includegraphics[scale=0.24]{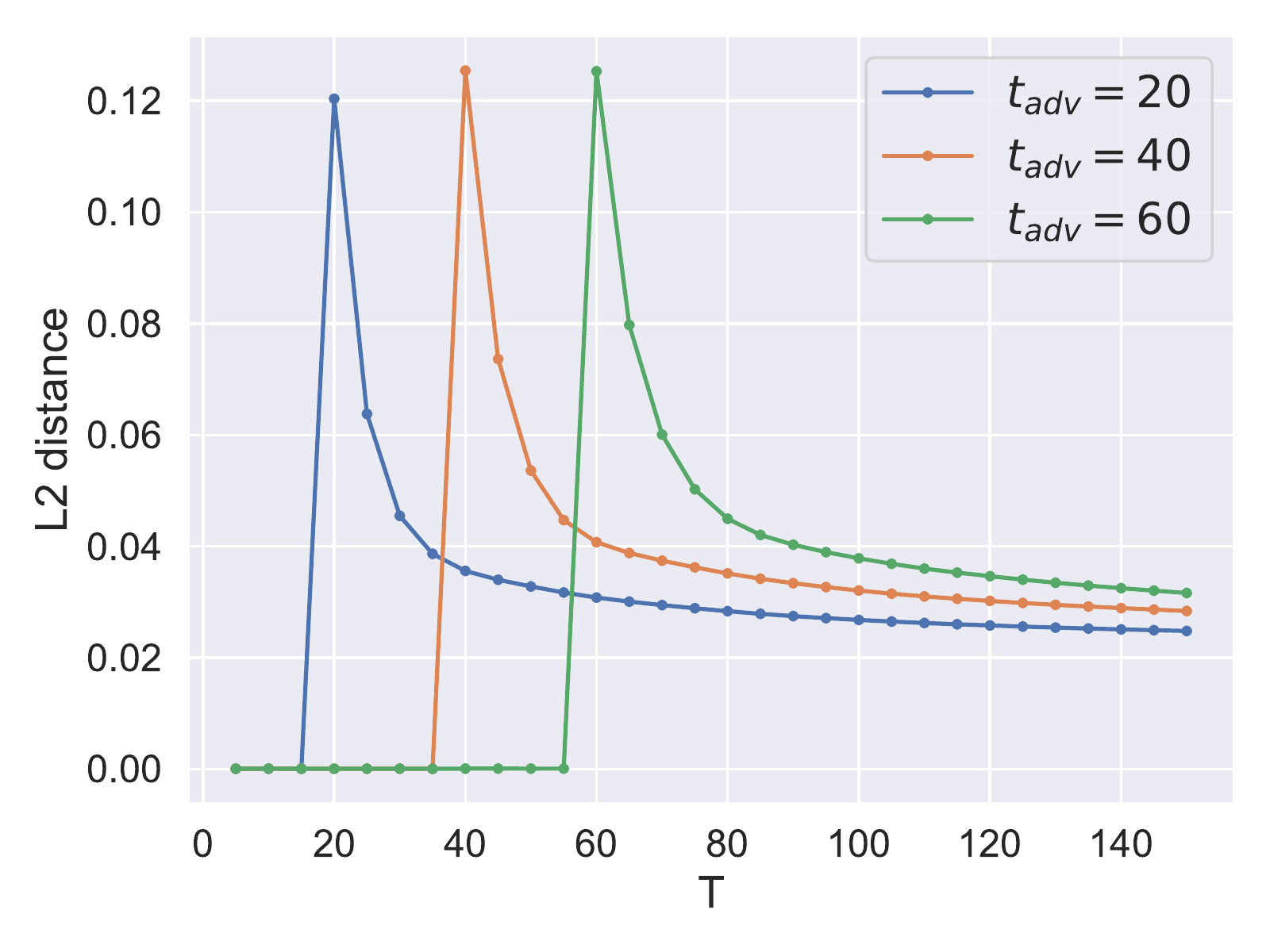}
	
}
\subfigure
{
	\includegraphics[scale=0.24]{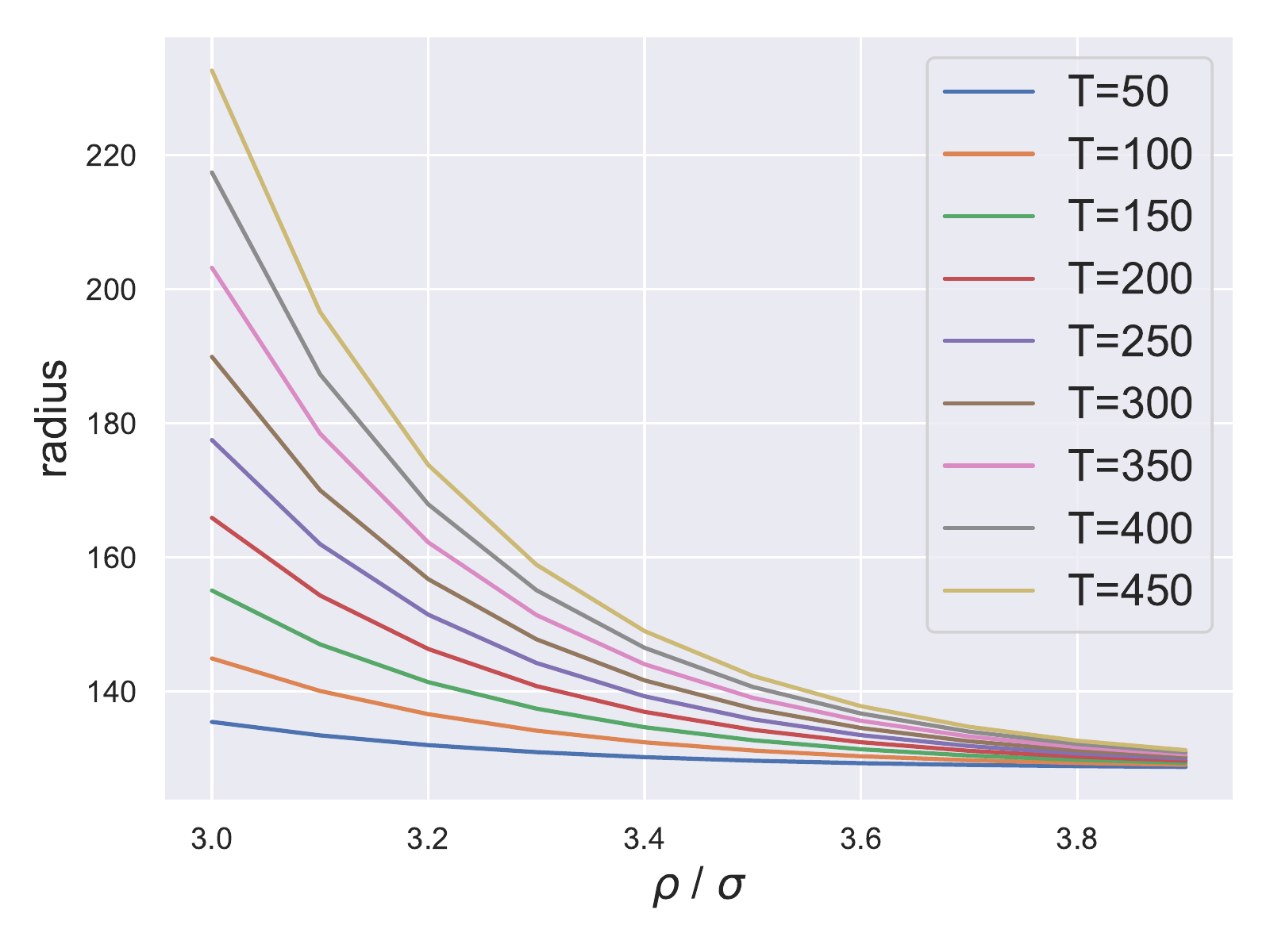}
}
\vspace{-5mm}
\caption{(a) The $\ell_2$ distance of the global models between the backdoored training process and the benign training process. (b) Numerical analysis of the standard Gaussian CDF $\Phi(\cdot)$.} 
\label{fig:global_l2distance_numerical_issue} 
%\vspace{-4mm}
\end{figure}

\paragraph{Effect of Training Rounds}
According to Figure~\ref{fig:cer_acc_T}, the \ceracc{} is higher when $T$ is larger. 
However, the largest radius that can be certified for the test set does not increase. We note that this is due to numerical issues of the standard Gaussian CDF $\Phi(\cdot)$. 
As we mentioned in Section \ref{sec:main_results}, the continued multiplication in the denominator of Eq.~\ref{eq:certified_radius_R} will not achieve 0 in practice. Otherwise the certified radius $\cerradius$ goes to $\infty$ as $ T \rightarrow \infty$ since $0 \le 2\Phi(\rho/\sigma) -1 \le 1$.

To verify our argument, we fix $\underline{p_A} $ and  $ \overline{p_B}$ to be 0.7 and 0.1, use default values for other parameters, and study the relationship between $\rho/\sigma$, $T$ and $\cerradius$ in  Figure~\ref{fig:global_l2distance_numerical_issue}(b). When $\rho/\sigma$ is larger than certain threshold, the certified radius $\cerradius$ does not change much when $T$ increases. If one wishes to increase $T$ for improving certified radius, then we suggest to keep $\rho/\sigma$ smaller than the threshold to make effect.
The increased \ceracc{} when $T$ is large in Figure~\ref{fig:cer_acc_T} could be attributed to improved model performance up to convergence, so the margin between $\underline{p_A} - \overline{p_B}$ is widened. 

We also study the error tolerance $\alpha$ and the number of noisy models $M$ in Appendix \ref{ap:more_results_clean_testset}. Larger $M$ yields larger certified radius, and the certified radius is not very sensitive to $\alpha$.

\paragraph{Empirical Evaluation on Model Closeness}
Our theorems are derived based on the analysis in comparison to a ``virtual" benign training process.  Empirically, we train such FL global model under the benign training process and compare the $\ell_2$ distance between the clean global model and the backdoored global model at every round. In Figure~\ref{fig:global_l2distance_numerical_issue}(a), one attacker performs model replacement attack on \mnist{} at round $\tadv$= $\{20,40,60\}$ respectively. We can observe that the plotted $\ell_2$ distance over the FL training rounds after $\tadv$ is decreasing, which echos our assumption that because all clients behave normal and use their clean local datasets to purify the global model after $\tadv$, the global models between two training process become close.  This observation also can justify the model closeness statement in Theorem \ref{therom:divergence_round_T_main}.

%This observation also can justify the contraction property of our \markovkernel{}, i.e., $\eta_f(K_t)<1$.

\section{Conclusion}
This paper establishes the first framework (CRFL) on certifiably robust federated learning against backdoor attacks. CRFL employs model parameter clipping and perturbing during training, and uses model parameter smoothing during testing, to certify conditions under which a backdoored model will give consistent predictions with an oracle clean model.
Our theoretical analysis characterizes the relation between certified robustness and federated learning parameters, which are empirically verified on three different datasets.

% \section*{Software and Data}

% If a paper is accepted, we strongly encourage the publication of software and data with the
% camera-ready version of the paper whenever appropriate. This can be
% done by including a URL in the camera-ready copy. However, \textbf{do not}
% include URLs that reveal your institution or identity in your
% submission for review. Instead, provide an anonymous URL or upload
% the material as ``Supplementary Material'' into the CMT reviewing
% system. Note that reviewers are not required to look at this material
% when writing their review.

% Acknowledgements should only appear in the accepted version.
\section*{Acknowledgements}

This work is partially supported by NSF grant No.1910100, NSF CNS 20-46726 CAR, Amazon Research Award, IBM-ILLINOIS Center for Cognitive Computing Systems Research (C3SR) – a research collaboration as part of the IBM AI Horizons Network.

% \textbf{Do not} include acknowledgements in the initial version of
% the paper submitted for blind review.

% If a paper is accepted, the final camera-ready version can (and
% probably should) include acknowledgements. In this case, please
% place such acknowledgements in an unnumbered section at the
% end of the paper. Typically, this will include thanks to reviewers
% who gave useful comments, to colleagues who contributed to the ideas,
% and to funding agencies and corporate sponsors that provided financial
% support.

% In the unusual situation where you want a paper to appear in the
% references without citing it in the main text, use \nocite
%$\nocite{langley00}

%\bibliography{references}

\bibliographystyle{icml2021}

%%%%%%%%%%%%%%%%%%%%%%%%%%%%%%%%%%%%%%%%%%%%%%%%%%%%%%%%%%%%%%%%%%%%%%%%%%%%%%%
%%%%%%%%%%%%%%%%%%%%%%%%%%%%%%%%%%%%%%%%%%%%%%%%%%%%%%%%%%%%%%%%%%%%%%%%%%%%%%%
% DELETE THIS PART. DO NOT PLACE CONTENT AFTER THE REFERENCES!
%%%%%%%%%%%%%%%%%%%%%%%%%%%%%%%%%%%%%%%%%%%%%%%%%%%%%%%%%%%%%%%%%%%%%%%%%%%%%%%
%%%%%%%%%%%%%%%%%%%%%%%%%%%%%%%%%%%%%%%%%%%%%%%%%%%%%%%%%%%%%%%%%%%%%%%%%%%%%%%

\onecolumn 
\appendix

\section*{Appendix}
% \icmltitle{Appendix}
% {\LARGE Appendix}\\
The Appendix is organized as follows:
\begin{itemize}[leftmargin=*,itemsep=-0.5mm]
    \vspace{-1em}
    \item Appendix~\ref{sec:app_exp_details} provides more details on experimental setups for training,  presents the effect of Monte Carlo estimation and runtime of attacks, and reports the results on backdoored test set.
    \item Appendix~\ref{sec_app_modelclossness} provides proofs for our Theorem~\ref{therom:divergence_round_T_main} and Lemma~\ref{lm:L_z_for_multiclass_logistic_regression} related to model closeness.
    \item Appendix~\ref{sec_app_param_smothing}  gives proofs for our Theorem~\ref{theorem:param_smoothing} related to the parameter smoothing.
\end{itemize}

\section{Experimental Details} \label{sec:app_exp_details}
\subsection{More Details on Experiment Setup for Training}
%Although our approach can be applied more broadly, 
We focus on multi-class logistic regression (one linear layer with softmax function and cross-entropy loss), which is a convex classification problem.
We train the FL system following our \ourframework{} framework with three datasets: Lending Club Loan Data (\loan{})~\citep{loandataset}, \mnist{}~\cite{lecun-mnisthandwrittendigit-2010}, and \emnist{}~\cite{cohen2017emnist}. The financial dataset \loan{} is a tabular dataset that contains the current loan status (Current, Late, Fully Paid, etc.) and latest payment information, which can be used for loan status prediction. It consists of 1,808,534 data samples and we divide them by 51 US states, each of whom represents a client in FL, hence the data distribution is non-i.i.d. 80\% of data samples are used for training and the rest is for testing. \emnist{} is an extended MNIST dataset that contains 10 digits and 37 letters. In the two image datasets, we split the training data for FL clients in an i.i.d. manner. The data description and other parameter setups are summarized in Table~\ref{tb: Dataset description}. For these datasets, the local learning rate $\eta_i$ is 0.001 for all clients. The server performs an adaptive norm clipping threshold $\rho_t$ that increases by time so that the normal learning ability of the model can be preserved (described in Table~\ref{tb: Dataset description}), and sets the fixed training noise level $\sigma_t=0.01$ ($t<T$).  When the clipping threshold is not a fixed value, $L_{\mathcal Z}$ is calculated based on $\rho_{\tadv}$ following Lemma~\ref{lm:L_z_for_multiclass_logistic_regression} for our experiment,.
%For \mnist{} and \emnist{}, since the dataset size is fixed, when the number of clients $N$ increases, the local dataset size decreases. We fix the local iterations $\tau_i$.

%backdoor 

\begin{wrapfigure}{r}{0.15\textwidth}
   \centering
     \begin{subfigure}
         \centering
         \includegraphics[width=0.1\textwidth]{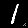}     
 \end{subfigure}
     
\caption{Backdoor pattern for image datasets}
\label{fig:backdoor_pattern} 

\end{wrapfigure}

Regarding the attack setting, by default, we set $R=1$, and if there are more adversarial clients, we use same parameters setups for all of them. For the pixel-pattern backdoor in \mnist{} and \emnist{}, the attackers add the backdoor pattern (see Figure.~\ref{fig:backdoor_pattern} for an example) in images and swap the label of any sample with such patterns into the target label, which is ``digit 0''. Similarly, for the preprocessed\footnote{We preprocess \loan{} by dropping the features which are not digital and cannot be one-hot encoded, and then normalizing the rest 90 features and so that the value of each feature is between 0 and 1.} \loan{} dataset, the attackers increase the value of the two features (i.e., num\_tl\_120dpd\_2m, num\_tl\_90g\_dpd\_24m) as a backdoor pattern, and swap label to ``Does not meet the credit policy. Status:Fully Paid''. 
Since we adopt Lemma~\ref{lm:L_z_for_multiclass_logistic_regression}  for our experiments, we focus on the backdoor pattern $\|\delta_i\|=\|\delta_{i_x}\|$.
The magnitude of backdoored pattern in every example is $\|\delta_i\|=0.1$ on three datasets. Every attacker's batch is mixed with correctly labeled data and such backdoored data with poison ratio $q_{B_i} / n_{B_i}$.

We train the FL global model until convergence and then use our certification in Algorithm~\ref{alg:certify_parameters_perturbation} for robustness evaluation.

\begin{table}[htbp]
\centering
\scalebox{0.85}{
\begin{tabular}{l c c c c c c c c c}
\toprule
Dataset& Classes & $\#$Training samples & Features & $N$ & $q_{B_i} / n_{B_i}$  & $\tau_i$ & $\gamma_i$ & $\tadv$& $\rho_t$   \\
\midrule 
\loan{}& 9& 1446827 & 91 & 20 & 40/800 &  143 & 10 & 6 &  0.025t+2  \\
\mnist{}& 10& 60000 & 784 & 20 & 5/100 &  30 & 10& 10 & 0.1t+2\\
\emnist{}& 47& 697932 & 784 & 50 & 5/200 & 70 & 20& 10 & 0.25t+4\\
\bottomrule 
\end{tabular}}
\caption{Dataset description and parameters}
\label{tb: Dataset description}   
\end{table}
%
%
%
%\begin{figure}[h]
%\begin{minipage}[h]{.35\linewidth}
%\centering
%\includegraphics[width=0.25\linewidth]{experiments/mnist_backdoor_pattern.png}
%\caption{\small Backdoor pattern for image datasets}
%\label{fig:backdoor_pattern} 
%\end{minipage}
%\begin{minipage}[h]{.5\linewidth}
%\centering
%\scalebox{0.70}{
%\begin{tabular}{l c c c c c c c c c}
%\toprule
%Dataset& Classes & $\#$Training samples & Features & $N$ & $q_{B_i} / n_{B_i}$  & $\tau_i$ & $\gamma_i$ & $\tadv$& $\rho_t$   \\
%\midrule 
%\loan{}& 9& 1446827 & 91 & 20 & 40/800 &  143 & 10 & 6 &  0.025t+2  \\
%\mnist{}& 10& 60000 & 784 & 20 & 5/100 &  30 & 10& 10 & 0.1t+2\\
%\emnist{}& 47& 697932 & 784 & 50 & 5/200 & 70 & 20& 10 & 0.25t+4\\
%\bottomrule 
%\end{tabular}}
%\captionof{table}{\small Dataset description and parameters}
%\label{tb: Dataset description}   
%\end{minipage}
%\end{figure}
%
%

\subsection{More Experimental Results on Clean Test Set}
\label{ap:more_results_clean_testset}

\paragraph{Effect of Monte Carlo estimation}
Recall that we use $M$ and $\alpha$ when calculating the lower bound $\underline{p_A}$ and the upper bound $\overline{p_B}$.
Figure~\ref{fig:M_alpha} (left) shows that larger number $M$ of noisy models used for certification can result in larger certified radius. Figure~\ref{fig:M_alpha} (middle) presents that the certified radius is smaller when the error tolerance $\alpha$ is smaller but overall the \ceracc{} is not very sensitive to $\alpha$.

\begin{figure}[t]
\centering
\setlength{\belowcaptionskip}{-4mm}
\subfigure
{
	\includegraphics[scale=0.3]{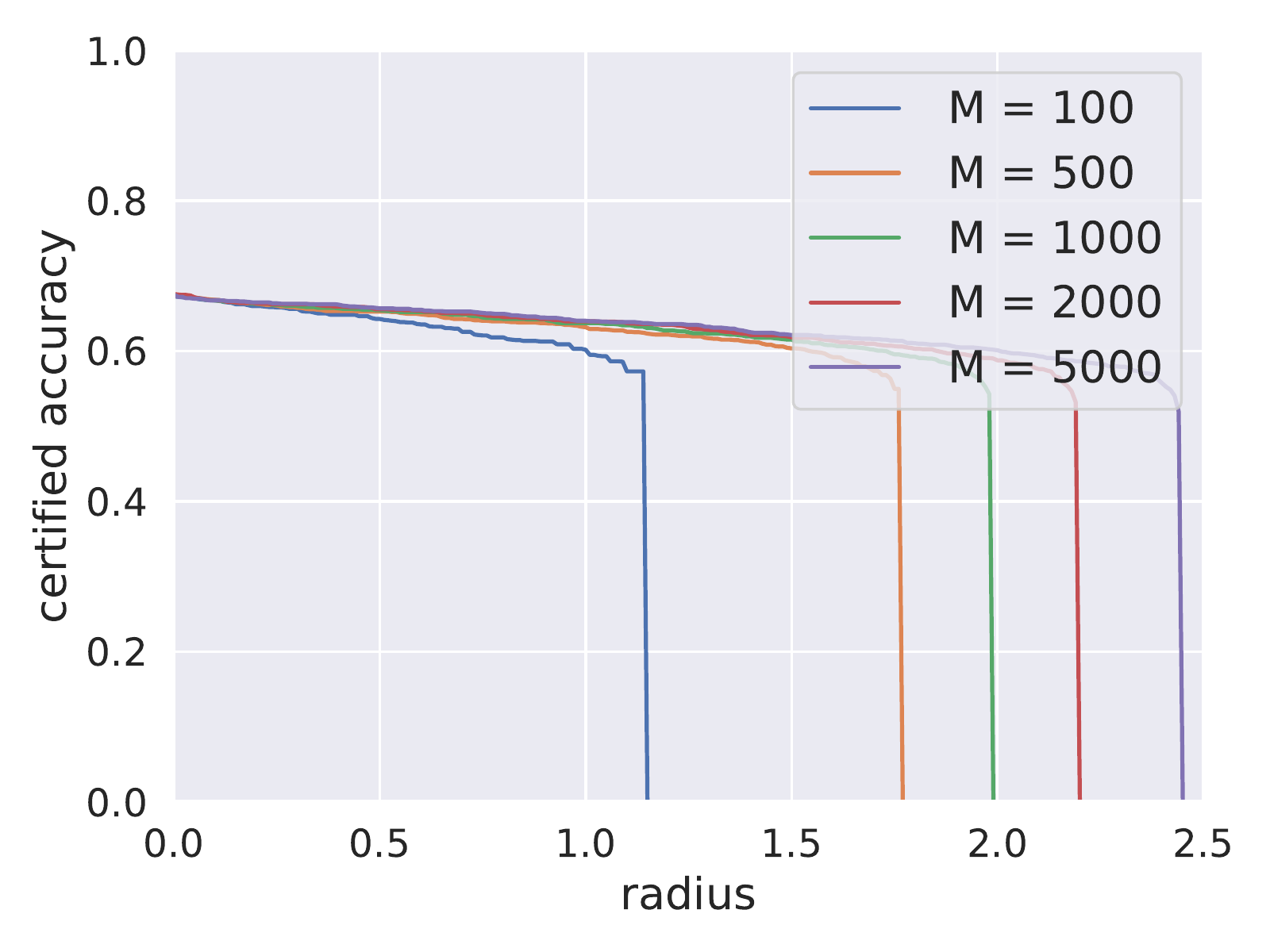}
}
\subfigure
{
	\includegraphics[scale=0.3]{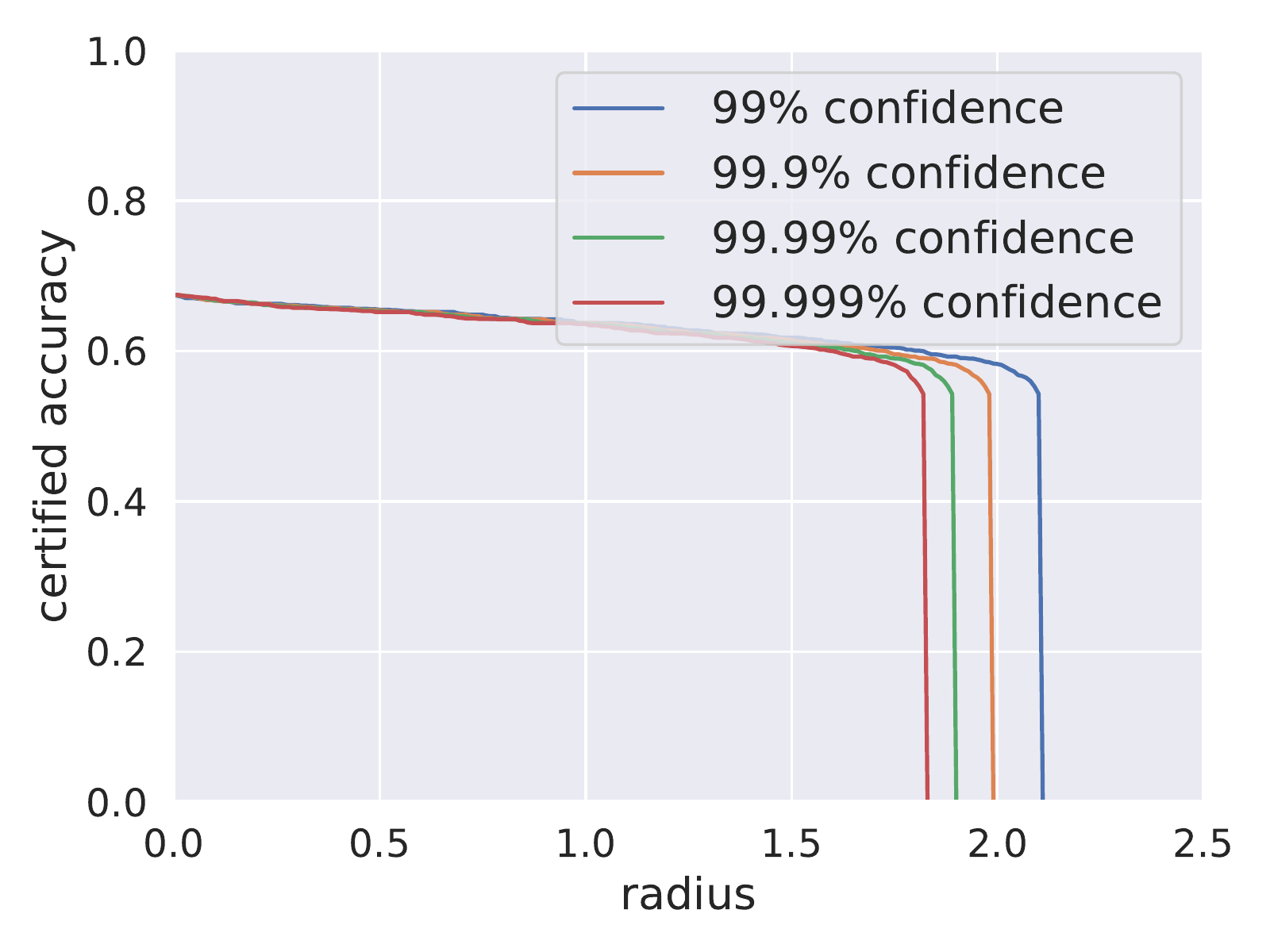}
}
\subfigure
{
	\includegraphics[scale=0.3]{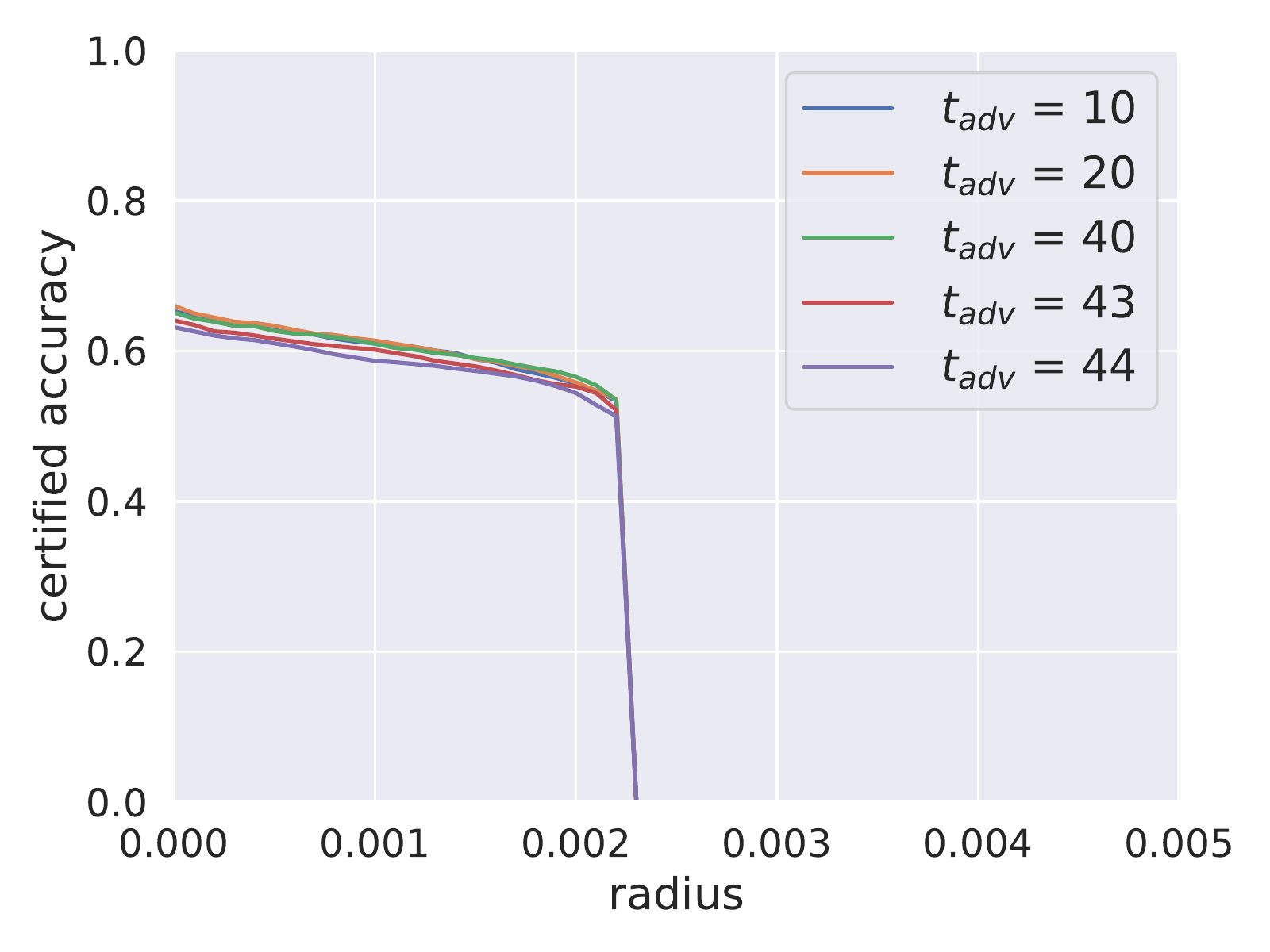}
}
%\vspace{-4mm}
\caption{Left: Certified accuracy on \mnist{} with different number of smoothed models $M$ for certification. Middle: Certified accuracy on \mnist{} with different error tolerance $\alpha$ for certification. Right: Certified accuracy with different $\tadv$ on \mnist{}.} 
\label{fig:M_alpha} 
%\vspace{-1mm}
\end{figure}

\paragraph{Effect of Attack Timing $\tadv$}
For Figure~\ref{fig:M_alpha} (right), we use a strong attack ($\gamma$=100, $R$=2) and report the certified accuracy with different $\tadv$.
As described in Table~\ref{tb: Dataset description}, $\rho_{\tadv}$ increases with $\tadv$, and $L_{\mathcal Z}$ is calculated based on $\rho_{\tadv}$. In order to control variable, we use the same, loose $L_z$ which is calculated based on $\rho_{44}$ for all $\tadv=10,20,40,43,44$.
The results show that the certified radius is not sensitive to the attack timing $\tadv$ after training sufficient number of rounds with clean datasets after $\tadv$. 
%Figure~\ref{fig:tadv} also shows that the \ceracc{} is low if the adversarial clients attack late, which echos the findings in \cite{bagdasaryan2020backdoor} that it is better to attack late when the global model converges.

\subsection{Experimental Results on Backdoored Test Set}
In this section, we report the \ceracc{} on the backdoored test set. For every test sample, the backdoor pattern is added to the input while the label is still correct. As shown in Figure~\ref{fig:cer_acc_backdoor_mnist_attack_ability} and \ref{fig:cer_acc_backdoor_mnist_others}, the results are similar to the results on the clean test set. 

\begin{figure}[t]
\centering
\setlength{\belowcaptionskip}{-4mm}
\subfigure[Different number of adversarial clients $R$]
{
	\includegraphics[scale=0.24]{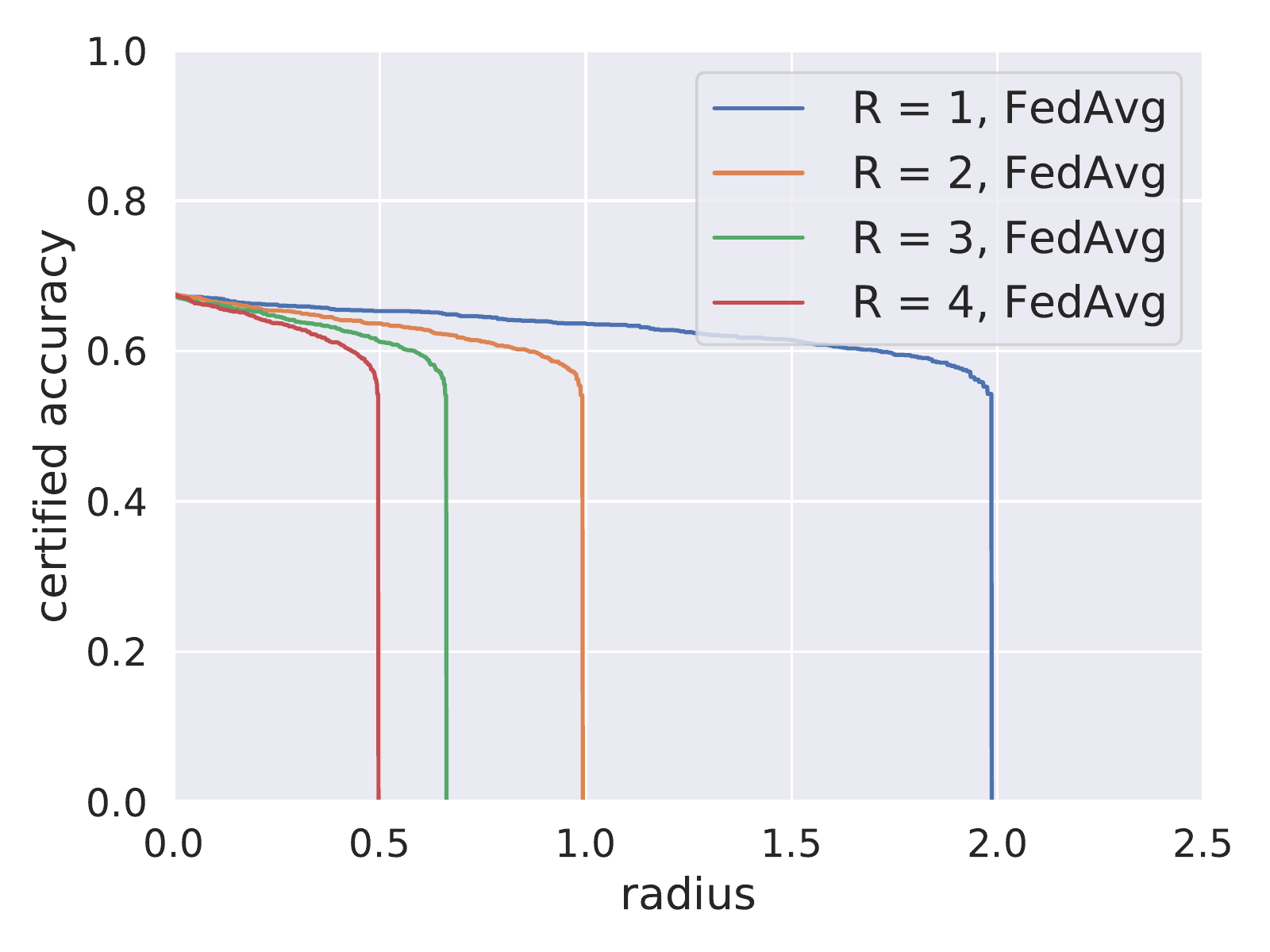}
}
\subfigure[Different $R$ under robust aggregation RFA~\cite{pillutla2019robustrfa}]
{
	\includegraphics[scale=0.24]{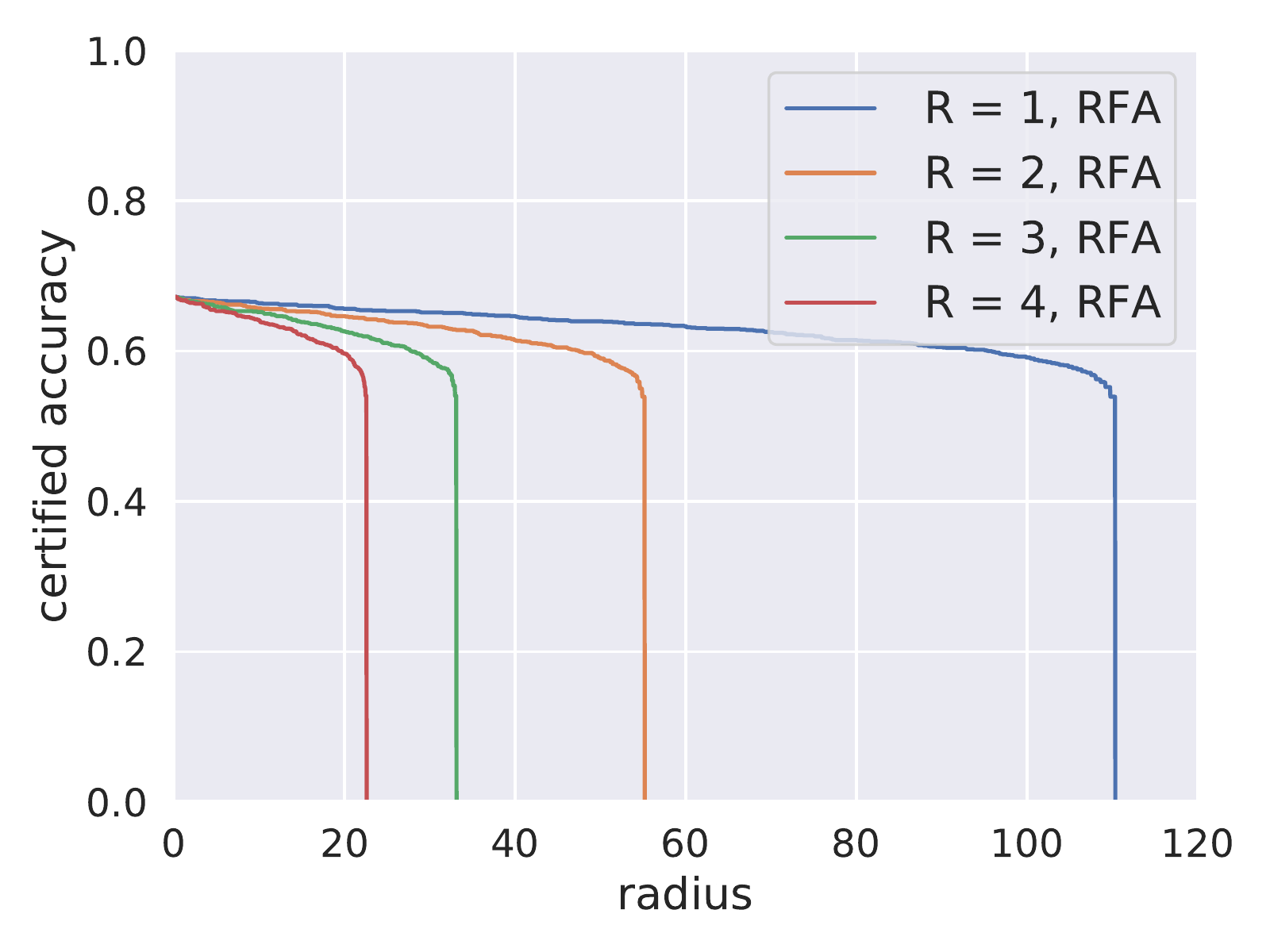}
}
\subfigure[Different scaling factor $\gamma$]
{
	\includegraphics[scale=0.24]{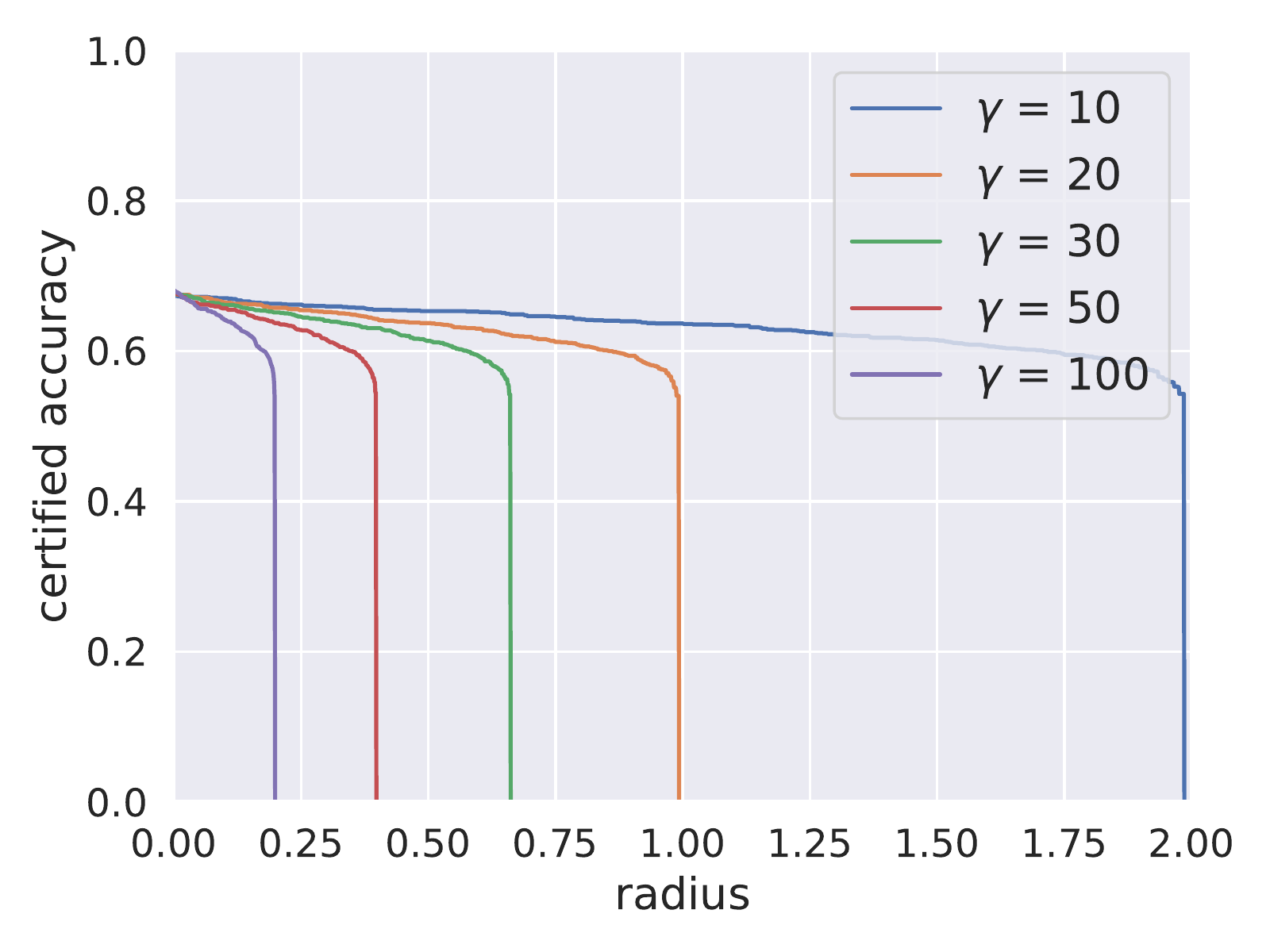}
}
\subfigure[Different poison ratio $q_{B_i}/n_{B_i}$]
{
	\includegraphics[scale=0.24]{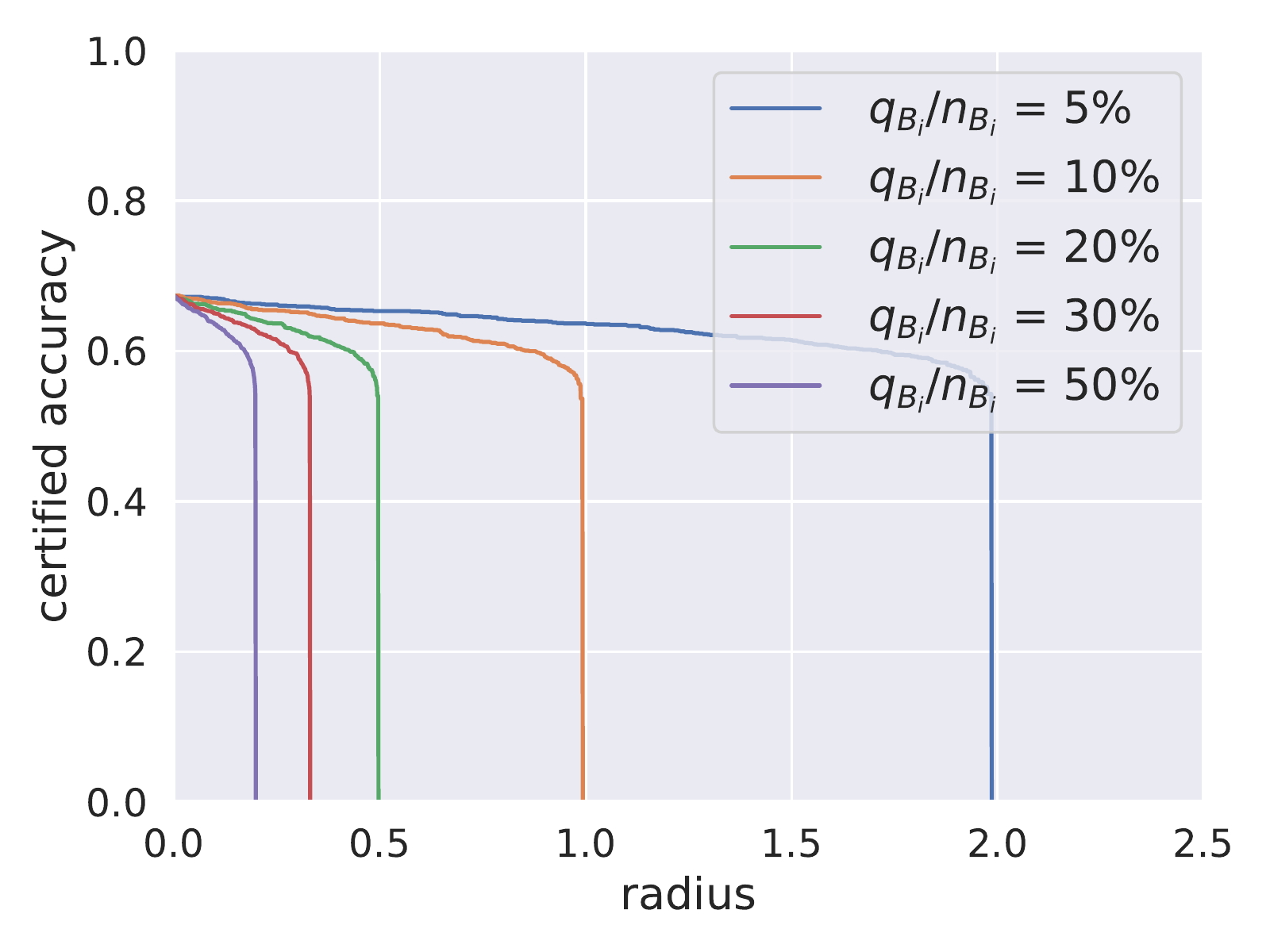}
}
%\vspace{-4mm}
\caption{Certified accuracy with different attack ability (a)(c)(d) and certified accuracy under robust aggregation RFA~\cite{pillutla2019robustrfa} (b) on \mnist{} backdoored test set.} 
\label{fig:cer_acc_backdoor_mnist_attack_ability} 
%\vspace{-1mm}
\end{figure}

\begin{figure}[t]
\centering
\setlength{\belowcaptionskip}{-4mm}
\subfigure[Different training time noise level $\sigma$]
{
	\includegraphics[scale=0.3]{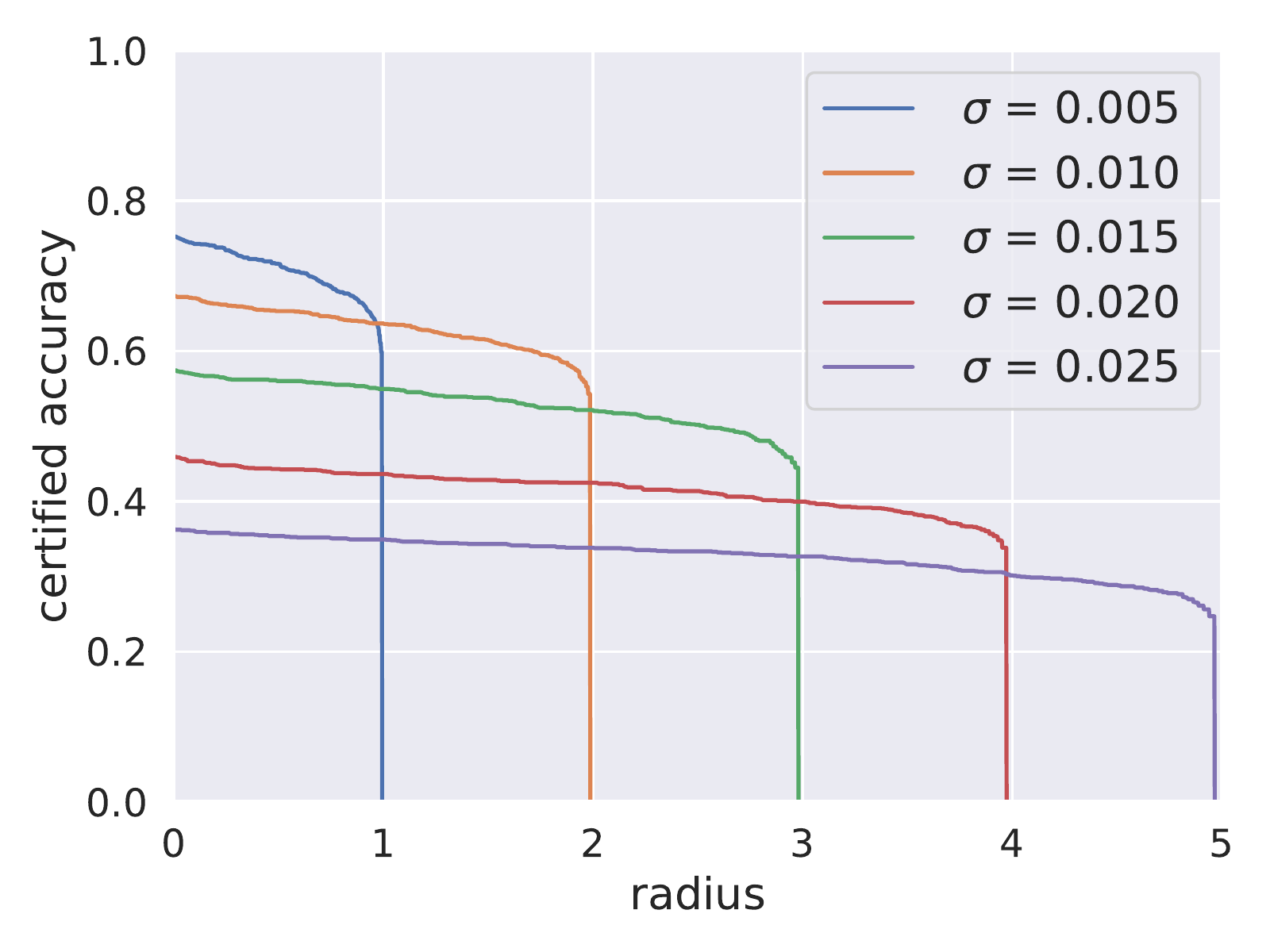}
}
\subfigure[Different number of total clients $N$]
{
	\includegraphics[scale=0.3]{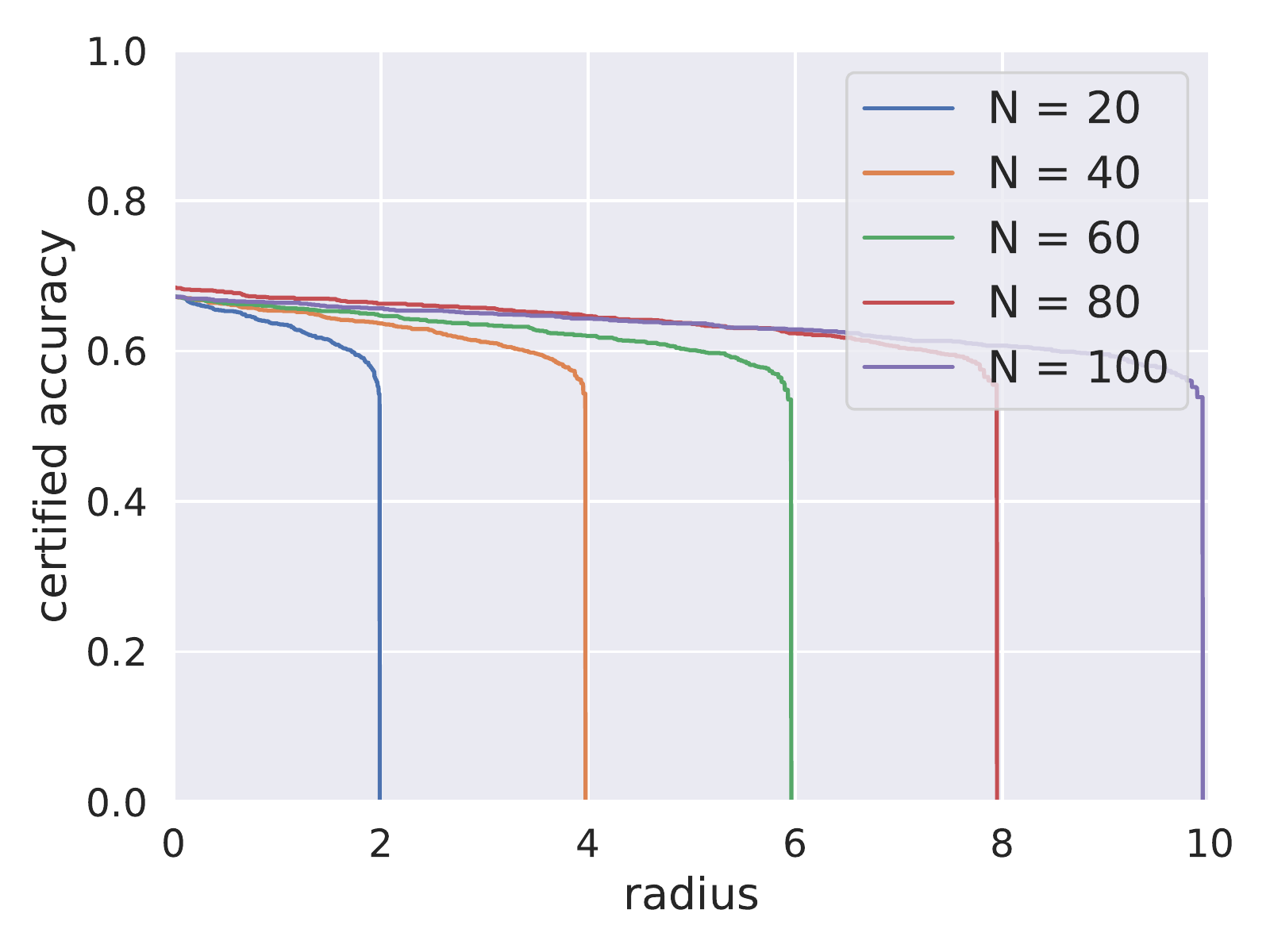}
}
\subfigure[Different training rounds $T$]
{
	\includegraphics[scale=0.3]{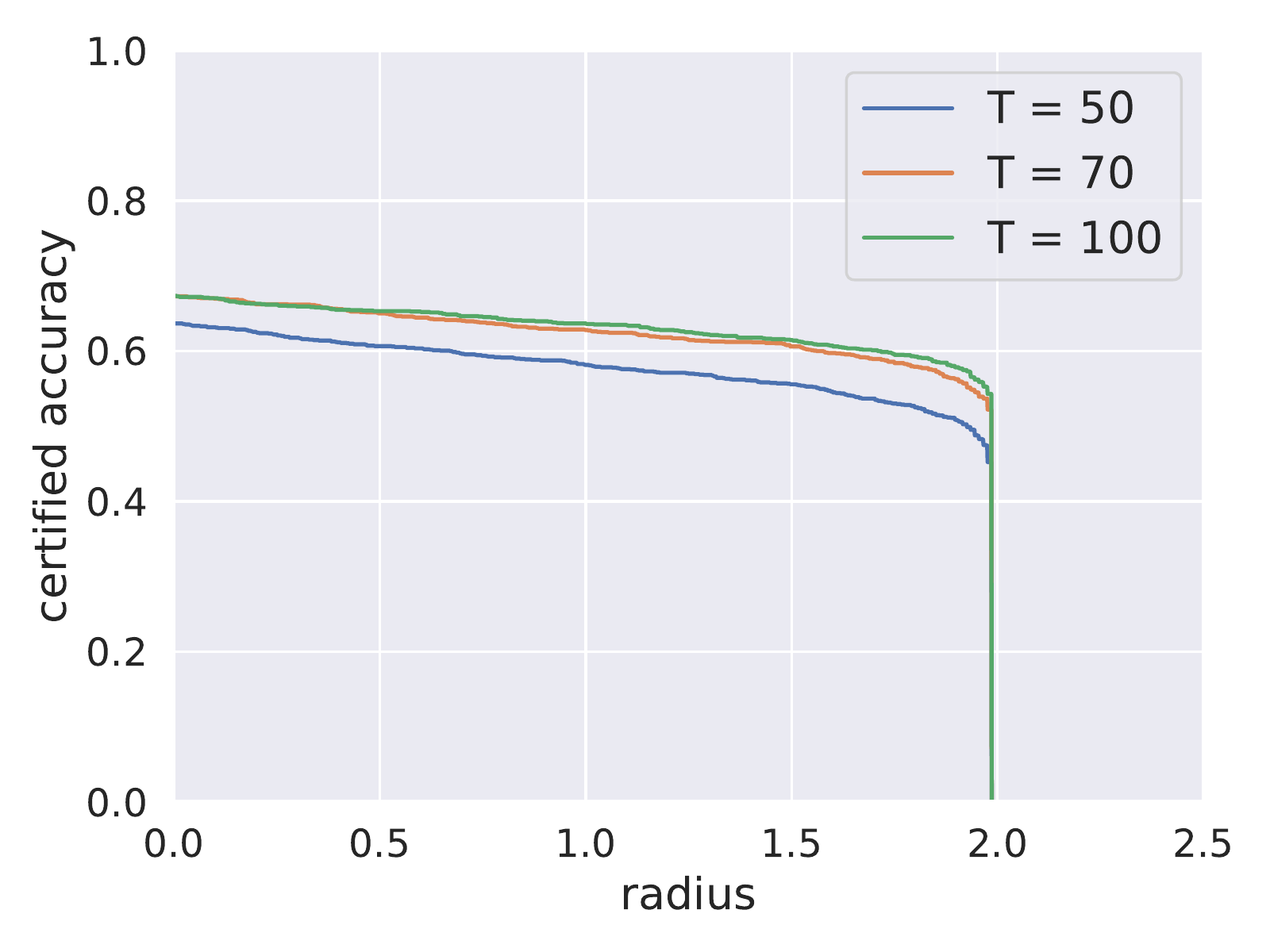}
}
%\vspace{-4mm}
\caption{Certified accuracy with different $\sigma$ (a), $N$ (b) and $T$ (c) on \mnist{} backdoored test set.} 
\label{fig:cer_acc_backdoor_mnist_others} 
%\vspace{-1mm}
\end{figure}

\section{Proofs of Model Closeness} 
\label{sec_app_modelclossness}

In this section, 
we will present preliminaries on \fdivergence{}, define the problem of model closeness and then provide the detailed proofs for our Theorem ~\ref{therom:divergence_round_T_main} and Lemma ~\ref{lm:L_z_for_multiclass_logistic_regression} that are related to model closeness.
 Let us list the notations used in the paper and the Appendix in Table~\ref{tb: notations}.

\begin{table}[htbp]
\scalebox{0.7}{
\begin{tabular}{l c}
\toprule
Notation & Description   \\
\midrule 
$\mathcal{M}(\cdot)$ & the training protocol in Algorithm~\ref{alg:certify_parameters_perturbation} \\
$z_j^i:=\{x_j^i,y_j^i\}$ & $j$-th data sample at client $i$ with input $x_j^i$ and label $y_j^i$\\
${z'}_j^i:=\{x_j^i+{{\delta_i}_x}, y_j^i+{{\delta_i}_y}\}$ & backdoored version of $z_j^i$ where ${\delta_i}_x$ is input backdoor pattern and ${\delta_i}_y$ is label flipping effect \\
$D:=\{S_1,S_2,\ldots,S_N \}$ & Clean training dataset, the union of clean local dataset of $N$ clients\\
$D'= D+ \{\{\delta_i\}_{j=1}^{q_i}\}_{i=1}^R. $ & poisoned training dataset in round $\tadv$ with $R$ attackers and $q_i$ poisoned samples in $i$-th attacker's local dataset\\
$\mathcal{M}(D)$&    the clipped global model obtained from $\mathcal{M}$ using $D$\\
$\mathcal{M}(D')$ & the clipped global model obtained from $\mathcal{M}$ that uses $D'$ at round $\tadv$ and uses $D$ at round $t \neq \tadv$\\
$g_i(w)= g_i(  w; \xi^i) $ & local gradients at client $i$ w.r.t $w$ with clean batch $\xi^i$\\
${g'}_i(w)={g}_i(w; {\xi'}^i) $ & local gradients at client $i$ w.r.t $w$ with poisoned batch ${\xi'}^i$\\
$\mathcal{B}^i \triangleq {g'}_i(w) -  g_i(w) $ & the difference between poisoned local gradient and benign local gradient w.r.t same model parameters $w$ \\
$w_{s}^i$& client $i$'s local model parameters at local iteration $s$ \\
$w_{t} \gets \widetilde w_{t-1}+ \sum \limits_{i = 1}^N p_i (w_{t\tau_i}^i-\widetilde w_{t-1}) $ & aggregated global model at round $t$\\
$ \mathrm{Clip}_{\rho_t}(w_{t}) \gets w_{t}/ \max(1, \frac{\|w_{t}\|}{\rho_t})$ & clipped global model with model parameters norm threshold $\rho_t$ at round $t$\\
$\widetilde w_{t} \gets \mathrm{Clip}_{\rho_t}(w_{t}) + \epsilon_t$ & global model at round $t$ that is perturbed by noise $\epsilon_t$ \\
$h_s$ & the smoothed classifier transferred from the base classifier $h$\\
$p_c={H_s^{c}}(w;x_{test})= \mathbb{P}_{W\sim \mu(w)}[\clsfier(W;x_{test})=c]$ & the probability (the majority votes) of class $c$ for the given $w$ and $x_{test}$\\
${\smoothclsfier}(w;x_{test})= \arg \max_{c\in \mathcal{Y}  } {H_s^{c}}(w;x_{test})$ &the mostly probable label among all classes (the majority vote winner) for the given $w$ and $x_{test}$ \\
\bottomrule 
\end{tabular}
}
 \caption{Table of notations}
  \label{tb: notations}   
  \centering
\end{table}

Throughout this paper, ``benign training process'' is the process that trains with clean dataset $D$ for $T$ rounds and outputs $\mathcal{M}(D)$; ``backdoored training process'' is the process that trains with poisoned dataset $D'$ at round $\tadv$, trains with original clean dataset when $t\neq \tadv$, and outputs $\mathcal{M}(D')$.

\subsection{Preliminaries on \fdivergence{}} \label{sec:f-div preliminaris}
Let $f:(0,\infty) \rightarrow \mathbb{R}$ be a convex function with $f(1)=0$, $\nu$ and $\rho$ be two probability distributions. Then \fdivergence{} is defined as 
\begin{equation}
D_f(\nu || \rho) = E_{W \sim \rho }\left[f(\frac{\nu(W)}{\rho(W)})\right].
\end{equation}
Common \fdivergence{} includes Total variation $f(x)=\frac{1}{2}\|x-1\|$ and Kullback-Leibler (KL) divergence $f(x)=x \log x$.

\begin{lemma} \label{lemma:divergence_guassian}
For $m_1, m_2 \in \mathbb{R}^d $ and $\sigma \textgreater 0$, let $\mathcal N_1 $and $\mathcal N_2$ denote Gaussian distribution $\mathcal N_1(m_1,\sigma^2I)$ and $\mathcal N_2(m_2,\sigma^2I)$, respectively. Then,
\begin{align}
D_{KL}(\mathcal N_1  || \mathcal N_2)=  \frac{\|m_2-m_1\|^2}{2\sigma^2},
\end{align}
\begin{align}
D_{TV}(\mathcal N_1  || \mathcal N_2)= 2\Phi \left(  \frac{\|m_2-m_1\|}{\sigma} \right)-1,
\end{align}
where $\Phi$ is the CDF of the Gaussian distribution.
\end{lemma}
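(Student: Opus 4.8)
The plan is to verify both identities by direct computation, exploiting that $\mathcal{N}_1$ and $\mathcal{N}_2$ share the isotropic covariance $\sigma^2 I$ and that the Gaussian family is invariant under translations and rotations.

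For the KL divergence I would expand the definition $D_{KL}(\mathcal{N}_1\|\mathcal{N}_2)=\mathbb{E}_{W\sim\mathcal{N}_1}\bigl[\log(p_1(W)/p_2(W))\bigr]$, where $p_i$ denotes the density of $\mathcal{N}_i$. Because the two densities carry the same normalizing constant and the same $\sigma$, the log-ratio reduces to $\tfrac{1}{2\sigma^2}\bigl(\|W-m_2\|^2-\|W-m_1\|^2\bigr)$; expanding the squares, the terms quadratic in $W$ cancel, leaving a term linear in $W$ and a constant. Taking the expectation and using $\mathbb{E}_{W\sim\mathcal{N}_1}[W]=m_1$, these collapse to $\tfrac{1}{2\sigma^2}\|m_1-m_2\|^2$, as claimed. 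Alternatively one may quote the closed form for the KL divergence between two multivariate Gaussians and specialize to equal covariances $\sigma^2 I$, where the trace term becomes $d$ and the log-determinant term vanishes.

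For the total variation distance I would first reduce to one dimension. Since total variation is invariant under invertible measurable maps, apply an orthogonal change of coordinates sending $m_2-m_1$ to a multiple of the first basis vector and translate so that $m_1=0$ and $m_2=(\|m_2-m_1\|,0,\dots,0)$. After this the two densities factorize over coordinates and agree on coordinates $2,\dots,d$, so $D_{TV}(\mathcal{N}_1\|\mathcal{N}_2)$ equals the total variation distance between the one-dimensional Gaussians $\mathcal{N}(0,\sigma^2)$ and $\mathcal{N}(\|m_2-m_1\|,\sigma^2)$. For the latter I would locate the point where one density overtakes the other (the midpoint of the two means), split $\tfrac12\int|p_1-p_2|$ at that point, and evaluate each half-integral through the Gaussian CDF $\Phi$; summing the pieces yields the stated expression in terms of $\Phi$.

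I do not anticipate a substantive obstacle, as both claims are elementary Gaussian calculus; the only steps needing care are (i) the tensorization argument that the total variation of two product measures agreeing in all but one coordinate equals the total variation of the differing marginals, and (ii) tracking the normalization convention of $D_{TV}$ induced by $f(x)=\tfrac12|x-1|$ when converting $\tfrac12\int|p_1-p_2|$ into an expression in $\Phi$.
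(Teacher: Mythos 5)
Your approach is correct, and in fact the paper offers no proof of this lemma at all (it is quoted as a standard Gaussian fact), so your direct computation is a perfectly good way to establish it: the KL calculation via the log-density ratio and $\mathbb{E}_{W\sim\mathcal{N}_1}[W]=m_1$ is exactly the standard argument, and the TV calculation via rotation/translation invariance, tensorization over the coordinates that agree, and splitting $\tfrac12\int|p_1-p_2|$ at the midpoint of the means is also the standard route. One point deserves emphasis: carried out faithfully, your midpoint computation yields $D_{TV}(\mathcal{N}_1\|\mathcal{N}_2)=2\Phi\bigl(\tfrac{\|m_2-m_1\|}{2\sigma}\bigr)-1$, with $2\sigma$ in the denominator, not $\sigma$ as printed in the statement; the densities cross at $\|m_2-m_1\|/2$, giving $\Phi\bigl(\tfrac{\|m_2-m_1\|}{2\sigma}\bigr)-\Phi\bigl(-\tfrac{\|m_2-m_1\|}{2\sigma}\bigr)$. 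This is a typo in the lemma as stated rather than a flaw in your argument: the paper's own later use of the formula (in bounding the contraction coefficient $\eta_{TV}(K_t)$, where it writes $2\Phi\bigl(\tfrac{\|w_3-w_4\|}{2\sigma_t}\bigr)-1$ and then takes the supremum over a ball of radius $\rho_t$, whose diameter $2\rho_t$ produces $2\Phi(\rho_t/\sigma_t)-1$) is consistent with the $2\sigma$ version that your derivation produces. So your proof is sound; just be aware that it proves the corrected constant, which is the one the paper actually relies on downstream.
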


The well-known data processing inequality~\cite{polyanskiy2015dissipation} for the relative entropy states that, for any convex function $f$ and any stochastic transformation (probability transition kernel), i.e., \markovkernel{} $K$, we have
$$
D_f(\nu K||\rho K) \le D_f(\nu ||\rho),
$$
where $\nu K$ denotes the push-forward of $\nu$ by $K$, i.e.,  $\nu K = \int \nu(dW) K(W)$. In other words, $D_f(\nu || \rho)$ decreases by post-processing. \cite{asoodeh2020differentially} extends it into machine learning and the operations in a \markovkernel{} contain one step of Stochastic Gradient Descent (SGD). 

To capture this effect, the quantity of the noisiness of a Markov operator~\cite{raginsky2016strong} for \fdivergence{}, i.e., contraction coefficient~\cite{asoodeh2020differentially}, is defined as
\begin{equation}\label{eq:contraction_def}
\eta_{f}(K) := \sup \limits_{\nu,\rho; {D_f(\nu||\rho)\ne 0}}\frac{D_f(\nu K||\rho K)}{D_f(\nu ||\rho)}.
\end{equation}

\begin{lemma}[Two-point characterization of Total variation~\cite{dobrushin1956central}] \label{lm:two_point_tv}
The supremum in the definition of $\eta_{TV}(K)$ can be restricted to point mass:
% a two-point characterization for $D_{TV}$:
% \chulin{it is proved when $f$ is TV-divergence in other papers,  we need to prove it for KL-divergence}
\begin{equation}
    \eta_{TV}(K) := \sup \limits_{y_1,y_2 \in y } D_{TV}(K(y_1)||K(y_2))
\end{equation}
\end{lemma}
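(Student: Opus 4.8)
\textbf{Proof proposal for Lemma~\ref{lm:two_point_tv} (two-point characterization of total variation contraction).}

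The plan is to show that the supremum in the definition of $\eta_{TV}(K)$ over \emph{all} pairs of distributions $\nu,\rho$ is in fact attained (in the limit) by pairs of point masses $\delta_{y_1},\delta_{y_2}$. The easy direction is immediate: point masses are a special case of probability distributions, so $\sup_{y_1,y_2} D_{TV}(K(y_1)\|K(y_2)) \le \eta_{TV}(K)$. The work is entirely in the reverse inequality, namely that for every pair $\nu,\rho$ one has $D_{TV}(\nu K \| \rho K) \le \big(\sup_{y_1,y_2} D_{TV}(K(y_1)\|K(y_2))\big)\, D_{TV}(\nu\|\rho)$.

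First I would use the dual/variational representation of total variation: $D_{TV}(\mu_1\|\mu_2) = \sup_{0 \le g \le 1} \big( \int g\, d\mu_1 - \int g\, d\mu_2\big)$, where the supremum is over measurable functions $g$ taking values in $[0,1]$. Applying this to $\nu K$ and $\rho K$, and writing $(\mu K)(g) = \int (Kg)(y)\, \mu(dy)$ where $(Kg)(y) = \int g(y')\, K(y, dy')$ is again a $[0,1]$-valued function, I get
\begin{equation*}
D_{TV}(\nu K \| \rho K) = \sup_{0\le g\le 1} \left( \int (Kg)\, d\nu - \int (Kg)\, d\rho \right).
\end{equation*}
Now the key step: use the Hahn--Jordan decomposition to write $\nu - \rho = (\nu-\rho)_+ - (\nu-\rho)_-$, two nonnegative measures of equal total mass $m := D_{TV}(\nu\|\rho)$ (when $m>0$; the case $m=0$ is trivial). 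Normalizing, let $\nu^* = (\nu-\rho)_+/m$ and $\rho^* = (\nu-\rho)_-/m$, which are genuine probability distributions with disjoint supports. Then $\int (Kg)\, d\nu - \int (Kg)\, d\rho = m\big(\int (Kg)\, d\nu^* - \int (Kg)\, d\rho^*\big)$, and since $Kg$ is $[0,1]$-valued, taking the supremum over $g$ gives $D_{TV}(\nu K\|\rho K) = m \cdot D_{TV}(\nu^* K \| \rho^* K)$. So it suffices to bound $D_{TV}(\nu^* K\|\rho^* K) \le \sup_{y_1,y_2} D_{TV}(K(y_1)\|K(y_2))$. For this, write $\nu^* K = \int K(y_1)\, \nu^*(dy_1)$ and $\rho^* K = \int K(y_2)\, \rho^*(dy_2)$ as mixtures, use the joint coupling $\nu^*\otimes\rho^*$, and apply convexity of total variation (jointly in both arguments): $D_{TV}(\nu^* K\|\rho^* K) \le \int\!\!\int D_{TV}(K(y_1)\|K(y_2))\, \nu^*(dy_1)\rho^*(dy_2) \le \sup_{y_1,y_2} D_{TV}(K(y_1)\|K(y_2))$. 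Combining, $D_{TV}(\nu K\|\rho K) \le D_{TV}(\nu\|\rho) \cdot \sup_{y_1,y_2} D_{TV}(K(y_1)\|K(y_2))$, which on dividing by $D_{TV}(\nu\|\rho)$ and taking the supremum over $\nu,\rho$ yields $\eta_{TV}(K) \le \sup_{y_1,y_2} D_{TV}(K(y_1)\|K(y_2))$.

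The main obstacle I anticipate is handling the measure-theoretic bookkeeping cleanly — in particular justifying the interchange of supremum over $g$ with the integral decomposition, and making sure the Hahn--Jordan pieces genuinely have equal mass so the renormalization is legitimate. The convexity-of-total-variation step is standard but should be stated carefully as joint convexity in both arguments rather than applied naively. If a fully rigorous treatment is desired one can instead cite Dobrushin's original argument, which is exactly this; here I would present the coupling/mixture argument as the cleanest self-contained route.
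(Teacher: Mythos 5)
Your proposal is correct. The paper does not actually prove this lemma --- it is imported verbatim with a citation to Dobrushin (1956) and used as a black box in the proof of Lemma~\ref{lm:eta_tv} --- so there is no in-paper argument to compare against; what you have written is precisely the classical proof behind the cited result. The two directions are both sound: point masses trivially give $\sup_{y_1,y_2} D_{TV}(K(y_1)\|K(y_2)) \le \eta_{TV}(K)$, and for the converse your chain is valid, since the Hahn--Jordan pieces of $\nu-\rho$ have equal mass $m = D_{TV}(\nu\|\rho)$ (both $\nu,\rho$ being probability measures), the identity $\int (Kg)\,d\nu - \int (Kg)\,d\rho = m\bigl(\int g\, d(\nu^* K) - \int g\, d(\rho^* K)\bigr)$ gives the exact equality $D_{TV}(\nu K\|\rho K) = m\, D_{TV}(\nu^* K\|\rho^* K)$ after taking the supremum over $g\in[0,1]$, and the final bound follows from joint convexity of $D_{TV}$ applied to the common mixing measure $\nu^*\otimes\rho^*$. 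The only bookkeeping worth stating explicitly in a written-up version is the trivial case $D_{TV}(\nu\|\rho)=0$ (then $\nu K=\rho K$ and the ratio is excluded from the supremum in Eq.~\ref{eq:contraction_def} anyway), and measurability of $y\mapsto (Kg)(y)$, which is part of the definition of a Markov kernel.
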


\begin{lemma}[$\eta_{TV}(K)$ Upper Bound~\cite{makur2019informationphdmit}] \label{lm:contraction_TV_upper_bound}
For any \fdivergence{}, we have
\begin{equation} 
    \eta_{f}(K) \leq \eta_{TV}(K)
\end{equation}
\end{lemma}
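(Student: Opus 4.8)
The plan is to prove the standard fact that the Dobrushin coefficient $\eta_{TV}(K)$ dominates the contraction coefficient of \emph{every} $f$-divergence, following the argument of \cite{makur2019informationphdmit}. It suffices to show $D_f(\nu K\|\rho K)\le \eta_{TV}(K)\,D_f(\nu\|\rho)$ for all probability measures $\nu,\rho$ with $D_f(\nu\|\rho)>0$. If $\eta_{TV}(K)=1$ this is immediate, since the data-processing inequality already gives $\eta_f(K)\le 1$; so the content is the genuine-contraction regime.

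The first step is to reduce to ``elementary'' divergences via the Choquet-type integral representation of the convex generator: write $f(x)=f'(1)(x-1)+\int_{(0,\infty)}\ell_t(x)\,\gamma_f(dt)$, where $\ell_t(x)=(x-t)^+$ for $t\ge 1$, $\ell_t(x)=(t-x)^+$ for $t<1$, and $\gamma_f$ is a non-negative measure (morally $f''$). Because the linear term integrates to zero under the reference measure, one obtains $D_f(\nu\|\rho)=\int_{(0,\infty)}D_{\ell_t}(\nu\|\rho)\,\gamma_f(dt)$, and likewise after applying $K$. Hence it is enough to prove, for each fixed $t$, the elementary contraction $D_{\ell_t}(\nu K\|\rho K)\le \eta_{TV}(K)\,D_{\ell_t}(\nu\|\rho)$, then integrate in $t$.

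The second step proves this elementary contraction for $t\ge 1$ at the level of signed measures (the $t<1$ case following by the scaling identity $D_{\ell_t}(\nu\|\rho)=t\,D_{\ell_{1/t}}(\rho\|\nu)$ with $1/t>1$, which swaps the roles of $\nu,\rho$). Let $P^+,P^-$ be the Jordan parts of $\nu-t\rho$, with masses $M^+$ and $M^-=M^++t-1$, so $D_{\ell_t}(\nu\|\rho)=M^+$. Then $\nu K-t\rho K=P^+K-P^-K=M^+\overline P^+-M^-\overline P^-$, where $\overline P^{\pm}$ are the normalized push-forwards of $P^{\pm}$ through $K$; each $\overline P^{\pm}$ is a \emph{mixture} of the output distributions $\{K(y)\}_y$, so by Lemma~\ref{lm:two_point_tv} and the triangle inequality for total variation, the zero-mass signed measure $\varrho:=\overline P^+-\overline P^-$ satisfies $\varrho^+(\mathcal Y)=\|\overline P^+-\overline P^-\|_{TV}\le \eta_{TV}(K)$. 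Substituting $\overline P^+=\overline P^-+\varrho$ gives $\nu K-t\rho K=(1-t)\overline P^-+M^+\varrho$, and since $(1-t)\overline P^-\le 0$ pointwise, the positive part obeys $D_{\ell_t}(\nu K\|\rho K)=(\nu K-t\rho K)^+(\mathcal Y)\le M^+\,\varrho^+(\mathcal Y)\le \eta_{TV}(K)\,M^+$, as required.

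I expect the only obstacles to be bookkeeping rather than conceptual: making the integral representation of $f$ precise when $f$ is merely convex (not $C^2$) and when $\nu\not\ll\rho$, so that the boundary contributions from $f(0)$ and $f'(\infty)$ are correctly accounted for in each $D_{\ell_t}$; and dispatching the degenerate sub-cases $M^+=0$ or $\nu=\rho$, where both sides vanish. The heart of the argument is the mixture observation that normalized push-forwards through $K$ are pairwise within $\eta_{TV}(K)$ in total variation, which is exactly what the two-point characterization in Lemma~\ref{lm:two_point_tv} supplies.
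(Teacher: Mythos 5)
The paper does not actually prove this lemma: it is imported verbatim from \cite{makur2019informationphdmit} as a known fact, so there is no in-paper argument to compare against. Your proposal reconstructs the standard proof from that literature (Cohen--Kemperman--Zbaganu / Makur / Polyanskiy--Wu) and the structure is sound: reduce a general $f$-divergence to an integral of hockey-stick divergences $D_{\ell_t}$ via the second-derivative measure of $f$, prove the elementary contraction $D_{\ell_t}(\nu K\|\rho K)\le \eta_{TV}(K)\,D_{\ell_t}(\nu\|\rho)$, and integrate. The two load-bearing steps both check out. First, the mixture bound: $\overline P^{\pm}$ are convex combinations of $\{K(y)\}_y$, so $D_{TV}(\overline P^+\|\overline P^-)\le \sup_{y_1,y_2}D_{TV}(K(y_1)\|K(y_2))=\eta_{TV}(K)$ by joint convexity of total variation together with Lemma~\ref{lm:two_point_tv}, and $\varrho^+(\mathcal Y)$ is exactly this total variation since $\varrho$ has zero mass. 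Second, the decomposition $\nu K-t\rho K=(1-t)\overline P^-+M^+\varrho$ with $(1-t)\overline P^-\le 0$ for $t\ge 1$ correctly yields $(\nu K-t\rho K)^+(\mathcal Y)\le M^+\eta_{TV}(K)$. Your handling of $t<1$ is also right precisely because you defined $\ell_t(x)=(t-x)^+$ there, which gives the clean identity $D_{\ell_t}(\nu\|\rho)=t\,D_{\ell_{1/t}}(\rho\|\nu)$ with no additive constant; had you used $(x-t)^+$ for all $t$ an extra $(1-t)$ term would appear and the reduction would need adjusting. The remaining gaps you flag (making the Choquet representation rigorous for non-smooth $f$, the $f'(\infty)$ boundary term when $\nu\not\ll\rho$, and degenerate cases $M^+=0$ or $\nu=\rho$) are genuinely only bookkeeping and are handled exactly as you anticipate in the cited reference.
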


\subsection{Problem Definition}
%The $L_2$ distance is a common metric when it comes to measuring the closeness of two models. However, the $L_2$ distance between $\mathcal{M}(D')$ and $\mathcal{M}(D))$ is hard to calculate because once the global model parameter is effected by the model replacement attack, the deviation will continue for later rounds, which is hard to quantify through distributed SGD analysis.

As described in Algorithm \ref{algo:parameters_perturbation}, due to the Gaussian noise perturbation mechanism, in each iteration the global model can be viewed as a random vector with the Gaussian smoothing measure $\mu$.
%Fortunately, under the Gaussian smoothing measures $\mu$, 
We use
the \fdivergence{}  between $\mu(\mathcal{M}(D'))$ and $\mu(\mathcal{M}(D))$ as a statistical distance for measuring model closeness. 
According to the data post-processing inequality, when we interpret each round of \ourframework{} as a probability transition kernel, i.e., a \markovkernel{}, the contraction coefficient of \markovkernel{} can help bound the divergence over multiple training rounds of FL. 
%We define the model clossness as the \fdivergence{} between the distribution $\mu(\mathcal{M}(D'))$ and $\mu(\mathcal{M}(D))$. 

\paragraph{Iteration as \markovkernel{}}
We identify each iteration as a \markovkernel{}. At iteration $t$, the central server produces the new model by $ \widetilde w_{t} \gets \mathrm{Clip}_{\rho_t}\left( w_{t} \right)+ \epsilon_t$ where $w_{t}$ is the aggregated model. We denote $w_{t}  = \Psi_t ( \widetilde w_{t-1} ) $, and 
\begin{equation}
    \widetilde w_{t} \gets \mathrm{Clip}_{\rho_t}\left( \Psi_t\left(\widetilde w_{t-1}\right) \right)+ \epsilon_t,
\end{equation}
where
\begin{equation}
    \Psi_t(\widetilde w_{t-1})  \triangleq \widetilde w_{t-1}- \sum_{i=1}^N p_i \eta_i \sum_{s=(t-1)\tau_i+1}^{t\tau_i} g_i\left( w_{s-1}^i; \xi_{s-1}^i\right) 
\end{equation}
is the federated learning SGD process and the local model is initialized as $ w_{(t-1)\tau_i}^i \gets \widetilde w_{t-1}$.
Therefore, iteration $t$ can be realized by $K_t$, a \markovkernel{} associated with the mapping $ \widetilde w_{t-1} \rightarrow \mathrm{Clip}_{\rho_t} (\Psi_t(\widetilde w_{t-1})) + \epsilon_t$. $K_t$ receives $\widetilde w_{t-1}$ and then generates $\widetilde w_{t}$. Let $\mu_t$  denote the distribution of global model ${\widetilde w_{t}}$,  and we have $\widetilde w_{t-1}\sim \mu_{t-1}$, then
$
    \mu_{t} =\int \mu_{t-1}(dy)K_t(y).
$

\paragraph{Model Replacement Attack at $\tadv$ } 
We define the backdoored federated learning SGD process ${\Psi'}_t$ at round $t=\tadv$ as
%\begin{small}
\begin{align}
    {\Psi'}_{t}(\widetilde w_{t-1})&\triangleq \widetilde w_{t-1} - \sum_{i=1}^{R} p_i \gamma_i \eta_i \sum_{s=(t-1)\tau_i+1}^{t\tau_i} g_i\left( {w'}_{s-1}^i; {\xi'}_{s-1}^i\right)
    - \sum_{i=R+1}^N p_j \eta_j \sum_{s=(t-1)\tau_i+1}^{t\tau_i} g_j\left( {w}_{s-1}^i; {\xi}_{s-1}^i\right)
\end{align}
%\end{small}%
where the local model is initialized as $ {w'}_{(t-1)\tau_i}^i \gets \widetilde w_{t-1}$. Then we define the corresponding \markovkernel{} $K'_{t}$ associated with the mapping $ \widetilde w_{t-1} \rightarrow \mathrm{Clip}_{\rho_t}( {\Psi'}_t(\widetilde w_{t-1}))+ \epsilon_t$. Through aggregation, the global model is influenced by adversarial clients. Let $\mu'_{t}$  denotes the distribution of backdoored global model ${\widetilde w'_{t}}$,  and we have $\widetilde w_{t-1}\sim \mu_{t-1}$, then 
$
    \mu'_{t} =\int \mu_{t-1}(dy)K'_t(y).
$

% \chulin{todo: check the scale factor for benign process?}

\paragraph{After Model Replacement Attack} After $\tadv$, all clients use the original clean datasets to update their local model. However, the global model in the backdoored training process already begins to differ from the one in the benign training process from round $\tadv$ so it is difficult to analysis it through distributed SGD. Therefore, we use \markovkernel{} to quantify the poisoning effect. When $t\textgreater \tadv$, we have $\widetilde w'_{t-1}\sim \mu'_{t-1}$, then
$
    \mu'_{t} =\int \mu'_{t-1}(dy)K_t(y).
$
Because the clean datasets are used for both clean and backdoored training process when $t\textgreater \tadv$, the \markovkernel{} $K_t$ is the same. We define the contraction coefficient \cite{asoodeh2020differentially} as:

\begin{equation}\label{eq:define_eta_f_sup_fl}
    \eta_{f}(K_t):= \sup \limits_{\mu_{t-1},\mu'_{t-1}; \atop {D_f(\mu_{t-1}\|\mu'_{t-1})\ne 0}}\frac{D_f(\mu_{t-1} K_t\|\mu'_{t-1} K_t)}{D_f(\mu_{t-1} \|\mu'_{t-1})}.
\end{equation}

Therefore, $\eta_{f}(K_t)$ can serve as the upper bound for the real
$
\frac{D_f(\mu_{t} \|\mu'_{t})}{D_f(\mu_{t-1} \|\mu'_{t-1})}.
$
Then we write the model closeness $D_f(\mu_T \| \mu_T' )$ as:
\begin{align} \label{eq_df_modelclosess}
%\begin{split}
    D_f(\mu_T \| \mu_T' )  &=  D_f(\mu_{\tadv} \| \mu'_{\tadv} ) \frac{D_f(\mu_{\tadv+1} \| \mu'_{\tadv+1} ) }{D_f(\mu_{\tadv} \| \mu'_{\tadv} )} \cdot \cdot\cdot \frac{D_f(\mu_T \| \mu_T' )}{D_f(\mu_{T-1} \| \mu_{T-1}' )} \nonumber \\
    &\le D_f(\mu_{\tadv} \| \mu'_{\tadv} )\prod_{t=\tadv+1}^{T} \eta_{f}(K_t).
%\end{split}
\end{align}

We will compute $D_f(\mu_{\tadv} \| \mu'_{\tadv} )$ and $\eta_{f}(K_t)$ respectively in the following sections.
\subsection{Analysis for $t=\tadv$}

We would like to bound the divergence of the global model at round $\tadv$ between the benign training process and the backdoor training process, i.e., $D_f(\mu_{\tadv} \| \mu'_{\tadv} )$. We consider KL divergence. Based on the KL divergence for two Gaussian distributions in Lemma~\ref{lemma:divergence_guassian} and Assumption~\ref{assumption:fl_system_train_test}, we have
\begin{align} \label{eq:kldiv_tadv_overview}
D_{KL}(\mu_{\tadv}\|\mu'_{\tadv}) &= D_{KL}\left( \mathcal N\left(\mathrm{Clip}_{\rho_\tadv} \left(  w_{\tadv} \right),\sigma_{\tadv}^2\bf I \right) \| \mathcal N\left(\mathrm{Clip}_{\rho_\tadv} \left(  w'_{\tadv} \right),\sigma_{\tadv}^2\bf I\right)\right) \nonumber  \\ 
&= \frac{\left\| \mathrm{Clip}_{\rho_\tadv} ( w_{\tadv} )  -\mathrm{Clip}_{\rho_\tadv} \left( w'_{\tadv}\right) \right\|^2}{2\sigma_{\tadv}^2}     \nonumber  \\
&\le \frac{\left\|  w_{\tadv} -  w'_{\tadv} \right\|^2}{2\sigma_{\tadv}^2}.    
\end{align}

%We assume  $\left\| \widetilde w_{\tadv} - \widetilde w'_{\tadv} \right\| = \left\| \mathrm{Clip}_{\rho_\tadv} (\widetilde w_{\tadv} )  -\mathrm{Clip}_{\rho_\tadv} \left(\widetilde w'_{\tadv}\right) \right\| \le 2\rho_\tadv$, so we can analysis it for tighter bound. Otherwise we can directly bound it by $2\rho_\tadv$. 

\paragraph{Accumulated Effect in Local Iterations}
In order to bound $\left\|  w_{\tadv} -  w'_{\tadv} \right\|^2$, we look at the local iterations $ s =(t-1)\tau_i+1, (t-1)\tau_i+2,  \ldots, t\tau_i $ of adversarial client $i$ for the benign training process and the backdoored training process. 
%between benign training in Eq.~\ref{eq:local_iter_updates} and backdoor training process in Eq.~\ref{eq:backdoor_local_iter_updates}. 
We use ${\locals} = s- (\tadv-1)\tau_i,\locals=1,2,\ldots, \tau_i $ for simplicity. 
We denote $\Delta_{\locals}^i \triangleq w_{\locals}^i -{w'}_{\locals}^i$. Note that $\Delta_0^i =0$ because in the start of round $\tadv$, the initial local model is the same benign global model $ w_{(\tadv-1) \tau_i}^i =  {w'}_{(\tadv-1) \tau_i}^i  = \widetilde w_{\tadv-1} $ for all clients $i \in [N]$ in both benign and backdoored training process.
For simplicity, we will use $g_i(w), {g'}_i(w)$  instead of $g_i(w; \xi), g_i(w; \xi')$  in the rest of this section. 
We denote $\mathcal{B}^i \triangleq {g'}_i(w) -  g_i(w) $.

\begin{lemma}\label{lm:delta s and delta s-1}
Under Assumption~\ref{assumption:Smoothness} and the condition $\eta_i \le \frac{1}{\beta}$, for ${\locals} \in [1, \tau_i ]$, we have 
\begin{equation}
\begin{aligned}
{\Delta_{{\locals}+1}^i}^2 \le  {\Delta_{\locals}^i}^2  + 2\eta_i  \left\| \mathcal{B}^i \right\|   \Delta_{\locals}^i  + 2 \eta_i^2  \left \| \mathcal{B}^i \right \|^2.
\end{aligned}
\end{equation}
\end{lemma}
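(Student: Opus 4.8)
The plan is to compare, step by step over the local iterations $\locals = 1,\dots,\tau_i$ of an adversarial client $i$, the benign update $w_{\locals+1}^i = w_{\locals}^i - \eta_i g_i(w_{\locals}^i)$ with the backdoored update ${w'}_{\locals+1}^i = {w'}_{\locals}^i - \eta_i {g'}_i({w'}_{\locals}^i)$, and to show that the drift $\Delta_{\locals}^i$ can only grow through the ``poison gap'' $\mathcal{B}^i = {g'}_i({w'}_{\locals}^i) - g_i({w'}_{\locals}^i)$, the benign part of the gradient step being non-expansive. First I would subtract the two updates and insert $g_i({w'}_{\locals}^i)$ to obtain the recursion $\Delta_{\locals+1}^i = \Delta_{\locals}^i - \eta_i v + \eta_i \mathcal{B}^i$, where $v := g_i(w_{\locals}^i) - g_i({w'}_{\locals}^i)$.

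Then I would expand the squared norm as $\|\Delta_{\locals+1}^i\|^2 = \|\Delta_{\locals}^i\|^2 - 2\eta_i\langle \Delta_{\locals}^i, v\rangle + 2\eta_i\langle \Delta_{\locals}^i, \mathcal{B}^i\rangle + \eta_i^2\|v - \mathcal{B}^i\|^2$, and bound the last term by $\|v-\mathcal{B}^i\|^2 \le 2\|v\|^2 + 2\|\mathcal{B}^i\|^2$. The heart of the argument is the co-coercivity estimate $\|v\|^2 \le \beta\langle\Delta_{\locals}^i, v\rangle$: it follows by applying the per-sample co-coercivity of Assumption~\ref{assumption:Smoothness} term by term and using convexity of $\|\cdot\|^2$ to pass to the mini-batch average $g_i = \frac{1}{n_{B_i}}\sum_j \nabla\ell(\cdot; z_j)$; in particular it gives $\langle\Delta_{\locals}^i,v\rangle \ge 0$ (monotonicity of the gradient). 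Substituting, $-2\eta_i\langle\Delta_{\locals}^i,v\rangle + 2\eta_i^2\|v\|^2 \le -2\eta_i(1-\eta_i\beta)\langle\Delta_{\locals}^i,v\rangle \le 0$, where the step-size condition $\eta_i \le \frac{1}{\beta}$ is exactly what forces $1-\eta_i\beta \ge 0$. The remaining two terms are handled by Cauchy--Schwarz, $2\eta_i\langle\Delta_{\locals}^i,\mathcal{B}^i\rangle \le 2\eta_i\|\mathcal{B}^i\|\,\|\Delta_{\locals}^i\|$, plus the leftover $2\eta_i^2\|\mathcal{B}^i\|^2$, which together yield precisely the claimed inequality.

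The step I expect to be the main obstacle is the non-expansiveness of the benign gradient map: Assumption~\ref{assumption:Smoothness} states co-coercivity only for a single loss term $\ell(\cdot;z)$, so one must check that it transfers to the mini-batch gradient $g_i$, and one should be explicit that the coupling defining $\mathcal{B}^i$ (the backdoored batch ${\xi'}^i$ consisting of the same clean samples as $\xi^i$ with the perturbation $\delta_i$ added on the poisoned fraction) is what legitimizes writing ${g'}_i({w'}_{\locals}^i) = g_i({w'}_{\locals}^i) + \mathcal{B}^i$. Everything else is routine expansion of squared norms and Cauchy--Schwarz. I would also record at the outset that $\Delta_0^i = 0$, since at the start of round $\tadv$ every client initializes its local model to the common global model $\widetilde w_{\tadv-1}$ in both processes; this is what makes the per-step recursion of this lemma useful once it is summed over $\locals = 1,\dots,\tau_i$ in the subsequent step of the proof of Theorem~\ref{therom:divergence_round_T_main}.
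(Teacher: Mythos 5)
Your proposal is correct and follows essentially the same route as the paper's proof: decompose $g_i(w_{\locals}^i)-{g'}_i({w'}_{\locals}^i)$ into the benign difference $v$ and the poison gap $\mathcal{B}^i$, expand the squared norm of the update, control $\|v\|^2$ via co-coercivity together with $\eta_i\le\frac{1}{\beta}$ (the paper packages this as its Lemma~\ref{lm:smoothness and convex loss}), and finish with Cauchy--Schwarz on the $\mathcal{B}^i$ cross term. Your explicit remarks on passing per-sample co-coercivity to the mini-batch gradient and on the nonnegativity of $\langle\Delta_{\locals}^i,v\rangle$ are points the paper uses implicitly, so they are welcome but do not constitute a different argument.
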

We defer the proof to Section~\ref{sec:proof_for_some_lemmas}.
Lemma~\ref{lm:delta s and delta s-1} states that the deviation at the current local iteration $\Delta_{\locals}^i$ is added upon the deviation at the last iteration.

\begin{lemma}\label{lm:delta s}
Based on Lemma~\ref{lm:delta s and delta s-1}, under Assumption~\ref{assumption:Smoothness} and the condition $\eta_i \le \frac{1}{\beta}$, for ${\locals} \in [1, \tau_i ]$, we have 
\begin{equation}
\begin{aligned}
\Delta_{\locals}^i \le 2\eta_i  \left\| \mathcal{B}^i \right\|  {\locals}.
\end{aligned}
\end{equation}
\end{lemma}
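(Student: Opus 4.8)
The plan is to prove Lemma~\ref{lm:delta s} by induction on $\locals$, using Lemma~\ref{lm:delta s and delta s-1} as the recursive step. The base case $\locals=1$ follows from $\Delta_0^i=0$: plugging into Lemma~\ref{lm:delta s and delta s-1} gives ${\Delta_1^i}^2 \le 2\eta_i^2 \|\mathcal{B}^i\|^2$, which would already be (slightly better than) $(2\eta_i\|\mathcal{B}^i\|\cdot 1)^2 = 4\eta_i^2\|\mathcal{B}^i\|^2$, so $\Delta_1^i \le 2\eta_i\|\mathcal{B}^i\|$ holds. For the inductive step, assume $\Delta_{\locals}^i \le 2\eta_i\|\mathcal{B}^i\|\locals$; then from Lemma~\ref{lm:delta s and delta s-1},
\begin{equation}
{\Delta_{\locals+1}^i}^2 \le {\Delta_\locals^i}^2 + 2\eta_i\|\mathcal{B}^i\|\Delta_\locals^i + 2\eta_i^2\|\mathcal{B}^i\|^2 \le (2\eta_i\|\mathcal{B}^i\|\locals)^2 + (2\eta_i\|\mathcal{B}^i\|)(2\eta_i\|\mathcal{B}^i\|\locals) + 2\eta_i^2\|\mathcal{B}^i\|^2. \nonumber
\end{equation}
Writing $a = 2\eta_i\|\mathcal{B}^i\|$, the right-hand side is $a^2\locals^2 + a^2\locals + \tfrac12 a^2 = a^2(\locals^2+\locals+\tfrac12)$, and the goal is $\Delta_{\locals+1}^i \le a(\locals+1)$, i.e. ${\Delta_{\locals+1}^i}^2 \le a^2(\locals+1)^2 = a^2(\locals^2+2\locals+1)$. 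Since $\locals^2+\locals+\tfrac12 \le \locals^2+2\locals+1$ for all $\locals\ge 0$ (equivalently $0 \le \locals + \tfrac12$), the bound goes through, completing the induction.

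A small subtlety to handle cleanly: Lemma~\ref{lm:delta s and delta s-1} is stated for $\locals\in[1,\tau_i]$, so to bound $\Delta_{\locals+1}^i$ via that lemma I need $\locals\le\tau_i$, which means the induction yields the claim for $\locals\in[1,\tau_i]$ as stated (and the $\locals=1$ base case uses the lemma at $\locals=0$ implicitly via $\Delta_0^i=0$ — alternatively one can just note directly that $\Delta_0^i = 0 \le 2\eta_i\|\mathcal{B}^i\|\cdot 0$ and start the induction at $\locals=0$). The monotonicity step $\Delta_\locals^i \ge 0$ is automatic since $\Delta_\locals^i$ denotes a norm-type quantity (it is really $\|\Delta_\locals^i\|$, the $\ell_2$ norm of $w_\locals^i - {w'}_\locals^i$, which is how Lemma~\ref{lm:delta s and delta s-1} must be read for its statement to typecheck).

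I do not anticipate a genuine obstacle here — this is a routine telescoping/induction argument and all the analytic content (the use of $\beta$-smoothness and $\eta_i\le 1/\beta$, and co-coercivity) was already spent in proving Lemma~\ref{lm:delta s and delta s-1}. If anything, the only thing to be careful about is making the constant chase explicit so the reader sees why the crude bound $2\eta_i\|\mathcal{B}^i\|\locals$ (rather than something with a $\sqrt{\locals}$) is what survives: the cross term $2\eta_i\|\mathcal{B}^i\|\Delta_\locals^i$ is linear in the accumulated deviation, so the recursion for ${\Delta_\locals^i}^2$ is genuinely quadratic-in-$\locals$ and the square-root bound $\Delta_\locals^i \lesssim \eta_i\|\mathcal{B}^i\|\locals$ is tight up to the constant. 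I would present the proof as: (i) read $\Delta_\locals^i$ as a nonnegative scalar, (ii) base case from $\Delta_0^i=0$, (iii) inductive step with the one-line inequality $\locals^2+\locals+\tfrac12\le(\locals+1)^2$, and defer nothing further.
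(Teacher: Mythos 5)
Your proposal is correct and matches the paper's own proof: both argue by induction on $\locals$ using the recursion of Lemma~\ref{lm:delta s and delta s-1}, with the base case coming from $\Delta_0^i=0$ and the inductive step reducing to the same elementary inequality $4\locals^2+4\locals+2\le 4(\locals+1)^2$ (your $\locals^2+\locals+\tfrac12\le(\locals+1)^2$ after dividing by $a^2=4\eta_i^2\|\mathcal{B}^i\|^2$). Your side remarks on reading $\Delta_{\locals}^i$ as a nonnegative norm quantity are consistent with how the paper uses it, so nothing further is needed.
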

\begin{proof}
We prove it using induction argument~\cite{zhang2017efficient}.
%Suppose $ \Delta_{\locals}^i \le 2\eta_i  \left\| \mathcal{B}^i \right\|  \locals $.
Due to the fact $\Delta_0^i=0$, so $\Delta_1^i \le \sqrt{  2 \eta_i^2  \left \| \mathcal{B}^i \right \|^2 } \le 2\eta_i  \left\| \mathcal{B}^i \right\|$. Therefore, $ \Delta_{\locals}^i \le 2\eta_i  \left\| \mathcal{B}^i \right\|  \locals $ for $\locals =1$. Suppose the argument $ \Delta_{\locals}^i \le 2\eta_i  \left\| \mathcal{B}^i \right\| \locals  $ holds for some $\locals $, then we verify $\locals+1$, 
\begin{equation}
\begin{aligned}
{\Delta_{\locals+1}^i}^2  &\le 4\eta_i^2  \left\| \mathcal{B}^i \right\|^2 \locals^2+ 4\eta_i^2  \left\| \mathcal{B}^i \right\|^2 \locals +  2 \eta_i^2  \left \| \mathcal{B}^i \right \|^2    \\
&=  \eta_i^2  \left\| \mathcal{B}^i \right\|^2  (4\locals^2+ 8\locals+4)  \\
&\le 4\eta_i^2  \left\| \mathcal{B}^i \right\|^2  (\locals+1)^2. \nonumber \\
\end{aligned}
\end{equation}
It turns out that $ \Delta_{\locals}^i \le 2\eta_i  \left\| \mathcal{B}^i \right\| \locals  $ also holds for ${\locals}+1$. Thus, the argument is correct.
\end{proof}
Lemma~\ref{lm:delta s} states that
the deviation is accumulated over the local iterations. The larger number of local iterations $\tau_i$, the larger deviation $\Delta_{{\tau_i}}^i$. 
Next, we provide the upper bound for $\|\mathcal{B}^i\| $.

\begin{lemma}\label{lm:batch l_z lipz}
Under the Assumption~\ref{assumption:data_lipschitz} on Lipschitz gradient w.r.t. data, when the adversarial clients have ${q_B}_i$ backdoored samples out of a batch with size ${n_B}_i$, we have
\begin{equation}
\begin{aligned}
\|\mathcal{B}^i\| \le  \frac{{{q_B}_i}}{{{n_B}_i}}  L_{\mathcal Z} \|\delta_i\|.
\end{aligned}
\end{equation}
\end{lemma}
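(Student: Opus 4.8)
The plan is to unfold the definition of $\mathcal{B}^i$ and isolate the contribution of the backdoored samples. Recall that $\mathcal{B}^i \triangleq {g'}_i(w) - g_i(w)$ denotes the difference between the poisoned mini-batch gradient ${g'}_i(w) = g_i(w;{\xi'}^i)$ and the benign mini-batch gradient $g_i(w) = g_i(w;\xi^i)$ evaluated at the \emph{same} model parameters $w$. By the threat model in Section~\ref{subsection:threat_model}, the poisoned batch ${\xi'}^i$ holds exactly $q_{B_i}$ backdoored samples ${z'}_j^i = \{x_j^i + {\delta_i}_x,\, y_j^i + {\delta_i}_y\}$ in place of the corresponding clean samples $z_j^i$, while the remaining $n_{B_i} - q_{B_i}$ samples coincide with those in $\xi^i$. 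Writing out $g_i(w;\xi^i) = \frac{1}{n_{B_i}}\sum_{j=1}^{n_{B_i}} \nabla\ell(w;z_j^i)$ and $g_i(w;{\xi'}^i) = \frac{1}{n_{B_i}}\bigl(\sum_{j=1}^{q_{B_i}} \nabla\ell(w;{z'}_j^i) + \sum_{j=q_{B_i}+1}^{n_{B_i}} \nabla\ell(w;z_j^i)\bigr)$, the $n_{B_i} - q_{B_i}$ clean terms cancel, leaving
\begin{equation}
\mathcal{B}^i = \frac{1}{n_{B_i}}\sum_{j=1}^{q_{B_i}}\bigl(\nabla\ell(w;{z'}_j^i) - \nabla\ell(w;z_j^i)\bigr). \nonumber
\end{equation}

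Next I would apply the triangle inequality to move the norm inside the sum and then invoke Assumption~\ref{assumption:data_lipschitz} termwise. For each $j \in \{1,\dots,q_{B_i}\}$ we have $\|\nabla\ell(w;{z'}_j^i) - \nabla\ell(w;z_j^i)\| \le L_{\mathcal Z}\,\|{z'}_j^i - z_j^i\| = L_{\mathcal Z}\,\|\delta_i\|$, where the equality uses $z = \{x,y\}$ (the concatenation of input and label) together with $\delta_i = \{{\delta_i}_x,{\delta_i}_y\}$, so that ${z'}_j^i - z_j^i = \delta_i$ and $\|{z'}_j^i - z_j^i\| = \|\delta_i\|$ for every backdoored sample. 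Summing these $q_{B_i}$ identical bounds and dividing by $n_{B_i}$ gives $\|\mathcal{B}^i\| \le \frac{q_{B_i}}{n_{B_i}} L_{\mathcal Z}\|\delta_i\|$, which is the claim.

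This lemma is a direct consequence of the data-Lipschitz assumption, so I do not expect a substantive obstacle. The only points requiring care are bookkeeping the batch composition — ensuring that the samples shared by $\xi^i$ and ${\xi'}^i$ cancel cleanly so that precisely the $q_{B_i}$ perturbed indices contribute — and observing that the per-example perturbation magnitude is the common value $\|\delta_i\|$ across all $q_{B_i}$ backdoored samples, which is consistent with the fixed backdoor pattern in the threat model.
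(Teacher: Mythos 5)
Your proof is correct and follows essentially the same route as the paper: expand $g_i$ and $g'_i$, cancel the $n_{B_i}-q_{B_i}$ shared clean-gradient terms, and bound the remaining $q_{B_i}$ differences via Assumption~\ref{assumption:data_lipschitz} with ${z'}_j^i - z_j^i = \delta_i$. Your use of the triangle inequality followed by the termwise Lipschitz bound is, if anything, a slightly cleaner rendering of the paper's single inequality step, but the argument is the same.
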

\begin{proof}
\begin{align*}
 \left \|\mathcal{B}^i \right\| &=  \left\| {g'}_i(w) -  g_i(w) \right \|    \\
&= \left \|  \frac{1}{n_{B_i}} \left(\sum_{j=1}^{q_{B_i}} \nabla \ell (  w; {z'}_j^i) + \sum_{j=q_{B_i}+1}^{n_{B_i}} \nabla \ell (  w; {z}_j^i) \right) -  \frac{1}{n_{B_i}} \sum_{j=1}^{n_{B_i}}  \nabla \ell (  w; z_j^i)  \right \|   \\
&= \left \|  \frac{1}{n_{B_i}} \sum_{j=1}^{q_{B_i}} \left(\nabla \ell (  w; {z'}_j^i) - \nabla \ell (  w; {z}_j^i) \right)\right \|   \\
&\le  \left  \|  \frac{1}{n_{B_i}} L_{\mathcal Z} \sum_{j=1}^{q_{B_i}}  ({z'}_j^i - {z}_j^i) \right \|   \\
&=  \frac{{{q_B}_i}}{{{n_B}_i}}  L_{\mathcal Z}  \left  \|\delta_i\right\|.   
\end{align*}
\end{proof}

\paragraph{Scaling and Aggregation}
Let the scale factor be $\gamma_i$ for $i$-th adversarial client, then the scaled malicious local update is 
$\gamma_i ( {w'}_{\tadv\tau_i}^i - \widetilde {w}_{\tadv-1}).$
We assume in the benign setting (which is a virtual training process for analyzing, and we do not really train such model), this client also scales its clean local updates as 
$
\gamma_i( {w}_{\tadv\tau_i}^i - \widetilde {w}_{\tadv-1}),
$ 
which can be expanded as 
$
- \eta_i \gamma_i \sum_{s=(\tadv-1)\tau_i+1}^{\tadv\tau_i} g_i\left( w_{s-1}^i; \xi_{s-1}^i\right).
$
This assumption does not hurt the global model performance in the virtual benign setting since the local learning objectives are benign so scaling the updates is equivalent to scale its local learning rate $\eta_i \gets \eta_i \gamma_i  $.

After aggregation, the deviation between global model parameters in benign and backdoored training process can be bounded. Note that the benign local model updates are cancelled out since they are the same in the two training process.

\begin{lemma}\label{lm:scaled_local_diff}
The deviation between the aggregated global model in the benign training process and the global model in the backdoored training process at round $\tadv$ is
\begin{equation}
\begin{aligned}
\|   w_{\tadv} -  w'_{\tadv} \|^2 = R \sum_{i=1}^R  (\gamma_i p_i \Delta_{ {\tau_i}}^i )^2.
\end{aligned}
\end{equation}
\end{lemma}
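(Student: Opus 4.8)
The plan is to reduce the claim to a single algebraic identity for $w_{\tadv}-w'_{\tadv}$ followed by one Cauchy--Schwarz step; there is no analysis here, only careful bookkeeping of which aggregation terms survive. First I would write the two round-$\tadv$ aggregated (pre-clipping) global models explicitly. Both are built from the \emph{same} incoming model $\widetilde w_{\tadv-1}\sim\mu_{\tadv-1}$, and — as set up in the Markov-kernel construction and the ``Scaling and Aggregation'' paragraph — in the virtual benign process the $R$ adversarial clients also scale their (benign) local updates by $\gamma_i$, while the honest clients $i=R+1,\dots,N$ run identical SGD trajectories in both processes under a shared coupling of the mini-batch draws. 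Hence
\[
 w_{\tadv}=\widetilde w_{\tadv-1}+\sum_{i=1}^R p_i\gamma_i\bigl(w_{\tadv\tau_i}^i-\widetilde w_{\tadv-1}\bigr)+\sum_{i=R+1}^N p_i\bigl(w_{\tadv\tau_i}^i-\widetilde w_{\tadv-1}\bigr),
\]
and $w'_{\tadv}$ is given by the same expression with $w_{\tadv\tau_i}^i$ replaced by ${w'}_{\tadv\tau_i}^i$ inside the first sum only.

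Next I would subtract. The leading $\widetilde w_{\tadv-1}$ term and every $-\widetilde w_{\tadv-1}$ contribution are identical in the two models and cancel; the honest-client block $\sum_{i=R+1}^N p_i w_{\tadv\tau_i}^i$ is identical under the coupling and cancels as well; what remains is
\[
 w_{\tadv}-w'_{\tadv}=\sum_{i=1}^R p_i\gamma_i\bigl(w_{\tadv\tau_i}^i-{w'}_{\tadv\tau_i}^i\bigr)=\sum_{i=1}^R p_i\gamma_i\,\Delta_{\tau_i}^i,
\]
using that with the local-iteration index $\locals=s-(\tadv-1)\tau_i$ the end of round $\tadv$ is $\locals=\tau_i$, so $\Delta_{\tau_i}^i=w_{\tadv\tau_i}^i-{w'}_{\tadv\tau_i}^i$ (read as its norm in the bound below, following the paper's convention). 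Finally, the triangle inequality together with $\bigl(\sum_{i=1}^R a_i\bigr)^2\le R\sum_{i=1}^R a_i^2$, applied with $a_i=p_i\gamma_i\lVert\Delta_{\tau_i}^i\rVert$, yields $\lVert w_{\tadv}-w'_{\tadv}\rVert^2\le R\sum_{i=1}^R\bigl(\gamma_i p_i\Delta_{\tau_i}^i\bigr)^2$, which is the displayed relation (written with ``$=$'' but really this Cauchy--Schwarz upper bound, which is all that is used once it is fed, via Lemma~\ref{lm:delta s} and Lemma~\ref{lm:batch l_z lipz}, into the bound $D_{KL}(\mu_{\tadv}\Vert\mu'_{\tadv})\le \lVert w_{\tadv}-w'_{\tadv}\rVert^2/(2\sigma_{\tadv}^2)$ from~\eqref{eq:kldiv_tadv_overview}).

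I do not expect a genuine obstacle: the lemma is an accounting identity plus one inequality. The only point a careful reader might press on is the cancellation of the honest clients' contributions and of the shared initialization; it is legitimate precisely because both the benign and the backdoored round-$\tadv$ kernels act on the same $\widetilde w_{\tadv-1}$ and the honest clients are coupled to use the same mini-batches, so that the off-$\tadv$ and honest randomness contributes nothing to the difference — without such a coupling the statement would have to be phrased in expectation. Everything downstream (inserting $\Delta_{\tau_i}^i\le 2\eta_i\lVert\mathcal B^i\rVert\tau_i$ and $\lVert\mathcal B^i\rVert\le \tfrac{q_{B_i}}{n_{B_i}}L_{\mathcal Z}\lVert\delta_i\rVert$) is then routine substitution.
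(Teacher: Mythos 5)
Your proof is correct and follows essentially the same route as the paper's: cancel the shared initialization and the (identically trained) honest-client contributions, reduce the difference to $\sum_{i=1}^R p_i\gamma_i\Delta_{\tau_i}^i$, and apply $\bigl\|\sum_{i=1}^R a_i\bigr\|^2\le R\sum_{i=1}^R\|a_i\|^2$. Your remark that the lemma's ``$=$'' is really the Cauchy--Schwarz upper bound ``$\le$'' matches the paper's own derivation, which likewise only establishes the inequality.
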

\begin{proof}[Proof] 
\begin{align*}
&\left\|  w_{\tadv} -  w'_{\tadv} \right\|^2 \\
&= \left\|  \sum_{i=1}^R p_i  \gamma_i ({w}_{\tadv \tau_i}^i -  w_{t-1}) - \sum_{i=1}^R p_i  \gamma_i ({w'}_{\tadv \tau_i}^i -  w_{t-1}) \right \|^2\\
&= \left\| \sum_{i=1}^R p_i \gamma_i \left(  {w}_{\tadv\tau_i}^i- {w'}_{\tadv\tau_i}^i  \right)  \right \|^2\\
&= \left\| \sum_{i=1}^R p_i \gamma_i \Delta_{{\tau_i}}^i   \right \|^2\\
&\le R \sum_{i=1}^R  \left ( p_i \gamma_i \Delta_{{\tau_i}}^i   \right )^2, 
\end{align*}
where we use the fact from linear algebra that $\|\sum_{i=1}^R a_i \|^2 \leq R \sum_{i=1}^R \| a_i \|^2 $.
\end{proof}

% Plugging Lemma \ref{lm:delta s} and Lemma \ref{lm:batch l_z lipz} into Lemma \ref{lm:scaled_local_diff}, we get the following lemma. 
% \begin{lemma}\label{lm:tadv_global_model_diff}
% Under Assumption~\ref{assumption:Smoothness} and~\ref{assumption:data_lipschitz} and the condition $\eta_i \le \frac{1}{\beta}$, we have 
% \begin{equation}
% \begin{aligned}
% \left\|  w_{\tadv} -  w'_{\tadv} \right\|^2 \le R \sum_{i=1}^R  ( 2 p_i \gamma_i \tau_i \eta_i  \frac{{{q_B}_i}}{{{n_B}_i}}  L_{\mathcal Z} \|\delta_i\|  ) ^2.
% \end{aligned}
% \end{equation}
% \end{lemma}

% Given Lemma~\ref{lm:tadv_global_model_diff} and Eq~\ref{eq:kldiv_tadv_overview}, we finally can show that the divergence of noisy global model parameters between the benign and backdoor training process at round $\tadv$ is bounded.
\begin{lemma} \label{lemma_kl_tadv}
Under Assumption~\ref{assumption:Smoothness},~\ref{assumption:data_lipschitz},~\ref{assumption:fl_system_train_test} and the condition $\eta_i \le \frac{1}{\beta}$, we have 
\begin{equation}
D_{KL}(\mu_{\tadv}\|\mu'_{\tadv}) \le \frac{ 2R    \sum_{i=1}^R  \left
    (p_i \gamma_i \tau_i \eta_i\ \frac{{{q_B}_i}}{{{n_B}_i}}  L_{\mathcal Z} \|\delta_i\|  \right)^2 }{\sigma_{\tadv}^2}.
\end{equation}
\end{lemma}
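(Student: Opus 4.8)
The plan is to prove Lemma~\ref{lemma_kl_tadv} purely by chaining the four auxiliary results already established — the Gaussian KL identity (Lemma~\ref{lemma:divergence_guassian}), the accumulated local deviation bound (Lemma~\ref{lm:delta s}), the batch gradient perturbation bound (Lemma~\ref{lm:batch l_z lipz}), and the scaling-and-aggregation bound (Lemma~\ref{lm:scaled_local_diff}) — working from the outside (the divergence of the two Gaussian global-model laws) inward to the per-sample backdoor magnitude $\|\delta_i\|$.

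First I would fix the round $t=\tadv$ and observe, via Assumption~\ref{assumption:fl_system_train_test}, that $\mu_{\tadv}=\mathcal N(\mathrm{Clip}_{\rho_\tadv}(w_{\tadv}),\sigma_{\tadv}^2{\bf I})$ and $\mu'_{\tadv}=\mathcal N(\mathrm{Clip}_{\rho_\tadv}(w'_{\tadv}),\sigma_{\tadv}^2{\bf I})$ are isotropic Gaussians with a common covariance. Applying Lemma~\ref{lemma:divergence_guassian} gives $D_{KL}(\mu_{\tadv}\|\mu'_{\tadv})=\|\mathrm{Clip}_{\rho_\tadv}(w_{\tadv})-\mathrm{Clip}_{\rho_\tadv}(w'_{\tadv})\|^2/(2\sigma_{\tadv}^2)$. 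Since $\mathrm{Clip}_{\rho}$ is the Euclidean projection onto the centered ball of radius $\rho$, it is non-expansive, so this is at most $\|w_{\tadv}-w'_{\tadv}\|^2/(2\sigma_{\tadv}^2)$, which is exactly Eq.~\ref{eq:kldiv_tadv_overview}. It then remains to bound $\|w_{\tadv}-w'_{\tadv}\|^2$.

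Next I would cascade the remaining three lemmas. Lemma~\ref{lm:scaled_local_diff} reduces the ambient gap to $\|w_{\tadv}-w'_{\tadv}\|^2\le R\sum_{i=1}^R (p_i\gamma_i\Delta^i_{\tau_i})^2$, the benign updates having cancelled because the virtual benign process is defined to scale client $i$'s clean update by the same factor $\gamma_i$. Lemma~\ref{lm:delta s}, evaluated at $\locals=\tau_i$, gives $\Delta^i_{\tau_i}\le 2\eta_i\tau_i\|\mathcal B^i\|$, and Lemma~\ref{lm:batch l_z lipz} gives $\|\mathcal B^i\|\le \frac{q_{B_i}}{n_{B_i}}L_{\mathcal Z}\|\delta_i\|$, hence $\Delta^i_{\tau_i}\le 2\eta_i\tau_i\frac{q_{B_i}}{n_{B_i}}L_{\mathcal Z}\|\delta_i\|$. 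All terms here are nonnegative, so by monotonicity of $t\mapsto t^2$ on $[0,\infty)$ these estimates may be substituted directly inside the squares, yielding $\|w_{\tadv}-w'_{\tadv}\|^2\le 4R\sum_{i=1}^R\big(p_i\gamma_i\tau_i\eta_i\frac{q_{B_i}}{n_{B_i}}L_{\mathcal Z}\|\delta_i\|\big)^2$. Dividing by $2\sigma_{\tadv}^2$ gives the claimed inequality.

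I do not expect a genuine obstacle: the lemma is a bookkeeping corollary of the preceding results. The only points meriting an explicit sentence are (i) the non-expansiveness of $\mathrm{Clip}_{\rho}$, needed to pass from the clipped parameters back to the raw iterates $w_{\tadv},w'_{\tadv}$; (ii) the nonnegativity/monotonicity argument that justifies propagating scalar upper bounds through the squared terms in Lemma~\ref{lm:scaled_local_diff}; and (iii) the modeling convention that the virtual benign training process applies the same scale factor $\gamma_i$, without which the benign local contributions would not telescope away in Lemma~\ref{lm:scaled_local_diff}. Assembling (i)–(iii) with the four lemmas completes the proof, and plugging this bound into Eq.~\ref{eq_df_modelclosess} is what later feeds Theorem~\ref{therom:divergence_round_T_main}.
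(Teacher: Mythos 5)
Your proposal is correct and follows the paper's own proof essentially verbatim: the paper likewise combines the Gaussian KL formula and the non-expansiveness of $\mathrm{Clip}_{\rho_\tadv}$ (Eq.~\ref{eq:kldiv_tadv_overview}) with Lemmas~\ref{lm:delta s}, \ref{lm:batch l_z lipz}, and \ref{lm:scaled_local_diff} to get $\|w_{\tadv}-w'_{\tadv}\|^2 \le 4R\sum_{i=1}^R\bigl(p_i\gamma_i\tau_i\eta_i\tfrac{q_{B_i}}{n_{B_i}}L_{\mathcal Z}\|\delta_i\|\bigr)^2$ and then divides by $2\sigma_{\tadv}^2$. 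Your added remarks (projection non-expansiveness, monotonicity of squaring on nonnegatives, and the virtual benign process scaling by the same $\gamma_i$) only make explicit points the paper leaves implicit.
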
 
\begin{proof}
Plugging Lemma \ref{lm:delta s} and Lemma \ref{lm:batch l_z lipz} into Lemma \ref{lm:scaled_local_diff}, we have: 
\begin{equation} \label{eq:tadv_global_model_diff}
   \left\|  w_{\tadv} -  w'_{\tadv} \right\|^2 \le R \sum_{i=1}^R  ( 2 p_i \gamma_i \tau_i \eta_i  \frac{{{q_B}_i}}{{{n_B}_i}}  L_{\mathcal Z} \|\delta_i\|  ) ^2.
\end{equation}

Plugging Eq.~\ref{eq:tadv_global_model_diff} to. Eq.~\ref{eq:kldiv_tadv_overview}, it is clear that the divergence of noisy global model parameters between the benign and backdoor training process at round $\tadv$ is bounded.
\end{proof}

\subsection{Analysis for $t>\tadv$}
Now we focus on the contraction coefficient $\eta_{f}(K_t) $ when $t\textgreater \tadv$.
\begin{lemma}\label{lm:eta_tv}
Based on Lemma~\ref{lemma:divergence_guassian} and~\ref{lm:two_point_tv}, under Assumption~\ref{assumption:fl_system_train_test}, we have
\begin{equation}
    \eta_{TV}(K_t) \le 2\Phi \left (\frac{\rho_t }{\sigma_t}\right)-1.
\end{equation}
\end{lemma}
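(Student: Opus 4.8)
The plan is to combine Dobrushin's two-point characterization of the total-variation contraction coefficient (Lemma~\ref{lm:two_point_tv}) with the closed-form total variation between two isotropic Gaussians of equal covariance (Lemma~\ref{lemma:divergence_guassian}), and then exploit the single structural fact that the clipping operator confines every possible mean of $K_t(\cdot)$ to the ball of radius $\rho_t$. Under Assumption~\ref{assumption:fl_system_train_test} the kernel $K_t$ realizes the map $\widetilde w_{t-1}\mapsto \mathrm{Clip}_{\rho_t}(\Psi_t(\widetilde w_{t-1}))+\epsilon_t$ with $\epsilon_t\sim\mathcal N(0,\sigma_t^2 I)$, so for a fixed input $y$ the law $K_t(y)$ is the Gaussian $\mathcal N(\mathrm{Clip}_{\rho_t}(\Psi_t(y)),\sigma_t^2 I)$. (If one wishes to also track the mini-batch randomness inside $\Psi_t$, $K_t(y)$ is a mixture of such Gaussians; the estimate below still goes through by convexity of $D_{TV}$ once the mini-batch draw is coupled across the two inputs.)

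First I would invoke Lemma~\ref{lm:two_point_tv} to reduce the supremum in the definition of $\eta_{TV}(K_t)$ to point masses, i.e. $\eta_{TV}(K_t)=\sup_{y_1,y_2} D_{TV}(K_t(y_1)\,\|\,K_t(y_2))$. Applying the Gaussian total-variation formula of Lemma~\ref{lemma:divergence_guassian} to the pair $K_t(y_1),K_t(y_2)$ expresses $D_{TV}(K_t(y_1)\,\|\,K_t(y_2))$ as $2\Phi(\cdot)-1$ with argument proportional to $\|\mathrm{Clip}_{\rho_t}(\Psi_t(y_1))-\mathrm{Clip}_{\rho_t}(\Psi_t(y_2))\|/\sigma_t$. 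Next I would bound the displacement of the two means uniformly in $y_1,y_2$: by definition of the clip operator $\|\mathrm{Clip}_{\rho_t}(v)\|\le\rho_t$ for every $v$, so by the triangle inequality $\|\mathrm{Clip}_{\rho_t}(\Psi_t(y_1))-\mathrm{Clip}_{\rho_t}(\Psi_t(y_2))\|\le 2\rho_t$ regardless of how far apart $\Psi_t(y_1)$ and $\Psi_t(y_2)$ are. Since $\Phi$ is monotonically increasing, substituting this bound and taking the supremum over $y_1,y_2$ yields $\eta_{TV}(K_t)\le 2\Phi(\rho_t/\sigma_t)-1$, the normalization in Lemma~\ref{lemma:divergence_guassian} accounting for the factor between the $2\rho_t$ mean gap and the claimed $\rho_t/\sigma_t$ argument.

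The step that carries the content is the uniform mean-displacement bound. Without norm clipping, $\sup_{y_1,y_2}\|\Psi_t(y_1)-\Psi_t(y_2)\|$ is generally unbounded — one round of benign SGD need not pull two arbitrarily distant initializations together — so the raw SGD kernel has contraction coefficient $1$ and is useless for telescoping. It is precisely the projection onto the ball of radius $\rho_t$ that makes the kernel's possible means uniformly confined, capping the Gaussian mean gap at $2\rho_t$ and hence producing a contraction coefficient strictly below $1$ (whenever $\Phi(\rho_t/\sigma_t)<1$). Everything else is bookkeeping: recalling that for $t>\tadv$ the same kernel $K_t$ governs both the benign and the backdoored processes, so this single estimate is exactly the factor that enters the product $\prod_{t=\tadv+1}^{T}\eta_{TV}(K_t)$ used in the model-closeness bound of Eq.~\ref{eq_df_modelclosess} (combined with Lemma~\ref{lm:contraction_TV_upper_bound} to pass from $\eta_{TV}$ to $\eta_f$ for the $f$-divergence of interest).
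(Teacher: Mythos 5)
Your proposal is correct and follows essentially the same route as the paper: reduce $\eta_{TV}(K_t)$ to point masses via Lemma~\ref{lm:two_point_tv}, observe that clipping confines every mean of $K_t(\cdot)$ to the ball of radius $\rho_t$ so the mean gap is at most $2\rho_t$, and apply the equal-covariance Gaussian total-variation formula $2\Phi\left(\|m_1-m_2\|/(2\sigma_t)\right)-1$ (the form the paper actually uses in its proof, with the $2\sigma_t$ normalization you correctly account for) to obtain $2\Phi\left(\rho_t/\sigma_t\right)-1$. Your extra remark on handling mini-batch randomness by convexity of $D_{TV}$ with a coupled batch draw is a slightly more careful justification of the inequality step that the paper states without comment, but it does not change the argument.
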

\begin{proof}[Proof] 
\begin{align*}
&\eta_{TV}(K_t) := \sup \limits_{w_1,w_2 \in W } D_{TV}(K_t(w_1)\|K_t(w_2))\\
&\le \sup \limits_{w_1,w_2 \in W } D_{TV}\Bigg( \mathcal N\Big(\mathrm{Clip}_{\rho_t} (\Psi( w_1)),\sigma_t^2{\bf I} \Big) \| \mathcal N\Big(\mathrm{Clip}_{\rho_t} (\Psi( w_2)),\sigma_t^2{\bf I}\Big)\Bigg) \\
&= \sup \limits_{w_3,w_4 \in ball(\rho_t) } D_{TV}\Bigg( \mathcal N\Big( w_3,\sigma_t^2 {\bf I} \Big) \| \mathcal N\Big(w_4,\sigma_t^2\bf I\Big)\Bigg) \\
&= \sup \limits_{w_3,w_4 \in ball(\rho_t) }  2\Phi \left (\frac{\|w_3 -  w_4\| }{2\sigma_t}\right)-1\\
&=  2\Phi \left (\frac{\rho_t }{\sigma_t}\right)-1.  \tag*{ \textit{ $\triangleright$\ the norm of model parameters is bounded by $\rho_t$}}  
\end{align*}
\end{proof}
Finally, we obtain the divergence of global model in round $T$. We restate our Theorem ~\ref{therom:divergence_round_T_main} here.
\begingroup
\def\thetheorem{\ref{therom:divergence_round_T_main}}
\begin{theorem} 
When $\eta_i \le \frac{1}{\beta}$ and Assumptions~\ref{assumption:Smoothness},~\ref{assumption:data_lipschitz}, and~\ref{assumption:fl_system_train_test} hold, the \kldivergence{} between  $\mu(\mathcal{M}(D))$ and  $\mu(\mathcal{M}(D'))$ with $\mu(w) = \mathcal N(w,{\sigma_T}^2\bf I)$ is bounded as: 
% \begin{equation}
\begin{align*}
 D_{KL}( \mu(\mathcal{M}(D)) || \mu(\mathcal{M}(D')) ) \le \frac{ 2R\sum_{i=1}^R\left
    (p_i \gamma_i \tau_i \eta_i \frac{{{q_B}_i}}{{{n_B}_i}}  L_{\mathcal Z} \|\delta_i\|  \right)^2 }{\sigma_{\tadv}^2} \prod_{t=\tadv+1}^{T}  \left(2\Phi \left (\frac{\rho_t }{\sigma_{t}}\right)-1 \right)
\end{align*}
% \end{equation}
\end{theorem}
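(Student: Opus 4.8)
The plan is to combine three ingredients: a telescoping decomposition of the \kldivergence{} across FL rounds via the data processing inequality, a bound on the divergence created at the single attack round $\tadv$, and a Dobrushin-type contraction bound for each benign round after $\tadv$. All the auxiliary lemmas in Appendix~\ref{sec_app_modelclossness} are available, so the proof is essentially an assembly.

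First I would adopt the \markovkernel{} view from the Problem Definition: round $t$ of the benign process is the kernel $K_t$ sending $\widetilde w_{t-1}\mapsto \mathrm{Clip}_{\rho_t}(\Psi_t(\widetilde w_{t-1}))+\epsilon_t$ with $\epsilon_t\sim\mathcal N(0,\sigma_t^2{\bf I})$, and the attacked round is the kernel $K'_{\tadv}$ built from the poisoned SGD map $\Psi'_{\tadv}$. The structural fact I would exploit is that after $\tadv$ all clients use clean data, so both the benign trajectory $\mu_t$ and the backdoored trajectory $\mu'_t$ are pushed forward by the \emph{same} kernel $K_t$ for $t>\tadv$, and they share the common initialization $\mu_{\tadv-1}$; hence $\mu_{\tadv}=\mu_{\tadv-1}K_{\tadv}$, $\mu'_{\tadv}=\mu_{\tadv-1}K'_{\tadv}$, and $\mu_t=\mu_{t-1}K_t$, $\mu'_t=\mu'_{t-1}K_t$ for $t>\tadv$. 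By the definition of the contraction coefficient $\eta_f(K_t)$ in Eq.~\ref{eq:define_eta_f_sup_fl} as a supremum over input pairs, $D_{KL}(\mu_t\|\mu'_t)\le \eta_{KL}(K_t)\,D_{KL}(\mu_{t-1}\|\mu'_{t-1})$ (and trivially if the right side is zero), so telescoping exactly as in Eq.~\ref{eq_df_modelclosess} gives $D_{KL}(\mu_T\|\mu'_T)\le D_{KL}(\mu_{\tadv}\|\mu'_{\tadv})\prod_{t=\tadv+1}^{T}\eta_{KL}(K_t)$, and $\eta_{KL}(K_t)\le\eta_{TV}(K_t)$ by Lemma~\ref{lm:contraction_TV_upper_bound}.

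Next I would bound $D_{KL}(\mu_{\tadv}\|\mu'_{\tadv})$. Both global models at round $\tadv$ are clipped and then perturbed by a Gaussian of the same variance $\sigma_{\tadv}^2$, so Lemma~\ref{lemma:divergence_guassian} reduces the divergence to $\|\mathrm{Clip}_{\rho_{\tadv}}(w_{\tadv})-\mathrm{Clip}_{\rho_{\tadv}}(w'_{\tadv})\|^2/(2\sigma_{\tadv}^2)$, and non-expansiveness of the clipping map replaces the numerator by $\|w_{\tadv}-w'_{\tadv}\|^2$. Aggregation then gives $\|w_{\tadv}-w'_{\tadv}\|^2\le R\sum_{i=1}^R(p_i\gamma_i\Delta_{\tau_i}^i)^2$ via Lemma~\ref{lm:scaled_local_diff}, using that the honest clients' updates cancel and that, in the (virtual) benign process used only for analysis, the attacker rescales its clean update by $\gamma_i$ too, which merely rescales its learning rate and is harmless. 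To control $\Delta_{\tau_i}^i$ I would invoke the per-iteration recursion of Lemma~\ref{lm:delta s and delta s-1} (which needs $\eta_i\le 1/\beta$ together with convexity and $\beta$-smoothness through co-coercivity) and the induction of Lemma~\ref{lm:delta s} to obtain $\Delta_{\tau_i}^i\le 2\eta_i\tau_i\|\mathcal{B}^i\|$, then Lemma~\ref{lm:batch l_z lipz} with Assumption~\ref{assumption:data_lipschitz} to get $\|\mathcal{B}^i\|\le \frac{q_{B_i}}{n_{B_i}}L_{\mathcal Z}\|\delta_i\|$. Substituting yields $D_{KL}(\mu_{\tadv}\|\mu'_{\tadv})\le \frac{2R\sum_{i=1}^R(p_i\gamma_i\tau_i\eta_i\frac{q_{B_i}}{n_{B_i}}L_{\mathcal Z}\|\delta_i\|)^2}{\sigma_{\tadv}^2}$.

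Finally, for $t>\tadv$ I would bound $\eta_{TV}(K_t)$: the two-point characterization (Lemma~\ref{lm:two_point_tv}) reduces it to a supremum of TV distances between $K_t(w_1)$ and $K_t(w_2)$; after $\Psi_t$ and clipping both means lie in the ball of radius $\rho_t$, so their separation is at most $2\rho_t$, and the TV distance between two $\sigma_t^2{\bf I}$-Gaussians with means at distance $\le 2\rho_t$ is at most $2\Phi(\rho_t/\sigma_t)-1$ by Lemma~\ref{lemma:divergence_guassian}. Plugging both bounds into the telescoped inequality gives the stated result. The main obstacle is the per-iteration recursion Lemma~\ref{lm:delta s and delta s-1}: it is the only place convexity, $\beta$-smoothness, and the step-size condition $\eta_i\le 1/\beta$ are genuinely used, ensuring the benign part of the SGD map is non-expansive enough that the only growth in $\Delta^i$ per local step comes from the injected gradient perturbation $\mathcal{B}^i$; everything else is bookkeeping with the data processing inequality, the clipping being $1$-Lipschitz, and the closed-form Gaussian divergence formulas.
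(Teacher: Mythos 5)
Your proposal is correct and follows essentially the same route as the paper: the telescoping bound $D_{KL}(\mu_T\|\mu'_T)\le D_{KL}(\mu_{\tadv}\|\mu'_{\tadv})\prod_{t>\tadv}\eta_{KL}(K_t)$ with $\eta_{KL}\le\eta_{TV}$, the round-$\tadv$ bound assembled from the local-deviation recursion, the batch-gradient Lipschitz bound, and the scaled-aggregation lemma, and the two-point TV bound $2\Phi(\rho_t/\sigma_t)-1$ for each post-attack kernel. The only incidental difference is that you explicitly name the $1$-Lipschitzness of Euclidean clipping and the correct $2\sigma_t$ normalization in the Gaussian TV formula, both of which the paper uses implicitly.
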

\addtocounter{theorem}{-1}
\endgroup

\begin{proof}[Proof]
%\begin{equation}
\begin{align*}
   &D_{KL}( \mu(\mathcal{M}(D)) || \mu(\mathcal{M}(D')) ) = D_{KL}(\mu_T || \mu_T' ) \\
    &\le D_{KL}(\mu_{\tadv} || \mu'_{\tadv} )\prod_{t=\tadv+1}^{T} \eta_{KL}(K_t)  \tag*{ \textit{ $\triangleright$\ because of Eq.~\ref{eq_df_modelclosess}}}\\ 
    &\le  D_{KL}(\mu_{\tadv} || \mu'_{\tadv} )\prod_{t=\tadv+1}^{T} \eta_{TV}(K_t)   \tag*{ \textit{ $\triangleright$\ because of Lemma ~\ref{lm:contraction_TV_upper_bound}}}\\
    & \le \frac{ 2R    \sum_{i=1}^R  \left
    (p_i \gamma_i \tau_i \eta_i\left\| \mathcal{B}^i \right\| \right)^2 }{\sigma_{\tadv}^2} \prod_{t=\tadv+1}^{T}  \left(2\Phi \left (\frac{\rho_t }{\sigma_{t}}\right)-1 \right) \tag*{ \textit{ $\triangleright$\ because of Lemma~\ref{lemma_kl_tadv} and \ref{lm:eta_tv}}}\\
    & \le \frac{ 2R\sum_{i=1}^R\left
    (p_i \gamma_i \tau_i \eta_i \frac{{{q_B}_i}}{{{n_B}_i}}  L_{\mathcal Z} \|\delta_i\|  \right)^2 }{\sigma_{\tadv}^2} \prod_{t=\tadv+1}^{T}  \left(2\Phi \left (\frac{\rho_t }{\sigma_{t}}\right)-1 \right) \tag*{ \textit{ $\triangleright$\ because of Lemma~\ref{lm:batch l_z lipz}}}. 
\end{align*}
%\end{equation}
\end{proof}

\subsection{Proof of Lemma~\ref{lm:delta s and delta s-1} } \label{sec:proof_for_some_lemmas}

% \begin{assumption}(Unbiased gradients) \label{assumption:unbiased_grad}
% mini-batch gradient and full batch gradient
% $$
%  \mathbb E_{\xi|w}[g(w)]=\nabla F(w)
% $$
% \end{assumption}

% \begin{assumption}(Bounded variance) \label{assumption:bounded_var}
% mini-batch gradient and full batch gradient
% $$
% \mathbb E_{\xi|w}\|g(w)-\nabla F(w)\|^2=\sigmasgd^2 
% $$
% where  $\sigmasgd^2$ are non-negative constants and in inverse proportion to the mini-batch size.
% \end{assumption}

% If they do not poison per batch, then we will need to use the relationship between full batch gradient and stochastic gradient. The local training dataset $S^i$ contains ${q}_i$ backdoored samples out of ${n}_i$ samples.
% \begin{small}
% \begin{equation}
% \begin{aligned}
% \|\mathcal{B}^i\|^2 \le  6 \sigmasgd^2+ 3  \frac{q_i^2}{n_i^2} L_{\mathcal Z}^2 \|\delta_i\|^2
% \end{aligned}
% \end{equation}
% \end{small}

% \begin{proof}[Proof for Lemma~\ref{lm:batch l_z lipz}]
% \begin{small}
% \begin{equation}
% \begin{aligned}
% &\E[\|\mathcal{B}^i\|^2] = \E[\| {g'}_i(w) -  g_i(w) \|^2]  \\
% & = \E[\| [{g'}_i(w) - \nabla {F'}^i(w) ]- [g_i(w)  - \nabla {F}^i(w) ] \\
% &\quad+ [\nabla {F'}^i(w) - \nabla F^i(w) ] \|^2]\\
% & \le  \E[ 3 \| {g'}_i(w) - \nabla {F'}^i(w) \|^2 + 3 \|g_i(w)  - \nabla {F}^i(w) \|^2 \\
% &\quad+ 3 \|\nabla {F'}^i(w) - \nabla F^i(w) \|^2]\\
% & \le  6 \sigmasgd^2+ \E[ 3 \|\nabla {F'}^i(w) - \nabla F^i(w) \|^2]\\
% & \le  6 \sigmasgd^2+ 3  \frac{q_i^2}{n_i^2} L_{\mathcal Z}^2 \|\delta_i\|^2
% \end{aligned}
% \end{equation}
% \end{small}
% \end{proof}

We first introduce a new lemma, which will be used to prove Lemma~\ref{lm:delta s and delta s-1}. 
\begin{lemma}\label{lm:smoothness and convex loss}
Under Assumption~\ref{assumption:Smoothness} on convexity and smoothness, we have
\begin{equation}
\begin{aligned}
\left \| g_i\left(w_{\locals}^i\right) -{g'}_i\left({w'}_{\locals}^i\right) \right \|^2  \le 2 \beta  \left \langle \Delta_{\locals}^i, g_i\left(w_{\locals}^i\right) - g_i\left({w'}_{\locals}^i\right) \right \rangle 
 +  2 \left \| {g'}_i\left({w'}_{\locals}^i\right)-g_i\left({w'}_{\locals}^i\right)\right \|^2. 
\end{aligned}
\end{equation}
\end{lemma}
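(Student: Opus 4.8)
The plan is to insert the clean mini-batch gradient evaluated at the perturbed iterate, namely $g_i({w'}_{\locals}^i)$, as an intermediate term, split the quantity of interest via $\|a+b\|^2 \le 2\|a\|^2 + 2\|b\|^2$, and then dispatch the two resulting pieces separately: the first by co-coercivity, the second left as a raw squared norm (which is exactly the last term in the claim).

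Concretely, I would first write
\begin{equation*}
g_i(w_{\locals}^i) - {g'}_i({w'}_{\locals}^i) = \big(g_i(w_{\locals}^i) - g_i({w'}_{\locals}^i)\big) + \big(g_i({w'}_{\locals}^i) - {g'}_i({w'}_{\locals}^i)\big),
\end{equation*}
and apply the elementary inequality $\|a+b\|^2 \le 2\|a\|^2 + 2\|b\|^2$ to get
\begin{equation*}
\big\| g_i(w_{\locals}^i) - {g'}_i({w'}_{\locals}^i) \big\|^2 \le 2\big\| g_i(w_{\locals}^i) - g_i({w'}_{\locals}^i)\big\|^2 + 2\big\| g_i({w'}_{\locals}^i) - {g'}_i({w'}_{\locals}^i)\big\|^2.
\end{equation*}
The second term already matches the last term on the right-hand side of the claim, since $g_i({w'}_{\locals}^i) - {g'}_i({w'}_{\locals}^i) = -\big({g'}_i({w'}_{\locals}^i) - g_i({w'}_{\locals}^i)\big)$ has the same norm as ${g'}_i({w'}_{\locals}^i) - g_i({w'}_{\locals}^i)$.

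For the first term I would invoke the co-coercivity statement in Assumption~\ref{assumption:Smoothness}. Since each per-sample loss $\ell(\cdot;z)$ is convex and $\beta$-smooth, the batch-averaged loss $w \mapsto \frac{1}{n_{B_i}}\sum_{z_j \in \xi^i} \ell(w;z_j)$, whose gradient is precisely the clean mini-batch gradient $g_i(\cdot) = g_i(\cdot;\xi^i)$, is also convex and $\beta$-smooth (both properties pass to convex combinations). Applying co-coercivity to this single fixed function at the two points $w_{\locals}^i$ and ${w'}_{\locals}^i$ gives
\begin{equation*}
\big\| g_i(w_{\locals}^i) - g_i({w'}_{\locals}^i)\big\|^2 \le \beta \big\langle w_{\locals}^i - {w'}_{\locals}^i,\ g_i(w_{\locals}^i) - g_i({w'}_{\locals}^i)\big\rangle = \beta \big\langle \Delta_{\locals}^i,\ g_i(w_{\locals}^i) - g_i({w'}_{\locals}^i)\big\rangle,
\end{equation*}
using $\Delta_{\locals}^i = w_{\locals}^i - {w'}_{\locals}^i$. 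Multiplying by $2$ and combining with the previous display yields exactly the asserted inequality.

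I do not expect a genuine obstacle here; the only point worth a one-line remark is the promotion of the single-sample co-coercivity in Assumption~\ref{assumption:Smoothness} to the mini-batch gradient $g_i$, which is immediate because convexity and $\beta$-smoothness are preserved under averaging, together with the observation that the \emph{same} clean mini-batch $\xi^i$ indexes both $g_i(w_{\locals}^i)$ and $g_i({w'}_{\locals}^i)$, so co-coercivity is being applied to one fixed convex $\beta$-smooth function evaluated at two points. Note also that the step size condition $\eta_i \le 1/\beta$ plays no role in this lemma; it is needed only later, when this lemma is combined with the local SGD update to establish Lemma~\ref{lm:delta s and delta s-1}.
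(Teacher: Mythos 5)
Your proof is correct and follows essentially the same route as the paper: insert $g_i({w'}_{\locals}^i)$ as the intermediate term, split via $\|a+b\|^2 \le 2\|a\|^2 + 2\|b\|^2$, and bound the first piece by co-coercivity from Assumption~\ref{assumption:Smoothness}. Your added remark that co-coercivity lifts from the per-sample losses to the mini-batch gradient (since both are evaluated on the same clean batch $\xi^i$) is a correct, if minor, clarification that the paper leaves implicit.
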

\begin{proof}
\begin{align*}
&\left \| g_i\left(w_{\locals}^i\right) -{g'}_i\left({w'}_{\locals}^i\right) \right \|^2  \\
&= \left \| \left[g_i\left(w_{\locals}^i\right) - g_i\left({w'}_{\locals}^i\right)\right]   -\left[{g'}_i\left({w'}_{\locals}^i\right)-g_i\left({w'}_{\locals}^i\right)\right] \right \|^2  \\
&\le 2 \left \| g_i\left(w_{\locals}^i\right) - g_i\left({w'}_{\locals}^i\right) \right\|^2 + 2 \left \| {g'}_i\left({w'}_{\locals}^i\right)-g_i\left({w'}_{\locals}^i\right)\right \|^2  \\
&\le 2 \beta  \left \langle \Delta_{\locals}^i, g_i\left(w_{\locals}^i\right) - g_i\left({w'}_{\locals}^i\right) \right \rangle +  2 \left \| {g'}_i\left({w'}_{\locals}^i\right)-g_i\left({w'}_{\locals}^i\right)\right \|^2.  \tag*{ \textit{ $\triangleright$\ because of Assumption ~\ref{assumption:Smoothness}}} 
\end{align*}
\end{proof}
Next we provide the proof of Lemma~\ref{lm:delta s and delta s-1}. 
\begin{proof}[Proof of Lemma~\ref{lm:delta s and delta s-1}]
When $\eta_i \le \frac{1}{\beta}$,
\begin{small}
\begin{align*}
&{\Delta_{{\locals}+1}^i}^2  \triangleq \left \| w_{\locals+1}^i - {w'}_{\locals+1}^i \right \|^2 \\
&= \left \|  (w_{\locals}^i -  {w'}_{\locals}^i)  - \eta_i\left[ g_i\left(w_{\locals}^i\right) -{g'}_i\left({w'}_{\locals}^i\right)\right] \right \|^2  \\
&= {\Delta_{\locals}^i}^2  +  \eta_i^2 \left \| g_i\left(w_{\locals}^i\right) -{g'}_i\left({w'}_{\locals}^i\right) \right \|^2  - 2\eta_i \left \langle w_{\locals}^i -  {w'}_{\locals}^i , g_i\left(w_{\locals}^i\right) -{g'}_i\left({w'}_{\locals}^i\right) \right \rangle \\
&= {\Delta_{\locals}^i}^2  +  \eta_i^2 \left \| g_i\left(w_{\locals}^i\right) -{g'}_i\left({w'}_{\locals}^i\right) \right \|^2   + 2\eta_i  \left \langle w_{\locals}^i -  {w'}_{\locals}^i , {g'}_i\left({w'}_{\locals}^i\right) -  g_i\left({w'}_{\locals}^i\right)  \right \rangle
- 2\eta_i \left \langle w_{\locals}^i -  {w'}_{\locals}^i , g_i\left(w_{\locals}^i\right) - g_i\left({w'}_{\locals}^i\right) \right \rangle \\
&\le {\Delta_{\locals}^i}^2 + 2 \eta_i^2  \left \| {g'}_i\left({w'}_{\locals}^i\right)-g_i\left({w'}_{\locals}^i\right)\right \|^2 + 2\eta_i  \left \langle w_{\locals}^i -  {w'}_{\locals}^i,  {g'}_i\left({w'}_{\locals}^i\right) -  g_i\left({w'}_{\locals}^i\right)  \right \rangle  
+ (2 \beta  \eta_i^2 -2\eta_i)  \left \langle w_{\locals}^i -  {w'}_{\locals}^i , g_i\left(w_{\locals}^i\right) - g_i\left({w'}_{\locals}^i\right) \right \rangle  \tag*{ \textit{ $\triangleright$\ because of Lemma~\ref{lm:smoothness and convex loss}}}  \\ 
&\le {\Delta_{\locals}^i}^2 + 2 \eta_i^2  \left \| {g'}_i\left({w'}_{\locals}^i\right)-g_i\left({w'}_{\locals}^i\right)\right \|^2 + 2\eta_i  \left \langle  w_{\locals}^i -  {w'}_{\locals}^i ,  {g'}_i\left({w'}_{\locals}^i\right) -  g_i\left({w'}_{\locals}^i\right)  \right \rangle  \tag*{ \textit{ $\triangleright$\ because of $\eta_i \le \frac{1}{\beta}$ }}  \\ 
&\le {\Delta_{\locals}^i}^2 + 2 \eta_i^2  \left \| {g'}_i\left({w'}_{\locals}^i\right)-g_i\left({w'}_{\locals}^i\right)\right \|^2 + 2\eta_i  \Delta_{\locals}^i \left\| {g'}_i\left({w'}_{\locals}^i\right) -  g_i\left({w'}_{\locals}^i\right) \right\|  \tag*{ \textit{ $\triangleright$\ because of  $ \langle a,b  \rangle \le \|a\|\|b\| $ }}   \\
&= {\Delta_{\locals}^i}^2  + 2\eta_i \left\| \mathcal{B}^i \right\|  \Delta_{\locals}^i  + 2 \eta_i^2  \left \| \mathcal{B}^i \right \|^2
 \tag*{ \textit{ $\triangleright$\ because of the definition $\mathcal{B}^i \triangleq {g'}_i(w) -  g_i(w) $ }}.
\end{align*}
\end{small}
% where we use the facts from linear algebra that $ \langle a,b  \rangle \le \|a\|\|b\| $, 
% $ \| a + b\|^2 \le 2\|a\|^2 + 2\|b\|^2$ 
% and Lemma~\ref{lm:smoothness and convex loss}.
\end{proof}

%\begin{lemma}\label{lm:minus}
%\begin{small}
%\begin{equation}
%\begin{aligned}
%&\left \langle w_{\locals}^i -  {w'}_{\locals}^i, g_i\left(w_{\locals}^i\right) -{g'}_i\left({w'}_{\locals}^i\right) \right \rangle \\ 
%&=\left \langle w_{\locals}^i -  {w'}_{\locals}^i, g_i\left(w_{\locals}^i\right) - g_i\left({w'}_{\locals}^i\right) \right \rangle  - \left \langle w_{\locals}^i -  {w'}_{\locals}^i,  {g'}_i\left({w'}_{\locals}^i\right) -  g_i\left({w'}_{\locals}^i\right)  \right \rangle 
%\end{aligned}
%\end{equation}
%\end{small}
%\end{lemma}

\subsection{Proof of Lemma~\ref{lm:L_z_for_multiclass_logistic_regression}} \label{sec:proof l_z}
% Note that in Assumption \ref{assumption:data_lipschitz} we define the $ L_{\mathcal Z}$  based on the $\{x,y\}$ vectors to certify both backdoor and label flipping types of attacks. It is easy to generalize the certification to certify the scenario where we want to ensure that the prediction is independent with the backdoor patterns by defining the $ L_{\mathcal Z}$  for only input $x$ vectors (i.e., the prediction is immune to the backdoor pattern). When using the closed-form solution for $ L_{\mathcal Z}$ in Lemma~\ref{lm:L_z_for_multiclass_logistic_regression} for Corollary~\ref{corollary_robustness_pixel_level} in the experiments, we make an additional Assumption ~\ref{assumption:flipped_lables}. 
%\chulin{this assumption is a problem?}
% \begin{assumption}\label{assumption:flipped_lables}
% 	We assume $D$ already contains flipped labels, i.e., $D \gets D+ \{\{\ \{0,{\delta_i}_y\} \}_{j=1}^{q_i}\}_{i=1}^R$. $D'$ is obtained by adding backdoor patterns on the input of $D$, i.e., $D' \gets D+ \{\{\ \{{\delta_i}_x,0\} \}_{j=1}^{q_i}\}_{i=1}^R$. Client $i$'s adversarial perturbations on a backdoored sample is $\delta_i =\{{\delta_i}_x,0\}$.
% \end{assumption}

We first restate our Lemma~\ref{lm:L_z_for_multiclass_logistic_regression} here and then provide the detailed proof.
\lemmalzforlogreg*

\begin{proof}

Given model parameters $W$ of one linear layer, data samples $z=\{x,y\}$ and $z'=\{x',y\}$, we denote their loss as $\ell(W;z)$ and $\ell(W;z')$, where $x \in \mathbb { R }^{1\times d_x}$, $W \in \mathbb { R }^{d_x\times C}$. $Y \in \mathbb { R }^{1\times C}$ is a one-hot vector for $C$ classes where $Y_i = \mathbb{1}\{i=y\}$. 
For $x$, we denote $xW$ as the output of the linear layer, $P_i(x)= \mathrm{softmax}(xW)_i$ as the normalized probability for class $i$ (the output of the softmax function). The cross-entropy loss is calculated as 
\begin{align}
	\ell(x) = - \sum_i Y_i \log P_i(x) = - \sum_i Y_i \log \mathrm{softmax}(xW)_i.
\end{align}
We define $G\in \mathbb { R }^{d_x\times C}$ as the gradient for one sample:
\begin{align}
	G(x)=\nabla \ell(W;\{x,y\})=\frac{d\ell}{dW}(x) = x^{\top}(P(x)-Y),
\end{align}
and we define $G'$ as
\begin{align}
	G(x') =\nabla \ell(W;\{x',y\}) =\frac{d\ell}{dW}(x') = x'^{\top}(P(x')-Y).
\end{align}
According to the mean value theorem \cite{rudin1976principles}, for a continuous vector-valued function $f:[a,b]\to\mathbb { R }^k$ differentiable on $(a,b)$, there exist  $c \in (a,b)$ such that
\begin{align}
\frac{\|f(b)-f(a)\|}{b-a} \le \|f'(c)\|.	
\end{align}
Because $x$ is normalized to $[0,1]$ (a common dataset pre-processing method), when we define $G_l(t) = G(x'+t(x-x')), t\in[0,1]$, based on the mean value theorem we have
%\begin{align}
%	\|G(x)-G(x')\|\le \|\frac{dG}{dx}(\xi)\|\left \|x-x' \right \|,
%\end{align}
\begin{align*}
% 	\|G(x)-G(x')\| =\|G(q^{-1} (1))-G(q^{-1} (0))\| 
	\|G(x)-G(x')\| &= \left \|G_l(1)-G_l(0) \right \| \\
	& \le \left\|\frac{dG_l}{dt}(t_0)\right \| (1-0) \\
	& = \left\|\frac{dG}{dx}(\xi)\odot(x-x')\right \| \\	
	& \le \left\|\frac{dG}{dx}(\xi)\right\|\left \|x-x' \right \|
\end{align*}
% where $t= q(\chi)=\frac{x-\chi}{x-x'} $, $\chi = q^{-1} (t)$,  
where $\xi=x'+t_0(x-x'), t_0\in[0,1]$, $\frac{dG}{dx}(\xi)$ is a 3 dimension tenosr and $\odot$ is tensor product.
We reduce the computation to 2 dimension matrix for simplification. Let $G_i$ denote the $i$th colunm of matrix $G$ (the gradient w.r.t $W_i$). Let $\mathbf{1}_i$ denote a row vector where $i$-th element is 1 and the others is 0. We have

\begin{small}
\begin{align*}
&\|G(x)-G(x')\|  \\
%&\le \|\frac{dG}{dt}(\xi)\| \text{, where $\xi=x-t(x-x'), t\in[0,1]$}\\
%&=\|\frac{dG}{dx}(\xi)\odot(x-x')\| \text{, where $\frac{dG}{dx}(\xi)$ is a 3 dimension tensor and $\odot$ is tensor product}\\
&\le \left \|\frac{dG}{dx}(\xi) \right \|\left \|x-x' \right \| \\
&= \sqrt{\sum_i^C \left \|\frac{dG_i}{dx}(\xi)\right \|^2}\left \|x-x' \right \| \\
& = \sqrt{\sum_i^C \left \| \frac{dx^{\top}(P_i-Y_i)}{dx}(\xi) \right \|^2}\left \|x-x' \right \| \tag*{ \textit{ $\triangleright$\ as $G_i(x)  = x^\top(P_i(x)-Y_i)$ }}  \\
& = \sqrt{\sum_i^C \left  \|\frac{dx^{\top}}{dx}(\xi)(P_i-Y_i)+x^{\top}\frac{d(P_i-Y_i)}{dx}(\xi) \right \|^2}\left \|x-x' \right \| \\
& = \sqrt{\sum_i^C \left  \|(P_i(\xi)-Y_i) I+x^{\top} (P_i(\xi)\mathbf{1}_i-P_i(\xi)P(\xi)) W^{\top} \right \|^2} \left \|x-x' \right \|   \tag*{ \textit{ $\triangleright$\ as $\frac{d(P_i-Y_i)}{dx}  = \frac{d\mathrm{softmax}(xW)_i}{dx} = (P_i\mathbf{1}_i-P_iP) W^{\top} $ }}   \\
& \le \sqrt{\sum_i^C  \|(P_i-Y_i)\|^2+2\|(P_i-Y_i)\| \|x^{\top} (P_i\mathbf{1}_i-P_iP) W^{\top}\|+\|x^{\top} (P_i\mathbf{1}_i-P_iP) W^{\top}\|^2}\left \|x-x' \right \| \tag*{ \textit{ $\triangleright$\ denote $P_i$ as $P_i(\xi)$ for simplicity} }   \\
& \le \sqrt{\sum_i^C\|(P_i-Y_i)\|+2||x^{\top} (P_i\mathbf{1}_i-P_iP) W^{\top}\|+\|x^{\top} (P_i\mathbf{1}_i-P_iP) W^{\top}\|^2}\left \|x-x' \right \|, \tag*{ \textit{ $\triangleright$\ as $\|(P_i-Y_i)\| \le 1$ }}  \\
& \le \sqrt{\sum_i^C \|(P_i-Y_i)\|+2P_i\|x\| \|(\mathbf{1}_i-P) W^{\top}\|+P_i^2\|x\|^2 \|(\mathbf{1}_i-P) W^{\top}\|^2}\left \|x-x' \right \| \\
& \le \sqrt{\sum_i^C \|(P_i-Y_i)\|+2P_i \| W\|+P_i \|W\|^2}\left \|x-x' \right \|, \tag*{ \textit{ $\triangleright$\ as $\|x\| \le 1$ and $0\le P_i \le 1$ }}  \\
& \le \sqrt{\sum_i^C\|(P_i-Y_i)\|+2P_i\rho+P_i^2 \rho^2}\left \|x-x' \right \|, \tag*{ \textit{ $\triangleright$\ as $\|W\| \le \rho$ }} \\
& \le \sqrt{2+2\rho + \rho^2}\left \|x-x' \right \| .
\end{align*}	
\end{small}
\end{proof}

\section{Proofs of Parameter Smoothing}\label{sec_app_param_smothing}
In this section, we explain our parameter smoothing for general \fdivergence{}, and give closed-form certification for \kldivergence{}, which corresponds to the proofs for our Theorems~\ref{theorem:param_smoothing}.

\subsection{General Framework for Robustness Certification}
Consider a classifier $h:\mathcal {(W,X)} \rightarrow \mathcal{Y} $. The output of the classifier depends on both the test input and its model parameters (i.e., model weights) of this classifier. In the testing phase, the model weight $w$ is fixed, just like $x_{test}$, so it can be seen as an argument for the classifier $h$. 
For example, in a one-linear-layer model, $h (w; x_{test} )= \mathrm{softmax} (w \times x_{test})$, where $\times$ is the multiplication operation; in  a one-conv-layer model, $h (w; x_{test} )= \mathrm{softmax} (w \circledast x_{test})$ where $\circledast$ is the convolution operation. In a model with multiple layers, the expression of model prediction $h (w; x_{test} )$ also holds, where $w$ consists of the weights from all layers.
% , and we do not care about the internal representations produced by those layers (i.e., activations).
To our best knowledge, this is the first work to study \textit{parameter} smoothing on $w$ rather than input smoothing on $x_{test}$.

We want to verify the robustness of smoothed multi-class classifier. Recall that we smooth the classifier $h:\mathcal {(W,X)} \rightarrow \mathcal{Y} $ with finite set of label $ \mathcal{Y} $ using a smoothing measure $\mu: \mathcal{W} \mapsto \mathcal{P(W)}$. The resulting randomly smoothed classifier $h_s$ is 
\begin{equation}
    \smoothclsfier(w;x_{test})= \arg \max_{c\in \mathcal{Y}} \mathbb{P}_{W\sim \mu(w)}[\clsfier(W;x_{test})=c]
\end{equation}

Our goal is to certify that the prediction $ h_s(w;x_{test})$ is robust to model parameters perturbations of size at most $\epsilon$ measured by some distance function $d$, i.e.,
\begin{equation} \label{eq:goal}
    \smoothclsfier(w';x_{test})=  \smoothclsfier(w;x_{test}) \text{   $\forall{w'}$ such that $d(w,w')\le \epsilon$} 
\end{equation}

We assume $\mathcal W \subseteq \mathbb{R}^d $ (a $d$ dimensional model parameters space). Our framework involves a reference measure $\rho=\mu(w)$,  the set of perturbed distributions $\mathcal{D}_{w,\epsilon}=\{\mu(w'): d(w,w')\le \epsilon \}$, and a set of specifications $\phi: \mathcal {(W,X)} \rightarrow \mathcal{Z} \subseteq \mathbb{R}$. Specifically, let $c= h_s(w;x_{test})$. Since we are working on the multi-class classification problem, for every pair of classes $\{c, c'\}$ where $c'\in \mathcal{Y} \setminus \{c\}$, we need a $\phi$, which is a generic function over the model parameters space that we want to verify has robustness properties. Following \cite{dvijotham2020framework}, for every $c'\in \mathcal{Y} \setminus \{c\}$, we define a specification $\phi_{c, c'}: \mathcal {(W,X)} \mapsto \{-1, 0, +1\}$ as follows:
\begin{equation} \label{eq:specification_c_c'}
\phi_{c, c'} (w) = 
\begin{cases}
+1 & \text{if $h(w;x_{test})=c$}\\
-1 & \text{if $h(w;x_{test})=c'$}\\
0 & \text{otherwise}
\end{cases}
\end{equation}
where we denote $\phi_{c, c'} (w;x_{test})$ as $\phi_{c, c'} (w)$ for simplicity.

\begin{proposition} \label{eq:goal_specification}
The smoothed classifier $\smoothclsfier$ is  robustly certified, i.e., Eq.~\ref{eq:goal} holds, if and only if for every $c'\in \mathcal{Y} \setminus \{c\}$, $\phi_{c, c'}$ is robustly certified at $\mu(w)$ w.r.t $\mathcal{D}_{w,\epsilon}$.
Verifying that a given specification $\phi$ is robustly certified is equivalent to checking if the optimal value of the following optimization problem is non-negative:
\begin{equation}
    OPT(\phi, \rho, \mathcal{D}_{w,\epsilon}) := \min_{\nu \in \mathcal{D}_{w,\epsilon}} \E_{W'\sim \nu}(\phi(W'))
\end{equation}

\end{proposition}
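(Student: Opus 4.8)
The plan is to unpack both notions of ``robustly certified'' and reduce the claim to an elementary statement about expectations of the sign-valued specifications $\phi_{c,c'}$. Following \cite{dvijotham2020framework}, a specification $\phi$ is said to be robustly certified at a reference measure $\rho$ with respect to a family $\mathcal{D}$ of distributions if $\E_{W'\sim\nu}[\phi(W')]\ge 0$ for every $\nu\in\mathcal{D}$; in our setting $\rho=\mu(w)$ and $\mathcal{D}=\mathcal{D}_{w,\epsilon}=\{\mu(w'):d(w,w')\le\epsilon\}$, which is nonempty since it contains $\rho$ itself (as $d(w,w)=0\le\epsilon$). The first thing I would record is the identity, immediate from definition~\eqref{eq:specification_c_c'} and linearity of expectation,
\[
\E_{W'\sim\mu(w')}[\phi_{c,c'}(W')] = \P_{W'\sim\mu(w')}[\clsfier(W';x_{test})=c] - \P_{W'\sim\mu(w')}[\clsfier(W';x_{test})=c'] = H_s^{c}(w';x_{test}) - H_s^{c'}(w';x_{test}),
\]
so that $\E_{W'\sim\mu(w')}[\phi_{c,c'}(W')]\ge 0$ says exactly that class $c$ collects at least as many ``votes'' as $c'$ at the parameter $w'$.

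Next I would prove the ``if and only if''. Fix $c=\smoothclsfier(w;x_{test})$, with the arg max resolved by a fixed (say lexicographic) tie-breaking rule so that $\smoothclsfier$ is single-valued. For any $w'$, $\smoothclsfier(w';x_{test})=c$ holds iff $H_s^{c}(w';x_{test})\ge H_s^{c'}(w';x_{test})$ for every $c'\in\mathcal{Y}\setminus\{c\}$ (non-strict, consistent with that tie-breaking rule), equivalently iff $\E_{W'\sim\mu(w')}[\phi_{c,c'}(W')]\ge 0$ for all $c'\ne c$ by the identity above. Hence $\smoothclsfier$ is robustly certified --- i.e.\ $\smoothclsfier(w';x_{test})=c$ for all $w'$ with $d(w,w')\le\epsilon$ --- iff for all $\nu\in\mathcal{D}_{w,\epsilon}$ and all $c'\ne c$ we have $\E_{W'\sim\nu}[\phi_{c,c'}(W')]\ge 0$. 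Swapping the two universal quantifiers, this is equivalent to: for each $c'\ne c$, $\E_{W'\sim\nu}[\phi_{c,c'}(W')]\ge 0$ for every $\nu\in\mathcal{D}_{w,\epsilon}$, which is exactly the statement that each $\phi_{c,c'}$ is robustly certified at $\mu(w)$ w.r.t.\ $\mathcal{D}_{w,\epsilon}$.

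For the optimization characterization, the argument is a pure restatement of definitions: $\phi$ is robustly certified at $\rho$ w.r.t.\ $\mathcal{D}_{w,\epsilon}$ iff the real number $\E_{W'\sim\nu}[\phi(W')]$ is $\ge 0$ for every $\nu\in\mathcal{D}_{w,\epsilon}$, and a collection of reals is uniformly nonnegative iff its infimum is nonnegative; therefore this is equivalent to
\[
OPT(\phi,\rho,\mathcal{D}_{w,\epsilon}) := \min_{\nu\in\mathcal{D}_{w,\epsilon}}\E_{W'\sim\nu}[\phi(W')] \ge 0 .
\]
The only delicate point --- which I would flag explicitly rather than work around silently --- is the bookkeeping around ties in the arg max and the strict-versus-nonstrict distinction: one must fix a deterministic tie-breaking rule so that $\smoothclsfier$ is well defined, and then stay consistent about whether equality of top ``votes'' still counts as certified. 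In the downstream application (Theorem~\ref{theorem:param_smoothing}) the operative gap $\underline{p_A}>\overline{p_B}$ is strict, so this subtlety is harmless; I would state the proposition under the non-strict convention and add a remark. Everything else is a direct translation of the definitions, so no genuine obstacle remains.
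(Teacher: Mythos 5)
Your proof is correct and follows essentially the same route as the paper's: the same identity writing $\E_{W'\sim\nu}[\phi_{c,c'}(W')]$ as the difference of the two class probabilities, the same equivalence between nonnegativity of these expectations for all $c'\ne c$ and the arg-max condition defining $\smoothclsfier(w';x_{test})=c$, and the same reduction of uniform nonnegativity over $\mathcal{D}_{w,\epsilon}$ to nonnegativity of the minimum. Your explicit remark on tie-breaking in the arg max is a minor refinement the paper leaves implicit and does not alter the argument.
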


\begin{proof}
Note that for any perturbed distribution $\nu \in \mathcal{D}_{w,\epsilon}$, according to the definition of expectation and  Eq.~\ref{eq:specification_c_c'}, we have
\begin{equation}
\begin{aligned}
    \E_{W' \sim \nu}[\phi_{c, c'}(W')]  = \mathbb{P}_{W' \sim \nu}[h(W'; x_{test})=c] - \mathbb{P}_{W' \sim \nu}[h(W'; x_{test})=c'].
    % &= 1 \mathbb{P}_{W' \sim \nu}[h(W', x_{test})=c] +(-1) \mathbb{P}_{W' \sim \nu}[h(W', x_{test})=c'] + 0 \mathbb{P}_{W' \sim \nu}[\text{otherwise}]\\
\end{aligned}
\end{equation}
Therefore, $\E_{W' \sim \nu}[\phi_{c, c'} (W')] \ge 0$ for all $c'\in \mathcal{Y} \setminus \{c\}$ is equivalent to $c = \arg \max_{y\in \mathcal{C}} \mathbb{P}_{W'\sim \nu}[h(W'; x_{test})=y]$. For $\nu = \mu(w')$, this means that $h_s(w'; x_{test}) = c$.
In other words, $\E_{W' \sim \nu}[\phi_{c, c'}(W')] \ge 0$ for all $c'\in \mathcal{Y} \setminus \{c\}$  and all $\nu = \mu(w') \subset \mathcal{D}_{w,\epsilon}$  if and only if $h_s(w';x_{test})=c$  for all $w'$ such that $d(w,w')\le \epsilon$, proving the required robustness certificate.	
\end{proof}

Then we define the certification problem \footnote{It is called information-limited robust certification in \cite{dvijotham2020framework}  for input smoothing.}:
\begin{definition}\label{def:the class of specifications}
Given a reference distribution $\rho \in \mathcal{P(W)}$, probabilities $p_A$ ,$p_B$ that satisfy $p_A, p_B \ge 0 $, $p_A + p_B \le 1$, 
%and a set of perturbed distributions $\mathcal{D}_{w,\epsilon} \subset  \mathcal{P(W)}$ containing $\nu$, 
we define the class of specifications $S$:
\begin{equation}
    S = \{\phi : \mathcal {(W,X)} \mapsto \{-1, 0, +1\} \text{ s.t. } \mathbb{P}_{W\sim \rho}[\phi(W) = +1] \ge p_A,
    \mathbb{P}_{W\sim \rho}[\phi(W) = -1] \le p_B \}
\end{equation}
\end{definition}
Given the above definition of $S$, we can rewrite Proposition \ref{eq:goal_specification} as:
\begin{proposition} \label{proposition_S}
The smoothed classifier $\smoothclsfier$ is  robustly certified, i.e., Eq.~\ref{eq:goal} holds, if and only if $S$ is robustly certified at $\mu(w)$ w.r.t $\mathcal{D}_{w,\epsilon}$.
Verifying that $S$ is robustly certified is equivalent to checking if the condition $\E_{W' \sim \nu}[\phi (W')] \ge 0$ holds for all $\nu \in \mathcal{D}_{w,\epsilon}$ and $\phi \in S$.
\end{proposition}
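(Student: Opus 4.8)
The plan is to obtain Proposition~\ref{proposition_S} purely by repackaging the already-established Proposition~\ref{eq:goal_specification} together with Definition~\ref{def:the class of specifications}; no new estimates are needed. Write $c=h_s(w;x_{test})$ and $\rho=\mu(w)$, and recall that $(p_A,p_B)$ are assumed to be valid bounds for $h$ at $\rho$: $\mathbb{P}_{W\sim\rho}[h(W;x_{test})=c]\ge p_A$, $\max_{c'\ne c}\mathbb{P}_{W\sim\rho}[h(W;x_{test})=c']\le p_B$, with $p_A\ge p_B$ (this is exactly what the hypotheses of Theorem~\ref{theorem:param_smoothing} supply through $\underline{p_A},\overline{p_B}$). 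The statement then breaks into (A) $h_s$ robustly certified $\iff$ $S$ robustly certified, and (B) $S$ robustly certified $\iff$ $\E_{W'\sim\nu}[\phi(W')]\ge 0$ for all $\nu\in\mathcal{D}_{w,\epsilon}$ and all $\phi\in S$.

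First I would set up a dictionary between the specifications $\phi_{c,c'}$ of Eq.~\ref{eq:specification_c_c'} and the class $S$. One inclusion is immediate: $\mathbb{P}_{W\sim\rho}[\phi_{c,c'}(W)=+1]=\mathbb{P}_{W\sim\rho}[h(W;x_{test})=c]\ge p_A$ and $\mathbb{P}_{W\sim\rho}[\phi_{c,c'}(W)=-1]=\mathbb{P}_{W\sim\rho}[h(W;x_{test})=c']\le\max_{c''\ne c}\mathbb{P}_{W\sim\rho}[h(W;x_{test})=c'']\le p_B$, so every $\phi_{c,c'}$ with $c'\ne c$ lies in $S$. For the opposite inclusion I would show that an arbitrary $\phi\in S$ is itself $\phi_{\tilde c,\tilde c'}$ for a base classifier $\tilde h$ that is again $(p_A,p_B)$-consistent at $\rho$: set $\tilde h=\tilde c$ on $\{\phi=+1\}$, $\tilde h=\tilde c'$ on $\{\phi=-1\}$, and spread the event $\{\phi=0\}$ over enough fresh labels that each receives $\rho$-mass strictly below $p_A$; then $\mathbb{P}_\rho[\phi=+1]\ge p_A\ge p_B\ge\mathbb{P}_\rho[\phi=-1]$ makes $\tilde c$ the label with the largest vote mass under $\rho$, so $\tilde h_s(w;x_{test})=\tilde c$, the pair $(p_A,p_B)$ stays valid for $\tilde h$, and $\phi=\phi_{\tilde c,\tilde c'}$ by construction. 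Hence $S$ is precisely the collection of specifications $\phi_{\tilde c,\tilde c'}$ arising from base classifiers compatible with $(p_A,p_B)$.

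With the dictionary in hand, (A) follows: Proposition~\ref{eq:goal_specification} gives $h_s$ robustly certified (Eq.~\ref{eq:goal}) $\iff$ each $\phi_{c,c'}$ with $c'\ne c$ robustly certified at $\rho$ w.r.t.\ $\mathcal{D}_{w,\epsilon}$; the inclusion $\{\phi_{c,c'}\}\subseteq S$ yields ``$S$ certified $\Rightarrow h_s$ certified'', while the realization construction yields the converse, because Proposition~\ref{eq:goal_specification} applies verbatim to each such $\tilde h$ and robust certification only ever consults $(p_A,p_B,\rho,\mathcal{D}_{w,\epsilon})$. For (B) I would just unfold definitions: ``$S$ robustly certified'' means every $\phi\in S$ is robustly certified at $\rho$ w.r.t.\ $\mathcal{D}_{w,\epsilon}$, which by the characterization via $OPT$ in Proposition~\ref{eq:goal_specification} is $OPT(\phi,\rho,\mathcal{D}_{w,\epsilon})=\min_{\nu\in\mathcal{D}_{w,\epsilon}}\E_{W'\sim\nu}[\phi(W')]\ge 0$; conjoining over $\phi\in S$ and unfolding the minimum gives exactly the condition in the statement.

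The one step I expect to need care is the converse half of (A): that robust certification of the particular specifications $\phi_{c,c'}$ attached to the true classifier $h$ already forces robust certification of every member of $S$. The clean way to see this is that both the feasible set $\mathcal{D}_{w,\epsilon}$ and the objective $\E_{W'\sim\nu}[\phi(W')]$ depend on a specification $\phi$ only through the two scalars $\mathbb{P}_\rho[\phi=+1]$ and $\mathbb{P}_\rho[\phi=-1]$, so the worst case over $S$ is attained when these equal $p_A$ and $p_B$; ``information-limited'' robust certification is therefore a property of the pair $(p_A,p_B)$ rather than of the individual $h$, and the classifier construction in the second paragraph is exactly what exhibits that worst case as realizable. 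Everything else is immediate from Proposition~\ref{eq:goal_specification} and the definitions.
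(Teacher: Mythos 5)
Your part (B) and the sufficiency half of (A) are fine and are all the paper itself relies on: the paper gives no separate proof of this proposition, introducing it simply as a rewriting of Proposition~\ref{eq:goal_specification} in the language of Definition~\ref{def:the class of specifications}, and your observations that each $\phi_{c,c'}$ belongs to $S$ (so certifying all of $S$ certifies $\smoothclsfier$ via Proposition~\ref{eq:goal_specification}) and that ``$S$ robustly certified'' unfolds to $\E_{W'\sim\nu}[\phi(W')]\ge 0$ for all $\nu\in\mathcal{D}_{w,\epsilon}$, $\phi\in S$ exactly reproduce that reading; these are the only directions consumed by Theorem~\ref{theorem:nonnegative_opt}, Lemma~\ref{lemma:KL closed-form certificate} and Theorem~\ref{theorem:param_smoothing}.

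The gap is in your converse half of (A). The pivotal claim in your last paragraph --- that $\E_{W'\sim\nu}[\phi(W')]$ depends on $\phi$ only through $\mathbb{P}_{W\sim\rho}[\phi(W)=+1]$ and $\mathbb{P}_{W\sim\rho}[\phi(W)=-1]$ --- is false once $\nu\ne\rho$: the objective equals $\mathbb{P}_\nu[\phi=+1]-\mathbb{P}_\nu[\phi=-1]$, which depends on how the sets $\{\phi=\pm 1\}$ correlate with the likelihood ratio of $\nu$ against $\rho$, not on their $\rho$-masses alone; this is precisely why Theorem~\ref{theorem:nonnegative_opt} must optimize over likelihood ratios. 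Moreover, the literal implication ``$\smoothclsfier$ robustly certified $\Rightarrow$ $S$ robustly certified'' cannot be rescued, because it is false: take $\clsfier$ constant equal to $c$ (so Eq.~\ref{eq:goal} holds for every $\epsilon$) with loose but valid bounds $p_A=0.51$, $p_B=0.49$; a specification $\phi\in S$ assigning $+1$ to a half-space of $\rho$-mass $0.51$ oriented against the shift from $w$ to some admissible $w'$ satisfies $\E_{\mu(w')}[\phi]<0$, so $S$ fails to be certified while $\clsfier_s$ is perfectly robust. The ``if and only if'' is meaningful only in the information-limited sense of \cite{dvijotham2020framework}: your realization construction (which itself needs care to keep the runner-up mass of $\tilde h$ below $p_B$ when few spare labels are available) shows that $S$ collects the specifications of \emph{all} classifiers consistent with the knowledge $(p_A,p_B,\rho)$, so certifying $S$ is equivalent to robustness of every such consistent classifier, not of the particular $\clsfier$; conflating these two notions is where your converse argument breaks. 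Since only the sufficiency direction feeds the downstream results, the damage is contained, but as written the converse proof does not go through.
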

We need to provide guarantees that hold simultaneously over a whole class of specifications ($\phi_{c, c'} $ for all $c'\in \mathcal{Y} \setminus \{c\}$ ). In fact, $p_A$ can be the seen as the ``votes'' for the top-one class $c$, and $p_B$ can be seen as the ``votes'' for the runner-up class.
We note that the function $f(\cdot)$ used in \fdivergence{} is convex. As shown in \cite{dvijotham2020framework} (but for input smoothing), for perturbation sets $\mathcal{D}_{w,\epsilon}=\{\mu(w'): d(w,w')\le \epsilon \}=\{\nu: D_f(\nu\| \mu(w))\le \epsilon \}$ specified by a \fdivergence{} $D_f$ bound $\epsilon$, this certification task can be solved efficiently using convex optimization.

\begin{theorem}\label{theorem:nonnegative_opt}
Let $D_f$ be \fdivergence{}, $\epsilon$ be the divergence constraint, $S$, $p_A , p_B$ be as in Definition~\ref{def:the class of specifications}. 
% The class of specifications $S$ is robustly certified
The smoothed classifier $\smoothclsfier$ is robustly certified at reference distribution $\rho$ with respect to $\mathcal{D}_{w,\epsilon}=\{\nu: D_f(\nu\| \rho)\le \epsilon \}$ if and only if the optimal value of the following convex optimization problem is non-negative:
\begin{equation} 
\begin{aligned}
  \max_{\lambda \ge 0, \kappa}    \kappa  - \lambda  \epsilon  
  - p_A f_\lambda^*(\kappa - 1) 
  - p_B f_\lambda^*(\kappa + 1) 
  - (1- p_A - p_B) f_\lambda^*(\kappa)  \ge 0
\end{aligned}
\end{equation}
\end{theorem}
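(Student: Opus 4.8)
The plan is to prove Theorem~\ref{theorem:nonnegative_opt} by Lagrangian duality applied to the certification problem identified in Proposition~\ref{proposition_S}, i.e.\ to show that $\inf_{\phi\in S,\ \nu:\,D_f(\nu\|\rho)\le\epsilon}\E_{W'\sim\nu}[\phi(W')]\ge 0$ is equivalent to non-negativity of the displayed concave program. This adapts the argument of \cite{dvijotham2020framework} from input smoothing to parameter smoothing; nothing in the derivation uses what is being perturbed, only the abstract triple $(\rho,\mathcal{D}_{w,\epsilon},S)$, so the transfer is immediate.

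For the ``if'' (soundness) direction I would fix $\phi\in S$, write $\nu=r\rho$ with $r=d\nu/d\rho\ge 0$ so that $D_f(\nu\|\rho)=\E_\rho[f(r)]$ and $\E_\nu[\phi]=\E_\rho[r\phi]$, and dualize the constraints $\E_\rho[f(r)]\le\epsilon$ (multiplier $\lambda\ge 0$) and $\E_\rho[r]=1$ (multiplier $\kappa$). Weak duality with pointwise minimization over $r(w)\ge 0$ gives, for every $\lambda\ge 0,\kappa$,
\[
\E_\nu[\phi]\ \ge\ \kappa-\lambda\epsilon+\E_\rho\big[\inf_{r\ge 0}\{(\phi-\kappa)r+\lambda f(r)\}\big]\ =\ \kappa-\lambda\epsilon-\E_\rho[(\lambda f)^*(\kappa-\phi)],
\]
where $(\lambda f)^*(y)=\sup_{r\ge 0}\{ry-\lambda f(r)\}=f_\lambda^*(y)$. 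Since $\phi\in\{+1,-1,0\}$ on the regions $A=\{\phi=1\}$, $B=\{\phi=-1\}$, $C=\{\phi=0\}$, the bound reads $\E_\nu[\phi]\ge \kappa-\lambda\epsilon-\rho(A)f_\lambda^*(\kappa-1)-\rho(B)f_\lambda^*(\kappa+1)-\rho(C)f_\lambda^*(\kappa)$. The bracketed mass vector $(\rho(A),\rho(B),\rho(C))$ ranges over the polytope $\{a\ge p_A,\ 0\le b\le p_B,\ a+b+c=1,\ a,b,c\ge 0\}$ (using $p_A+p_B\le 1$). The one elementary observation needed is that $f_\lambda^*$ is \emph{nondecreasing} (its maximizer lies in $[0,\infty)$), so $f_\lambda^*(\kappa-1)\le f_\lambda^*(\kappa)\le f_\lambda^*(\kappa+1)$; hence $a f_\lambda^*(\kappa-1)+b f_\lambda^*(\kappa+1)+c f_\lambda^*(\kappa)\le p_A f_\lambda^*(\kappa-1)+p_B f_\lambda^*(\kappa+1)+(1-p_A-p_B)f_\lambda^*(\kappa)$ pointwise. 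Substituting this upper bound yields exactly the displayed objective as a lower bound on $\E_\nu[\phi]$, uniformly over $\phi\in S$ and feasible $\nu$; so if that objective is $\ge 0$ for some $\lambda\ge 0,\kappa$, then $\E_\nu[\phi]\ge 0$ always and Proposition~\ref{proposition_S} gives the certificate.

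For the ``only if'' (completeness) direction I would invoke strong duality. When $\epsilon>0$, $\nu=\rho$ is strictly feasible for the inner minimization $\mathrm{OPT}(\phi,\rho,\mathcal{D}_{w,\epsilon})$, so Slater's condition holds and the weak-duality inequality above is tight: $\mathrm{OPT}(\phi,\rho,\mathcal{D}_{w,\epsilon})=\max_{\lambda\ge 0,\kappa}\{\kappa-\lambda\epsilon-\rho(A)f_\lambda^*(\kappa-1)-\rho(B)f_\lambda^*(\kappa+1)-\rho(C)f_\lambda^*(\kappa)\}$. The map $(\lambda,\kappa)\mapsto f_\lambda^*(\kappa+c)$ is jointly convex (perspective of the convex conjugate $f^*$ composed with an affine map), so this inner objective $G(\phi,\lambda,\kappa)$ is concave and upper semicontinuous in $(\lambda,\kappa)$ and affine in $\phi$ through $(\rho(A),\rho(B))$, which ranges over the compact convex polytope above. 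Sion's minimax theorem then exchanges $\min_\phi\max_{\lambda\ge 0,\kappa}G=\max_{\lambda\ge 0,\kappa}\min_\phi G$, and the inner $\min_\phi$ is the linear program of the previous paragraph (now minimized, so its coefficient vector is maximized over the polytope), again producing the displayed objective. Therefore $\inf_{\phi\in S}\mathrm{OPT}(\phi,\rho,\mathcal{D}_{w,\epsilon})$ equals the optimal value of the concave program, and the stated equivalence follows. The degenerate case $\epsilon=0$ (i.e.\ $p_A=p_B$) is handled directly: then $\mathcal{D}_{w,\epsilon}=\{\rho\}$ and both sides reduce to the statement $p_A-p_B\ge 0$.

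I expect the completeness direction to be the main obstacle: one must make the strong-duality and minimax exchange rigorous, which requires checking the constraint qualification (supplied by $\nu=\rho$ when $\epsilon>0$), verifying joint convexity of the perspective conjugate $f_\lambda^*$ so Sion's theorem applies, handling the possibility that $f_\lambda^*$ takes the value $+\infty$ (harmless --- such $(\lambda,\kappa)$ simply cannot be maximizers), and noting that every mass vector in the polytope is realizable by some $\phi\in S$ (true whenever $\rho$ is non-atomic, as it is for the Gaussian smoothing measure used here). By contrast, the soundness direction and the reduction of the worst-case $\phi$ to the vertex $(p_A,p_B,1-p_A-p_B)$ are routine once the monotonicity of $f_\lambda^*$ is recorded.
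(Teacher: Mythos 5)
Your proposal is correct and follows essentially the same route as the paper: write $\nu = r\rho$, dualize the constraints $\E_{\rho}[f(r)]\le\epsilon$ and $\E_{\rho}[r]=1$, minimize pointwise over $r\ge 0$ to obtain the objective $\kappa-\lambda\epsilon-\E_{\rho}[f_\lambda^*(\kappa-\phi(W))]$, and then substitute the class probabilities. The extra steps you supply — the monotonicity of $f_\lambda^*$ to reduce the worst case over the inequality-constrained class $S$ of Definition~\ref{def:the class of specifications} to the vertex $(p_A,p_B,1-p_A-p_B)$, and Slater plus a minimax exchange (or, more simply, realizability of a tight $\phi$ under the non-atomic Gaussian $\rho$) for the ``only if'' direction — are details the paper glosses over by directly plugging in $p_A,p_B$ and asserting strong duality, so they strengthen rather than depart from its argument.
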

\begin{proof}[Proof] 
We prove the theorem according to Proposition~\ref{proposition_S}.
Let $\rho(W)$ be the clean model parameters distribution, $\nu(W)$ be the perturbed model parameters distribution, $r(W)=\frac{\nu(W)}{\rho(W)}$ be likelihood ratio. 
We have
\begin{equation}
\begin{aligned}
    \E_{W\sim \nu} [\phi(W)] &= \E_{W\sim \rho}[r(W)\phi(W)],\\ 
    D_f(\nu \|\rho) &= \E_{W\sim \rho}[f(r(W))], \\
    \E_{W \sim \rho}[r(W)]&=1.
\end{aligned}
\end{equation}
The third condition is obtained using the fact that $\nu$ is a probability measure. The optimization over $\nu$, which is equivalent to optimizing over $r$, can be written as 
\begin{equation} \label{eq: opt-problem}
\begin{aligned}
    \min_{r\ge 0} &~ \E_{W\sim \rho}[r(W)\phi(W)] \\
    s.t. &~\E_{W\sim \rho}[f(r(W))]\le \epsilon, \E_{W \sim \rho}[r(W)]=1
\end{aligned}
\end{equation}
We solve the optimization using Lagrangian duality as follows. We first dualize the constraints on $r$ \cite{dvijotham2020framework} to obtain
\begin{equation} \label{eq:middle-dualize-problem} 
\begin{aligned}
    & \min_{r\ge 0} \E_{W\sim \rho}[r(W)\phi(W)] + \lambda (\E_{W\sim \rho}[f(r(W))] -  \epsilon) + \kappa ( 1- \E_{W \sim \rho}[r(W)])\\
    &= \min_{r\ge 0} \E_{W\sim \rho}[r(W)\phi(W)+ \lambda f(r(W)) -  \kappa  r(W) ]  + \kappa - \lambda  \epsilon \\
    &=   \kappa  - \lambda  \epsilon  -   \E_{W\sim \rho} [\max_{r\ge 0}  \kappa  r(W)  - r(W)\phi(W)-\lambda f(r(W))  ]\\
    &=   \kappa  - \lambda  \epsilon  -   \E_{W\sim \rho} [\max_{r\ge 0}   r(W) (\kappa - \phi(W))-\lambda f(r(W))  ]\\
    &=   \kappa  - \lambda  \epsilon  -   \E_{W\sim \rho} [\max_{r\ge 0}   r(W) (\kappa - \phi(W))- f_\lambda(r(W))  ]\\
    &\le   \kappa  - \lambda  \epsilon  -   \E_{W\sim \rho} [ f_\lambda^*(\kappa - \phi(W))  ]
\end{aligned}
\end{equation}
where $f_\lambda^*(u)=  \max_{v\ge 0} (uv-f_\lambda(v)), f_\lambda(v) = \lambda f(v) $.
By strong duality, maximizing the final expression in Eq.~\ref{eq:middle-dualize-problem} with respect to $\lambda \ge 0, \kappa$ achieves the optimal value in Eq.~\ref{eq: opt-problem}. If the optimal value is non-negative, the specification $S$ is robustly certified.

\begin{equation} 
\begin{aligned}
  \max_{\lambda \ge 0, \kappa}    \kappa  - \lambda  \epsilon  -   \E_{W\sim \rho} [ f_\lambda^*(\kappa - \phi(W))  ]
\end{aligned}
\end{equation}
We can plug in $p_A,p_B$ defined in Definition~\ref{def:the class of specifications}:
\begin{equation} 
\begin{aligned}
  \max_{\lambda \ge 0, \kappa}    \kappa  - \lambda  \epsilon  
  - p_A f_\lambda^*(\kappa - 1) 
  - p_B f_\lambda^*(\kappa + 1) 
  - ( 1- p_A - p_B) f_\lambda^*(\kappa) 
\end{aligned}
\end{equation}
where $p_A = \mathbb{P}_{W\sim \rho}[\phi(W) = +1]$,
$p_B = \mathbb{P}_{W\sim \rho}[\phi(W) = -1]$,
$ 1- p_A - p_B = \mathbb{P}_{W\sim \rho}[\phi(W) =0]  $,
\end{proof}
\begin{remark}
    Note that our differences from \cite{dvijotham2020framework} are in two aspects: (1)~Our certification is with respect to the smoothing scheme on model parameters $W$; (2)~We concretize the corresponding Theorem 2 in \cite{dvijotham2020framework} by the explicit constraints on $p_A$, $p_B$.
\end{remark}

\subsection{Closed-form Certificate for KL Divergence}
We instantize Theorem~\ref{theorem:nonnegative_opt} with \kldivergence{}.
% We proof the theorem according to Proposition~\ref{proposition_S}. According to Theorem \ref{theorem:nonnegative_opt}, 
\begin{lemma}\label{lemma:KL closed-form certificate}
Let $D_{KL}$ be the \kldivergence{}, $\epsilon$ be the divergence constraint, $S$, $p_A , p_B$ be as in Definition~\ref{def:the class of specifications}. 
% The class of specifications $S$ is robustly certified 
The smoothed classifier $\smoothclsfier$ is robustly certified at reference distribution $\rho$ with respect to $\mathcal{D}_{w,\epsilon}=\{\nu: D_{KL}(\nu\| \rho)\le \epsilon \}$ if and only if:
\begin{equation} 
\begin{aligned}
  \epsilon \le - \log \Big(1-(\sqrt{p_A} - \sqrt{p_B})^2\Big) 
\end{aligned}
\end{equation}
\end{lemma}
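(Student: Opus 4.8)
The plan is to evaluate the dual convex program supplied by Theorem~\ref{theorem:nonnegative_opt} for the specific choice $f(x) = x\log x$ and simplify it to the stated inequality. The first step is to compute the scaled convex conjugate $f_\lambda^*(u) = \sup_{v \ge 0}\big(uv - \lambda v\log v\big)$ for $\lambda > 0$: setting the derivative in $v$ to zero gives the maximizer $v = e^{u/\lambda - 1}$, and substituting back yields the clean closed form $f_\lambda^*(u) = \lambda\, e^{u/\lambda - 1}$.

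Next I would substitute this conjugate into the objective $\kappa - \lambda\epsilon - p_A f_\lambda^*(\kappa-1) - p_B f_\lambda^*(\kappa+1) - (1-p_A-p_B) f_\lambda^*(\kappa)$. Pulling out the common factor $e^{\kappa/\lambda - 1}$, the three conjugate terms collapse into $\lambda\, e^{\kappa/\lambda-1}\, C(\lambda)$ with $C(\lambda) := p_A e^{-1/\lambda} + p_B e^{1/\lambda} + (1-p_A-p_B) > 0$. The objective is concave in $\kappa$ (its second $\kappa$-derivative is $-\frac{1}{\lambda}e^{\kappa/\lambda-1}C(\lambda) < 0$), so the inner maximization over $\kappa$ is solved by stationarity, $e^{\kappa/\lambda-1} = 1/C(\lambda)$, i.e. $\kappa = \lambda\big(1 - \log C(\lambda)\big)$; plugging back in, the optimal value of the whole program reduces to the one-dimensional problem $\sup_{\lambda > 0}\big[-\lambda\big(\epsilon + \log C(\lambda)\big)\big]$ (the boundary case $\lambda = 0$ is checked separately and is never binding, since it forces a value $\le -1$). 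Because $\lambda > 0$, this supremum is nonnegative if and only if $\epsilon + \log C(\lambda) \le 0$ for some $\lambda$, equivalently $\epsilon \le \sup_{\lambda>0}\big(-\log C(\lambda)\big) = -\log\big(\inf_{\lambda > 0} C(\lambda)\big)$.

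It then remains to minimize $C(\lambda)$. The substitution $s := e^{1/\lambda}$, which ranges over $(1,\infty)$ as $\lambda$ ranges over $(0,\infty)$, converts this to minimizing $\varphi(s) = p_A/s + p_B s + (1-p_A-p_B)$ over $s > 1$. This is strictly convex with unconstrained minimizer $s^\star = \sqrt{p_A/p_B}$; in the meaningful regime $p_A > p_B$ one has $s^\star \in (1,\infty)$, and $\varphi(s^\star) = 2\sqrt{p_A p_B} + 1 - p_A - p_B = 1 - (\sqrt{p_A} - \sqrt{p_B})^2$ by the algebraic identity $p_A - 2\sqrt{p_A p_B} + p_B = (\sqrt{p_A}-\sqrt{p_B})^2$. (When $p_A = p_B$ the infimum is attained in the limit $s \to 1^+$ and still equals $1 = 1 - (\sqrt{p_A}-\sqrt{p_B})^2$.) Combining with the previous paragraph, the dual optimum is nonnegative exactly when $\epsilon \le -\log\big(1 - (\sqrt{p_A} - \sqrt{p_B})^2\big)$, which by Theorem~\ref{theorem:nonnegative_opt} is precisely the certification condition asserted in Lemma~\ref{lemma:KL closed-form certificate}.

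The computation is largely routine once the conjugate is in hand, so I expect no deep obstacle; the two points I would write out with the most care are (i) justifying the order of optimization over $(\lambda,\kappa)$ — eliminating $\kappa$ by concavity, reducing to the scalar problem in $\lambda$, and disposing of the $\lambda \to 0$ boundary behaviour — and (ii) verifying that the unconstrained minimizer $s^\star$ of $\varphi$ satisfies the constraint $s > 1$ in the case $p_A > p_B$, together with the limiting argument when $p_A = p_B$.
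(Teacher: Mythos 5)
Your proposal is correct and takes essentially the same route as the paper's proof: both evaluate the dual program of Theorem~\ref{theorem:nonnegative_opt} with the KL conjugate $f_\lambda^*(u)=\lambda e^{u/\lambda-1}$ and solve the resulting two-variable maximization in closed form, you by eliminating $\kappa$ first and then minimizing $C(\lambda)$ via $s=e^{1/\lambda}$, the paper by substituting $y=\kappa/\lambda$, $z=1/\lambda$ and optimizing over $z$ then $y$ — the stationary points coincide ($s^\star=e^{z^\star}=\sqrt{p_A/p_B}$, and your optimal $\kappa$ corresponds to the paper's optimal $y$). Your explicit handling of the constraint $s>1$ (equivalently $z>0$, i.e.\ the regime $p_A\ge p_B$), the $p_A=p_B$ limit, and the $\lambda\to0$ boundary is a minor refinement of details the paper leaves implicit.
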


\begin{proof}[Proof for Lemma~\ref{lemma:KL closed-form certificate}]
% Let $p_A= \underline{p_A}$, $p_B=\overline{p_B}$ for simplicity. 
The function $f(u) = ulog(u)$ for \kldivergence{} is a convex function with $f(1)=0$ , then we have
$$
    f_\lambda^*(u)=  \max_{v\ge 0} (uv- \lambda f(v)) = \max_{v\ge 0}(uv- \lambda v \log(v)).
$$
Setting the derivative with respect to $v$ to 0 and solving for $v$, we obtain
$v = \exp \left(\frac{u-\lambda}{\lambda} \right), \lambda \textgreater 0 $.
So we have
\begin{equation}
    f_\lambda^*(u) = \lambda \exp \left(\frac{u}{\lambda}-1 \right).
\end{equation}

Suppose we have a bound on the KL divergence $D_f(\nu \|\rho)\le \epsilon$, then we want that the optimal certificate is non-negative:
\begin{equation} \label{eq:opt_certifate_kl}
\begin{aligned}
  \max_{\lambda \textgreater 0, \kappa}   \Bigg( \kappa  - \lambda  \epsilon - p_A \lambda \exp \left(\frac{\kappa - 1}{\lambda}-1 \right)  - p_B \lambda \exp \left(\frac{\kappa + 1}{\lambda}-1 \right) - (1-p_A-p_B) \lambda \exp \left(\frac{\kappa}{\lambda}-1 \right) \Bigg)\ge 0.
\end{aligned}
\end{equation}
Setting $y=\kappa/\lambda, z= \frac{1}{\lambda}(z\textgreater0)$,
we can rewrite Eq.~\ref{eq:opt_certifate_kl} as:
\begin{equation} \label{eq:opt_certifate_kl_substitute}
\begin{aligned}
  \max_{z \textgreater 0, y } \Bigg( \frac{1}{z} \Big( y  - \epsilon - p_A  \exp( y - z -1)  - p_B  \exp(y + z -1) - (1-p_A-p_B)  \exp( y -1) \Big) \Bigg)\ge 0.
\end{aligned} 
\end{equation}
Because $\frac{1}{z}$ is positive, we divide both the LHS and RHS by $\frac{1}{z}$ and our goal can be rewritten as:
\begin{equation} \label{eq:opt_certifate_kl_substitute_simple}
\begin{aligned}
  \max_{z \textgreater 0, y } \Bigg(  y  - \epsilon - p_A  \exp( y - z -1)  - p_B  \exp(y + z -1) - (1-p_A-p_B)  \exp( y -1)  \Bigg)\ge 0.
\end{aligned} 
\end{equation}
Setting the derivative of the LHS with respect to $z$ to 0 and solving for $z$, we obtain 
\begin{equation}
\begin{aligned}
    p_A  \exp( y - z -1) - p_B  \exp(y + z -1) &=0\\
    z &= \log(\sqrt{\frac{p_A}{p_B}}).
\end{aligned} 
\end{equation}
Thus the LHS of Eq.~\ref{eq:opt_certifate_kl_substitute_simple}  reduces to 
\begin{equation} 
\begin{aligned}
  \max_{y } \Bigg( y - \epsilon- \Big(1-(\sqrt{p_A} - \sqrt{p_B})^2\Big) \exp( y -1)  \Bigg).
\end{aligned} 
\end{equation}
Setting the derivative with respect to $y$ to 0 and solving for $y$, we obtain

\begin{equation} 
\begin{aligned}
  1 - \Big(1-(\sqrt{p_A} - \sqrt{p_B})^2\Big) \exp( y -1) &= 0\\
  y&= 1- \log \Big(1-(\sqrt{p_A} - \sqrt{p_B})^2\Big).
\end{aligned} 
\end{equation}
Now the LHS of Eq.~\ref{eq:opt_certifate_kl_substitute_simple}  reduces to 
\begin{equation} 
\begin{aligned}
    - \log \Big(1-(\sqrt{p_A} - \sqrt{p_B})^2\Big) - \epsilon.
\end{aligned} 
\end{equation}
For this number to be positive, we need 
\begin{equation} 
\begin{aligned}
     \epsilon \le - \log \Big(1-(\sqrt{p_A} - \sqrt{p_B})^2\Big).
\end{aligned} 
\end{equation}
Hence, proved.
\end{proof}

\begin{remark}
The challenges are: 1) we divide both the LHS and RHS of Eq.~\ref{eq:opt_certifate_kl_substitute} by $\frac{1}{z}$ to obtain Eq.~\ref{eq:opt_certifate_kl_substitute_simple}, otherwise the derivative of the LHS of Eq.~\ref{eq:opt_certifate_kl_substitute} cannot be calculated directly. Moreover, setting $y=\kappa/\lambda, z= \frac{1}{\lambda}$ makes it much easier to solve the optimization problem.
2) \cite{dvijotham2020framework} does not directly provide proof for KL Divergence. They prove the certification for Renyi Divergence and then regard KL as a special case of Renyi Divergence.
\end{remark}

Finally, we restate our Theorem ~\ref{theorem:param_smoothing} here.
\begingroup
\def\thetheorem{\ref{theorem:param_smoothing}}
\begin{theorem}
Let $\smoothclsfier$ be defined as in Eq.~\ref{eq:define_smoothed_cls}. Suppose $c_A \in \mathcal{Y} $ and $\underline{p_A}, \overline{p_B} \in [0,1]$ satisfy
\begin{equation} 
 {H_s^{c_A}}(w';x_{test}) \ge \underline{p_A} \ge \overline{p_B} \ge \max_{c\ne c_A} {H_s^{c}}(w';x_{test}), \nonumber
\end{equation}
then ${\smoothclsfier}(w';x_{test}) = {\smoothclsfier}(w;x_{test}) = c_A$ for all ${w}$  such that $D_{KL}(\mu(w),\mu(w'))\le \epsilon$, where 
\begin{equation} 
\begin{aligned}
   \epsilon = - \log \Big(1-(\sqrt{\underline{p_A}} - \sqrt{\overline{p_B}})^2\Big) \nonumber
\end{aligned}
\end{equation}
\end{theorem}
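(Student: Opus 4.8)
This statement is a specialization of the general certification machinery developed above, so the plan is essentially bookkeeping: identify the correct reference distribution and the correct family of specifications, then invoke Lemma~\ref{lemma:KL closed-form certificate}. Concretely, I would take the reference distribution to be $\rho := \mu(w')$ — the smoothing measure around the (possibly backdoored) model $w'$, which is exactly the model whose vote probabilities $H_s^c(w';x_{test})$ are assumed available — and the perturbation set to be $\mathcal{D}_{w',\epsilon} := \{\nu : D_{KL}(\nu \| \rho) \le \epsilon\}$, which contains $\mu(w)$ for every $w$ with $D_{KL}(\mu(w) \| \mu(w')) \le \epsilon$. Because the Gaussian smoothing measures all share the common covariance $\sigma_T^2 I$, Lemma~\ref{lemma:divergence_guassian} makes $D_{KL}(\mu(w),\mu(w'))$ symmetric, so the orientation of the divergence is immaterial here; this is also the orientation in which Theorem~\ref{therom:divergence_round_T_main} bounds model closeness, which is what ultimately lets the two results be chained into Theorem~\ref{theorem_robustness_3_level}.

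Next I would fix $c_A$ as in the hypothesis and, for each competing class $c' \in \mathcal{Y}\setminus\{c_A\}$, take $\phi_{c_A,c'}$ to be the $\{-1,0,+1\}$-valued specification of Eq.~\ref{eq:specification_c_c'} at the fixed point $x_{test}$. By the definition of $\phi_{c_A,c'}$ and of $H_s^c$, when $W \sim \rho = \mu(w')$ we have $\mathbb{P}_{W\sim\rho}[\phi_{c_A,c'}(W)=+1] = H_s^{c_A}(w';x_{test}) \ge \underline{p_A}$ and $\mathbb{P}_{W\sim\rho}[\phi_{c_A,c'}(W)=-1] = H_s^{c'}(w';x_{test}) \le \max_{c\ne c_A}H_s^{c}(w';x_{test}) \le \overline{p_B}$, so $\phi_{c_A,c'}$ lies in the class $S$ of Definition~\ref{def:the class of specifications} with $(p_A,p_B) = (\underline{p_A},\overline{p_B})$. (If $\underline{p_A}+\overline{p_B} > 1$, I would first shrink $\overline{p_B}$ to $1-\underline{p_A}$, which is still an upper bound on $\max_{c\ne c_A}H_s^c(w';x_{test})$ since two distinct class probabilities sum to at most $1$, and which only enlarges the radius computed below; so $(\underline{p_A},\overline{p_B})$ may be taken feasible for Definition~\ref{def:the class of specifications}.)

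Then I would apply Lemma~\ref{lemma:KL closed-form certificate} with this $\rho$ and $(p_A,p_B)=(\underline{p_A},\overline{p_B})$: since the $\epsilon$ in the statement equals $-\log\!\bigl(1-(\sqrt{\underline{p_A}}-\sqrt{\overline{p_B}})^2\bigr)$, the condition $\epsilon \le -\log\!\bigl(1-(\sqrt{p_A}-\sqrt{p_B})^2\bigr)$ holds with equality, so the class $S$ — hence each $\phi_{c_A,c'}$ — is robustly certified at $\rho$ with respect to $\mathcal{D}_{w',\epsilon}$. Proposition~\ref{eq:goal_specification} (equivalently Proposition~\ref{proposition_S}) then upgrades this to $h_s(w;x_{test}) = h_s(w';x_{test})$ for every $w$ with $D_{KL}(\mu(w)\|\mu(w')) \le \epsilon$. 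Finally, on the non-degenerate branch $\underline{p_A} > \overline{p_B}$ — which is precisely the branch where Algorithm~\ref{alg:certify_parameters_perturbation} returns a positive radius — the hypothesis chain forces $H_s^{c_A}(w';x_{test}) > \max_{c\ne c_A}H_s^{c}(w';x_{test})$, so the $\arg\max$ defining $h_s(w';x_{test})$ is uniquely $c_A$; combining, $h_s(w';x_{test}) = h_s(w;x_{test}) = c_A$.

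As for where the difficulty lies: essentially all of it is upstream of this statement. The genuinely hard steps are Lemma~\ref{lemma:KL closed-form certificate} itself — dualizing the likelihood-ratio program, the substitution $y=\kappa/\lambda,\ z=1/\lambda$, and carrying out the two nested one-dimensional maximizations to collapse the certificate to $-\log(1-(\sqrt{p_A}-\sqrt{p_B})^2)$ — together with the reduction in Proposition~\ref{eq:goal_specification} and Theorem~\ref{theorem:nonnegative_opt} from a multi-class smoothed classifier to a family of $\{-1,0,+1\}$ specifications. Given those, Theorem~\ref{theorem:param_smoothing} is a translation step, and the only subtleties are the two already flagged: choosing the backdoored model $w'$ (not the clean $w$) as the reference, since its votes are what can actually be observed and certified, and the minor feasibility / $\arg\max$ tie-breaking bookkeeping.
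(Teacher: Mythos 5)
Your proposal is correct and follows essentially the same route as the paper's own proof: take the reference distribution $\rho=\mu(w')$ around the possibly backdoored model, identify the hypothesis bounds $\underline{p_A},\overline{p_B}$ with the constraints defining the specification class $S$ of Definition~\ref{def:the class of specifications}, and invoke Lemma~\ref{lemma:KL closed-form certificate} together with Proposition~\ref{proposition_S} to conclude prediction consistency. Your additional remarks on feasibility ($\underline{p_A}+\overline{p_B}\le 1$), the symmetry of the Gaussian KL divergence, and the $\arg\max$ tie-breaking are minor bookkeeping points the paper leaves implicit, and they do not change the argument.
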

\addtocounter{theorem}{-1}
\endgroup
\begin{proof}
We use Lemma~\ref{lemma:KL closed-form certificate} to prove Theorem ~\ref{theorem:param_smoothing}. 
In practice, since the server does not know the global model in the current FL system is poisoned or not, we assume the model is already backdoored and  derive the condition when its prediction will be certifiably consistent with the prediction of the clean model. Therefore, the reference distribution $\rho = \mu(w')$ and  $\nu = \mu(w)$. Moreover, ${H_s^{c_A}}(w';x_{test}) \ge \underline{p_A}$ is equivalent to $ \mathbb{P}_{W\sim \rho}[\phi(W) = +1] \ge p_A$, and $  \max_{c\ne c_A} {H_s^{c}}(w';x_{test}) \le \overline{p_B} $ is equivalent to $\mathbb{P}_{W\sim \rho}[\phi(W) = -1] \le p_B$. Rewriting  Lemma~\ref{lemma:KL closed-form certificate} leads to Theorem ~\ref{theorem:param_smoothing}.
\end{proof}

\end{document}